\appto\appendix{\addtocontents{toc}{\protect\setcounter{tocdepth}{1}}}
\appto\listoffigures{\addtocontents{lof}{\protect\setcounter{tocdepth}{1}}}
\appto\listoftables{\addtocontents{lot}{\protect\setcounter{tocdepth}{1}}}
\newcommand{\N}{\mathcal{N}}
\newcommand{\G}{\mathcal{G}}
\newcommand{\E}{\mathbb{E}}
\newcommand{\V}{\Phi_R}
\newcommand{\PP}{\mathcal{P}}
\newcommand{\PPG}{\mathcal{P}^G}
\newcommand{\EE}{\mathcal{E}}
\newcommand{\Cov}{\mathrm{Cov}}
\newcommand{\M}{M}
\newcommand{\I}{I}
\newcommand{\dd}{\mathrm{d} }
\newcommand{\R}{\mathbb{R}}
\newcommand{\T}{\mathcal{T}}
\newcommand{\rhoa}{a}
\newcommand{\fM}{\mathfrak{M}}
\newcommand{\la}{\lambda_{\star,\max}}
\newcommand{\Prec}{P}
\newcommand{\bigO}{\mathcal{O}}
\newcommand{\Hess}{\nabla_{\theta}\nabla_{\theta}}
\newcommand{\ranglen}{\rangle_{\R^{N_a}}}
\DeclareMathOperator*{\argmin}{arg\,min}
\newcommand{\bigTheta}{\Theta}
\newenvironment{newremark}[1]{%
    \begin{remark}#1}{%
    \Endofdef\end{remark}%
}
\newcommand{\xqed}[1]{%
    \leavevmode\unskip\penalty9999 \hbox{}\nobreak\hfill
    \quad\hbox{\ensuremath{#1}}}
\newcommand{\Endofdef}{\xqed{\lozenge}}
\newtheorem{theorem}{Theorem}[section]
\newtheorem{lemma}[theorem]{Lemma}
\newtheorem{proposition}[theorem]{Proposition}
\newtheorem{definition}[theorem]{Definition}
\newtheorem{example}[theorem]{Example}
\newtheorem{condition}[theorem]{Condition}
\theoremstyle{remark}
\newtheorem{remark}[theorem]{Remark}
\numberwithin{equation}{section}
\definecolor{darkred}{rgb}{.6,0,0}
\definecolor{darkblue}{rgb}{0,0,.7}
\definecolor{darkgreen}{rgb}{0,.7,0}
\definecolor{darkbrown}{rgb}{0.8,0.4,0.4}
\begin{document}
\title[Sampling via Gradient Flows]{Sampling via Gradient Flows in the\\ Space of Probability Measures}

\author{Yifan~Chen\textsuperscript{2,1}}
\address{\textsuperscript{1}California Institute of Technology, Pasadena, CA}
\address{\textsuperscript{2}Courant Institute, New York University, NY}
\email{yifan.chen@nyu.edu}

\author{Daniel~Zhengyu~Huang\textsuperscript{3,1}}
\email{huangdz@bicmr.pku.edu.cn}
\address{\textsuperscript{3}Beijing International Center for Mathematical Research, Peking University, Beijing, China}
\author{Jiaoyang Huang\textsuperscript{4}}
\address{\textsuperscript{4}University of Pennsylvania, Philadelphia, PA}
\email{huangjy@wharton.upenn.edu}
\vspace{0.1in}
\author{Sebastian Reich\textsuperscript{5}}
\address{\textsuperscript{5}Universit\"{a}t Potsdam, Potsdam, Germany}
\email{sebastian.reich@uni-potsdam.de}
\author{Andrew M. Stuart\textsuperscript{1}}
\email{astuart@caltech.edu}
    \keywords{Bayesian inference, sampling, gradient flow,  mean-field dynamics, Gaussian approximation, variational inference, affine invariance.}    
    \subjclass[2010]{68Q25, 65D18, 65D15}
\maketitle

\vspace{-2.5em}
\begin{abstract}
Sampling a target probability distribution with an unknown normalization constant is a fundamental challenge in computational science and engineering. Recent work shows that algorithms derived
by considering gradient flows in the space of probability measures open up new avenues for
algorithm developments. This paper makes three contributions to this approach to sampling,
by scrutinizing the design components of such gradient flows.
Any instantiation of a gradient flow for sampling needs an energy functional and a metric to determine the flow, as well as numerical approximations of the flow to derive algorithms. Our
first contribution is to show that the Kullback-Leibler (KL) divergence, as an energy functional, has the \textit{unique} property (among all $f$-divergences) that gradient flows resulting from it do not depend on the normalization constant of the target distribution; {this justifies the widespread use of the KL divergence in sampling}. Our second contribution is to study the choice of metric from the perspective of invariance. The Fisher-Rao metric is known as the \textit{unique} choice (up to scaling) that is diffeomorphism invariant. 
As a computationally tractable alternative, we introduce a relaxed, affine invariance property for the metrics and gradient flows. In particular, we construct various affine invariant Wasserstein and Stein gradient flows. 
Affine invariant gradient flows are shown to behave more favorably than their non-affine-invariant counterparts
when sampling highly anisotropic distributions, in theory and by using particle methods.
Our third contribution is to study, and develop efficient algorithms based on Gaussian approximations of the gradient flows; this leads to an alternative to particle methods. 
We establish connections between various Gaussian approximate gradient flows, discuss their relation to gradient methods arising from parametric variational inference, and study their convergence properties. Our theory and numerical experiments demonstrate the strengths and potential limitations of the Gaussian approximate Fisher-Rao gradient flow, which is affine invariant, by considering a wide range of target distributions.

\end{abstract}
\tableofcontents

\section{Introduction}
\subsection{The Sampling Problem}
In this paper, we are concerned with the problem of sampling a 
probability distribution that is known up to normalization. This problem
is fundamental in many applications arising in computational science and engineering and is
widely studied in applied mathematics, machine learning 
and statistics communities. 
A particular application is Bayesian inference for large-scale inverse problems; such
problems are ubiquitous, for example in climate science~\cite{isaac2015scalable,schneider2017earth,huang2022iterated,lopez2022training}, engineering~\cite{yuen2010bayesian,cui2016dimension,cao2022bayesian}, and machine learning~\cite{rasmussen2003gaussian,murphy2012machine, chen2021solving, chen2021consistency}. These applications have fueled the need for efficient and scalable sampling algorithms.

Mathematically, the objective is to sample a target probability
distribution with density $\rho_{\rm post}(\cdot)$, for the parameter $\theta \in \R^{N_{\theta}}$, given by
\begin{align}
\label{eq:posterior}
    \rho_{\rm post}(\theta) \propto \exp(-\V(\theta)),
\end{align}
where {$\V: \R^{N_{\theta}} \to \R$} is a known function.
We use the notation $\rho_{\rm post}$ because of the potential application
to Bayesian inference, where $\V$ represents the regularized {negative
log} likelihood function; however, we do not explicitly use the Bayesian
structure in this paper, and our discussion applies to general target distributions {known only up to normalization.}

\subsection{Gradient Flow Methodology}
Numerous approaches to the sampling problem
have been proposed in the literature. Most are based on construction of a dynamical system for densities that converges to the target distribution, or its approximation, after a specified finite time or at infinite time.
The most common examples that are widely used in Bayesian inference are sequential Monte Carlo (SMC, 
specified finite time) \cite{doucet2009tutorial} and Markov chain Monte Carlo (MCMC, infinite time)
\cite{brooks2011handbook}.

Among them, the Langevin diffusion \cite{pavliotis2014stochastic} and its discretization constitute an important class of MCMC algorithms for sampling. It has been shown in the seminal work~\cite{jordan1998variational} that
 the Fokker-Planck equation describing the evolution of densities of the Langevin diffusion is the Wasserstein gradient flow of the Kullback–Leibler (KL) divergence. That is, the dynamical system of densities corresponding to the Langevin diffusion has a gradient flow structure. Such structure has also been identified in other popular sampling algorithms, for example, the Stein variational gradient descent \cite{liu2016stein}, with the Stein variational gradient flow~\cite{liu2017stein} as the continuous limit.
 
 Indeed, extending beyond the {above-mentioned} examples, gradient flows have profoundly influenced our understanding and development of sampling algorithms.
 Various gradient flows have been adopted by researchers to address the sampling problems. For instance, the Wasserstein-Fisher-Rao gradient flow was used to sample multi-modal distributions in \cite{lu2019accelerating,lu2022birth}. {In~\cite{garbuno2020interacting,garbuno2020affine}, the Kalman-Wasserstein metric 
 (introduced, but not named, in \cite{reich2015probabilistic}) was used to induce the gradient flow
 leading to advantageous algorithms for sampling anisotropic distributions.} Interpolation between the Wasserstein and Stein metrics was studied in \cite{he2022regularized}. Accelerated gradient flows in the probability space have been studied in \cite{wang2022accelerated}. A recent overview of the use
of gradient flows in optimization and sampling 
can be found in \cite{TrillosNoticeAMS}.  In addition to the continuous-time picture, the optimization perspective on gradient flows also leads to new developments in discrete-time algorithms for sampling \cite{wibisono2018sampling}.

Given the many choices of gradient flows in the literature, the aim of this paper is to study their design ingredients and identify {key properties} that make the gradient flows favorable in the sampling context. Our main focus is the continuous-time formulation of the flow.

\subsection{Design Ingredients of Gradient Flows}
Given an energy functional $\EE$ on the probability space, {a class of gradient flows} of $
\EE$ can be formally written as
\begin{equation}
\label{eq:GF}
    \frac{\partial \rho_t}{\partial t}  =-M(\rho_t)^{-1}\frac{\delta \mathcal{E}}{\delta \rho}\Bigr|_{\rho = \rho_t}. 
\end{equation}
Here, $\frac{\delta \mathcal{E}}{\delta \rho}$ represents the first variation of $\EE$, which is
then evaluated at $\rho=\rho_t$. Positive definite operator $M(\rho)$ leads to nonlinear
preconditioning by its inverse; such operators $M(\rho)$ arise naturally from a Riemannian metric defined in the probability density space.

A key property of the gradient flow is 
\begin{equation}
\label{eq:GF-E}
   \frac{\dd}{\dd t}\mathcal{E}(\rho_t)=
\Bigl\langle \frac{\delta \mathcal{E}}{\delta \mathcal{\rho}}\Bigr|_{\rho = \rho_t}, \frac{\partial \rho_t}{\partial t}\Bigr
\rangle= 
-\Bigl\langle  M(\rho_t) \frac{\partial \rho_t}{\partial t}, \frac{\partial \rho_t}{\partial t} \Bigr
\rangle \le 0 .
\end{equation}
{This demonstrates that the gradient flow~\cref{eq:GF} will keep decreasing the energy functional and
indeed can be used as the basis of proofs to establish converge to $\rho_{\rm post}$ when it is the unique stationary point of $\EE$; for more details of the above notations we refer to \Cref{sec-Gradient-Flows}.} 

{By choosing different $\EE(\cdot)$ and $M(\cdot)$, one obtains various gradient flows with varying convergence properties and levels of numerical implementation difficulties. In the sampling context, it is natural to ask which choices lead to the most favorable algorithms, for example, in terms of numerical implementation and convergence rates on specific problem classes. Such questions are the focus of this paper.}

\subsection{Our Contributions and Paper Organization}
The primary contributions of this work are as follows.
\begin{itemize}

    \item {We show in \Cref{thm: KL unique f divergence} that the KL divergence stands out as a unique energy functional $\EE(\cdot)$ among all $f$-divergences with respect to the target. Specifically, it emerges  as the sole choice (up to scaling)} that yields gradient flows independent of the normalization constant of the target distribution. Since handling the unknown normalization constant poses a significant challenge in the sampling problem, our finding establishes KL divergence as the desired energy functional for designing gradient flows, ensuring that concerns about normalization constants are eliminated during numerical implementation. {We then focus on this choice of 
    energy functional $\EE(\cdot)$ in the remainder of the paper.}
    
    \item {We highlight invariance properties stemming from the choice of metric (and hence
    {preconditioner in \eqref{eq:GF}}) 
    and elucidate implications for the convergence of the gradient flow. We prove in
    \Cref{thm:FR-convergence} that
    the Fisher-Rao metric, which is the unique (up to scaling) metric exhibiting diffeomorphism invariance,  achieves a uniform exponential rate of convergence to the target distribution, under quite general conditions. Moreover, we introduce a relaxed (weaker), affine invariance property for gradient flows. As a concrete manifestation, we construct various affine invariant Wasserstein and Stein gradient flows. These affine invariant gradient flows are more convenient to approximate numerically via particle methods than the diffeomorphism invariant Fisher-Rao gradient flow; pre-existing theory, {and numerical experiments in this paper} demonstrate that these affine invariant methods behave favorably compared to their non-affine-invariant versions when sampling highly anisotropic distributions.}
    \item {We study efficient implementable sampling algorithms
    found by restricting the gradient flows to Gaussian measures, as alternatives to particle methods. In \Cref{prop:1} we} demonstrate that, under mild assumptions, the Gaussian approximation achieved through metric based projection is equivalent to the approximation derived through moment closure; the latter is more convenient for calculations. Furthermore, we establish connections between various Gaussian approximate gradient flows, {discuss their relation to gradient methods to solve the
    Gaussian variational inference problem}, and study their convergence properties. Our theory and numerical experiments demonstrate the strengths and limitations of the Fisher-Rao gradient flow under Gaussian approximations, for Gaussian, logconcave, and general target distributions. 
\end{itemize}

{The paper is structured as follows. Section \ref{sec-Gradient-Flows} introduces gradient flows
in the space of probability densities.
In \Cref{sec:Energy}, {we discuss the selection of energy functionals,
defined over the probability density space and parameterized by the target
distribution,}
showing the primacy of the KL divergence within all $f$-divergences.
In \Cref{sec:GF and AI}, we discuss the choice of metrics in the density space and examine the induced gradient flow, with a particular focus on invariance properties such as diffeomorphism invariance and affine invariance.
In \Cref{sec:GGF}, we study Gaussian approximations to efficiently approximate the gradient flows and showcase the advantage and limitation of affine invariance in this context. 
In \Cref{sec:darcy}, we demonstrate the effectiveness of the aforementioned approaches on a PDE-constrained Bayesian inverse problem, complementing the simple illustrative numerical examples presented in \Cref{sec:GF and AI} and \Cref{sec:GGF}.
Concluding remarks are presented in \Cref{sec:conclusion}. The appendix contains details of all the proofs.}

\subsection{Literature Survey} We survey the relevant literature in gradient flows, affine invariance, and Gaussian approximations below.
\subsubsection{Gradient Flows}
There {is a vast literature concerning 
the use of gradient flows in probability density space,}
employing a variety of different metric tensors, to minimize an energy defined as the Kullback–Leibler (KL) divergence between the current density and the target distribution. 
The most relevant to this paper are the Wasserstein, Fisher-Rao, and Stein gradient flows.

The Wasserstein gradient flow was identified in the seminal work~\cite{jordan1998variational}. The
authors showed that the Fokker-Planck equation is the Wasserstein gradient flow of the KL divergence
of the current density estimate from the target. Since then, Wasserstein gradient flow has played a significant role in optimal transport~\cite{santambrogio2017euclidean}, sampling~\cite{chen2018natural,lambert2022variational}, machine learning~\cite{chizat2018global,salimans2018improving}, partial differential equations~\cite{otto2001geometry,carrillo2018analytical} and many other areas. 
The Fisher-Rao metric was introduced by {C.R. Rao~\cite{rao1945information}} via the Fisher information matrix. The original definition is in parametric density spaces, and the corresponding Fisher-Rao gradient flow in the parameter space leads to natural gradient descent~\cite{amari1998natural}. 
The Fisher-Rao metric in infinite dimensional probability spaces was discussed in~\cite{FRmetricInfDim1991, srivastava2007riemannian}. The concept underpins information geometry \cite{amari2016information,ay2017information}. The gradient flow of the KL divergence under the Fisher-Rao metric is induced by a mean-field model of birth-death type. The birth-death process has been used in sequential Monte Carlo samplers to reduce the variance of particle weights~\cite{del2006sequential} and to accelerate Langevin sampling~\cite{lu2019accelerating,lu2022birth}. 
The discovery of the Stein metric~\cite{liu2017stein} follows the introduction of the Stein variational gradient descent algorithm~\cite{liu2016stein}. The study of the Stein gradient flow~\cite{liu2017stein,lu2019scaling,duncan2019geometry} sheds light on the analysis and improvements of the algorithm~\cite{detommaso2018stein, wang2020information, wang2022accelerated, liu2023towards}.
\subsubsection{Affine Invariance}
\label{sec: survey affine invariance}
The idea of affine invariance was introduced
for MCMC methods in \cite{goodman2010ensemble,foreman2013emcee},  motivated by the empirical success
of the Nelder-Mead simplex algorithm \cite{nelder1965simplex} in optimization. 
Sampling methods with the affine invariance property can be effective for highly anisotropic distributions; this is because they behave identically in all coordinate systems related through an affine transformation; 
in particular, they can be understood by studying the best possible coordinate system, which
reduces anisotropy to the maximum extent possible within the class of affine transformations. The numerical studies presented in~\cite{goodman2010ensemble} demonstrate that affine-invariant MCMC methods offer significant performance improvements over standard MCMC methods. 
This idea has been further developed to enhance sampling algorithms in more general contexts. Preconditioning strategies for the Langevin dynamics to achieve affine-invariance were discussed in \cite{leimkuhler2018ensemble}. 
{In~\cite{reich2015probabilistic}, the Kalman-Wasserstein 
metric was introduced, and gradient flows with respect to this metric were advocated and shown to achieve affine invariance ~\cite{garbuno2020interacting, garbuno2020affine}. 
Moreover, the authors in \cite{garbuno2020interacting,garbuno2020affine,pidstrigach2021affine} used the empirical covariance as preconditioners in ensemble Kalman based particle approximations of the Kalman-Wasserstein
gradient flow,} leading to a family of derivative-free affine invariant sampling approaches.
Similarly, the work \cite{liu2022second} employed the empirical covariance  to precondition second order Langevin dynamics.  
Affine invariant samplers can also be combined with the pCN (preconditioned Crank–Nicolson) MCMC method~\cite{cotter2013mcmc}, to boost the performance of MCMC in function space~\cite{coullon2021ensemble,dunlop2022gradient}.
Another family of affine-invariant sampling algorithms is based on Newton or Gauss-Newton methods, since 
the use of the Hessian matrix as the preconditioner naturally induces the affine invariance property. 
Such methods include the stochastic Newton MCMC~\cite{martin2012stochastic}, the Newton flow with different metrics~\cite{detommaso2018stein, wang2020information}, and mirror Langevin diffusion~\cite{hsieh2018mirrored,zhang2020wasserstein,chewi2020exponential}.

\subsubsection{Gaussian Approximation}
Parametric approximations such as those made by Gaussians have been widely used in variational inference \cite{jordan1999introduction,wainwright2008graphical,blei2017variational}.
These methods, in the Gaussian setting, aim to solve the problem 
\begin{align}
\label{eq:GVI}
    (m^\star, C^\star) = \argmin_{m, C}~\mathrm{KL}[\N(m, C) \Vert  \rho_{\rm post}].
\end{align} 
Gradient flows in the space of $(m, C)$ can be used to identify $(m^\star, C^\star)$. These gradient flows can also be understood as a Gaussian approximation of the gradient flow in the density space. The Riemannian structure in the density space can be projected to the space of Gaussians, leading to the concept of natural gradient \cite{amari1998natural,martens2020new,zhang2019fast} that has the advantage of reparametrization invariance.
Other work on the use of Gaussian variational inference methods includes the papers \cite{quiroz2018gaussian,khan2017conjugate,lin2019fast,galy2021particle, yumei2022variational}. 

In addition to their role in parametric variational inference, Gaussian approximations have been widely deployed in various generalizations of Kalman filtering~\cite{kalman1960new,sorenson1985kalman,julier1995new,wan2000unscented,evensen1994sequential}.  For Bayesian inverse problems, iterative ensemble Kalman samplers have been proposed \cite{emerick2013investigation,chen2012ensemble,wan2000unscented}. 
The paper \cite{huang2022efficient} introduced an ensemble Kalman methodology based on a novel mean-field dynamical system  that depends on its own filtering distribution.
For all these algorithms based on a Gaussian ansatz, the accuracy depends on some measure
of being close to Gaussian.
Regarding the use of Gaussian approximations in Kalman inversion we highlight, in addition to
the approximate Bayesian methods already cited, the use of ensemble Kalman methods for optimization: see \cite{iglesias2013ensemble,chada2022convergence,kim2022hierarchical,huang2022iterated,weissmann2021adaptive}. Kalman filtering has also been used in combination with variational inference~\cite{lambert2022continuous}. 
The relation between iterative Kalman filtering and Gauss-Newton or Levenberg Marquardt algorithms was studied in~\cite{bell1993iterated,bell1994iterated,huang2022iterated,chada2020iterative},
and leads to ensemble Kalman based optimization methods which are affine invariant.
\subsection{Notation}
We use $\#$ to denote the pushforward operation for general measures. It is defined via duality. More precisely consider probability measures $\mu, \nu$ in $\R^{N_{\theta}}$. Then $\nu = \varphi \# \mu$ if and only if
\[ \int f(\theta) {\rm d}\nu = \int f(\varphi(\theta)) {\rm d}\mu, \]
for any integrable $f$ under measure $\nu$. If $\mu, \nu$ admit smooth densities $\rho$ and $\tilde{\rho}$ respectively, we can use the change-of-variable formula {to derive the identity} $\tilde{\rho}(\tilde\theta) = \rho(\varphi^{-1}(\tilde{\theta}))|\nabla_{\tilde{\theta}} \varphi^{-1}(\tilde{\theta})|$.

\section{Gradient Flows}
\label{sec-Gradient-Flows}
In this section, we introduce the general formulation of gradient flows in probability density space. Methodologically, to define gradient flows, one needs a differential structure in the density space, which then leads to the definition of tangent spaces and metric tensors that determine a gradient flow. Whilst they may be rigorously defined in specific contexts\footnote{The rigorous theory of gradient flows in suitable infinite-dimensional functional spaces and its link with evolutionary PDEs is a long-standing subject; see \cite{ambrosio2005gradient, ambrosio2006gradient} for discussions and a rigorous treatment of gradient flows in metric space.}, the technical hurdles must be addressed on a case-by-case basis; we seek to keep such technicalities to a minimum and focus on a formal methodology for deriving the flow equation. {Once the flow is formally identified, we can rigorously study its large-time convergence properties and numerical approximations of the flow, which are the key factors in the understanding
and design of efficient sampling algorithms.}

For the formal methodology, we consider the probability space with smooth positive densities:
\begin{equation}
\label{eqn-smooth-positive-densities}
\mathcal{P} = \Bigl\{\rho \in C^{\infty}(\R^{N_\theta}) : \int \rho \mathrm{d}\theta = 1 ,\, \rho > 0\Bigr\}\, ,
\end{equation}
with the tangent space
\begin{equation}
\label{eqn: tangent space def}
    T_\rho \PP \subseteq \{\sigma \in C^{\infty}(\R^{N_\theta}): \int \sigma \mathrm{d}\theta = 0\}.
\end{equation}
{We assume that both the target $\rho_{\rm post}$, and density $\rho$ given by
the gradient flow, are in $\PP$: $\rho, \rho_{\rm post} \in \PP$. This allows} us to use differential structures under the smooth topology to calculate\footnote{The formal Riemannian geometric calculations in the density space were first proposed by Otto in \cite{otto2001geometry}.  The calculations in the smooth setting are rigorous if $\R^{N_{\theta}}$ is replaced by a compact manifold, as noted in \cite{Lott08}; in such case in the definition of tangent spaces, \eqref{eqn: tangent space def} also becomes identity. For rigorous results in general probability space, we refer to \cite{ambrosio2005gradient}.} gradient flow equations.

Denote an energy functional by $\EE(\cdot;\rho_{\rm post}): \PP \to \R$, {where we explicitly highlight
its dependence on $\rho_{\rm post}$ and assume that
$\rho_{\rm post}$ is the minimizer of $\EE(\cdot;\rho_{\rm post})$ over $\PP$.}
Then, a gradient flow of $\EE$ in the probability density space has the form given in \eqref{eq:GF}.
In the following we use the notation that $\langle\cdot,\cdot\rangle$ is the duality pairing between $T^*_{\rho}\PP$ and $T_{\rho}\PP$; it can be identified as the $L^2$ inner product when both arguments are classical functions. {Note that functions in $T^*_{\rho}\PP$ are not uniquely defined under the $L^2$ inner product, since 
$\langle \psi,\sigma\rangle = \langle \psi+c,\sigma \rangle$ for all $\sigma \in \T_\rho \PP$ and any constant $c$.
A unique representation can be identified by requiring, for example, that $\psi \in T_\rho^*\PP$ implies $\E_\rho[\psi]=0$.\footnote{{This choice is also naturally motivated by the definition of Fisher-Rao gradient flows; see \eqref{eq:gm}. That is, for $\psi \in T^*_{\rho}\PP$ implying $\E_{\rho}[\psi] = 0$, the Fisher-Rao metric tensor reduces to a multiplication by the density $\rho$.}} 
In the context of this paper, the most important elements in $T^*_\rho \PP$ are the first variation $\frac{\delta \mathcal{E}}{\delta \rho}$ of energy functionals $\EE(\cdot,\rho_{\rm post})$, defined by
\begin{align}
\label{eqn-first-variation-def}
\Bigl\langle \frac{\delta \mathcal{E}}{\delta \rho}, \sigma \Bigr\rangle
=  \lim_{\epsilon \rightarrow 0} \frac{\mathcal{E}(\rho + \epsilon \sigma; \rho_{\rm post}) - \mathcal{E}(\rho, \rho_{\rm post})}{\epsilon},
\end{align}
for any $\sigma \in T_\rho \PP$, assuming the limit exists.} 

The metric tensor $M(\rho): T_{\rho}\PP \to T_{\rho}^*\PP$ can be derived from a Riemannian metric $g$ that one imposes in the probability density space; we have $g_{\rho}(\sigma_1,\sigma_2) = \langle M(\rho)\sigma_1, \sigma_2\rangle$ for $\sigma_1, \sigma_2 \in T_{\rho}\PP$, and $M(\rho)^{-1}:T_{\rho}^*\PP \to T_{\rho}\PP$ is sometimes referred to as the
Onsager operator~\cite{onsager1931reciprocal,onsager1931reciprocal-II,mielke2016generalization}. For simplicity of understanding, we view $M(\rho_t)^{-1}$ in the gradient flow equation \eqref{eq:GF} as a preconditioning operator for the density dynamics. For more details of the Riemannian perspective we refer to \cite{Lott08, chen2023gradient}. We note that the gradient flow \cref{eq:GF} will converge to $\rho_{\rm post}$ if it is the unique stationary point of $\EE$. Thus numerical simulations of \cref{eq:GF} lead to sampling algorithms.

{A standard use of particle} methods to simulate dynamics of densities is to identify a mean-field stochastic
dynamical system, with state space $\R^{N_{\theta}}$, 
defined so that its law is given by  \cref{eq:GF}. 
For example, we may introduce the It\^o SDE
\begin{equation}
\begin{aligned}
\label{eq:MFD}
\dd \theta_t = f(\theta_t; \rho_t, \rho_{\rm post})\dd t + h(\theta_t; \rho_t,\rho_{\rm post}){\rm d}W_t, 
\end{aligned}
\end{equation}
where $W_t\in \R^{N_{\theta}}$ is a standard Brownian
motion. Because the drift $f: \R^{N_{\theta}} \times \PP \times \PP\rightarrow \R^{N_{\theta}}$ and diffusion coefficient $h : \R^{N_{\theta}}\times\PP \times \PP \rightarrow \R^{N_{\theta}\times N_{\theta}}$ are evaluated at $\rho_t$, the density of $\theta_t$ itself, by definition this is a mean-field 
model. Other types of mean-field dynamics, such as birth-death dynamics, and ODEs, also exist.

When utilizing \cref{eq:GF} for sampling purposes, two primary properties of the flow become our focal points. Firstly, rapid convergence of the flow to $\rho_{\rm post}$ is desired. Secondly, the numerical implementation of the flow must be tractable, and preferably, as convenient as possible, {for example through a mean-field
model such as \eqref{eq:MFD}.} Below we describe an example.

\begin{example} \label{ex:first}
Choose $\EE$ to be the KL divergence:
\begin{align}
\label{eq:energy}
\mathcal{E}(\rho; \rho_{\rm post}) = \mathrm{KL}[\rho \Vert  \rho_{\rm post}]  =\int\rho \log\Bigl(\frac{ \rho}{ \rho_{\rm post}}\Bigr)\,\dd \theta,
\end{align}
and use the Wasserstein metric tensor \cite{otto2001geometry} which satisfies $M(\rho)^{-1} \psi =  -\nabla_\theta \cdot (\rho \nabla_\theta \psi)$, then the gradient flow~\cref{eq:GF} takes the form: 
\begin{align}
\label{eq:mean-field-Wasserstein}
   \frac{\partial \rho_t}{\partial t} =  -\nabla_{\theta} \cdot (\rho_t \nabla_\theta \log \rho_{\rm post}) + \nabla_{\theta} \cdot (\nabla_{\theta} \rho_t).
\end{align}
This is the Fokker-Planck equation of the Langevin dynamics~\cite{jordan1998variational}:
\begin{equation}
\label{eq:lan}
\mathrm{d}\theta_t=\nabla_{\theta}
\log\rho_{\mathrm{post}}(\theta_t)\mathrm{d}t+\sqrt{2}\mathrm{d}W_t.
\end{equation}
{This is a trivial (in the sense that it is not actually a mean-field model) example of
\eqref{eq:MFD}. Discretizations} of this Langevin dynamics can be used to numerically implement the flow. Regarding the convergence property of \cref{eq:mean-field-Wasserstein}, we have the following classical result \cite{markowich2000trend}:
\begin{proposition}
\label{prop-convergence-wasserstein}
     {Assume that $\rho_{\rm post}$ is $\alpha$-strongly logconcave:  $\log \rho_{\rm post} \in C^2(\R^{N_{\theta}})$ and 
     \begin{align}
     \label{a:logconcave}
     -\Hess \log \rho_{\rm post}(\theta) \succeq \alpha I.
     \end{align}
Then,} for all $t\geq 0$, it holds that
\begin{align}
{\rm KL}[\rho_{t} \Vert  \rho_{\rm  post}]\leq {\rm KL}[\rho_{0} \Vert  \rho_{\rm  post}] e^{-2\alpha t}.
\end{align}
\end{proposition}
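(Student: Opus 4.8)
The plan is to establish the exponential decay of $\mathrm{KL}[\rho_t \Vert \rho_{\rm post}]$ along the Fokker--Planck flow \eqref{eq:mean-field-Wasserstein} by the standard entropy--dissipation argument. First I would differentiate the KL divergence in time. Writing $\mathcal{E}(t) = \mathrm{KL}[\rho_t \Vert \rho_{\rm post}]$ and using \eqref{eq:GF-E} specialized to the Wasserstein metric, one obtains
\begin{align*}
\frac{\dd}{\dd t}\mathcal{E}(t)
= -\int \rho_t \bigl|\nabla_\theta \log \tfrac{\rho_t}{\rho_{\rm post}}\bigr|^2 \dd\theta
=: -I(\rho_t \Vert \rho_{\rm post}),
\end{align*}
the relative Fisher information; this is just integration by parts against the divergence-form right-hand side, after noting $\frac{\delta \mathcal{E}}{\delta\rho} = \log(\rho_t/\rho_{\rm post}) + 1$ and that the additive constant is annihilated by $\nabla_\theta$.

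The heart of the matter is then the logarithmic Sobolev inequality: under the $\alpha$-strong logconcavity assumption \eqref{a:logconcave}, the Bakry--\'Emery criterion gives that $\rho_{\rm post}$ satisfies a log-Sobolev inequality with constant $\alpha$, i.e.
\begin{align*}
I(\rho \Vert \rho_{\rm post}) \ge 2\alpha\, \mathrm{KL}[\rho \Vert \rho_{\rm post}]
\end{align*}
for all probability densities $\rho$. Combining this with the dissipation identity yields the differential inequality $\frac{\dd}{\dd t}\mathcal{E}(t) \le -2\alpha\, \mathcal{E}(t)$, and Gr\"onwall's lemma gives $\mathcal{E}(t) \le \mathcal{E}(0) e^{-2\alpha t}$, which is exactly the claim. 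I would either cite Bakry--\'Emery directly or, for self-containedness, sketch the proof via the $\Gamma_2$ calculus / time-derivative of the Fisher information along the semigroup.

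The main obstacle is not any single hard inequality but the rigor/regularity bookkeeping: justifying that $\rho_t$ stays in $\PP$, that the integrals converge, that the boundary terms in the integration by parts on $\R^{N_\theta}$ vanish (decay of $\rho_t$ and its gradient at infinity), and that one may differentiate under the integral sign. These are classical but technical; under strong logconcavity and smooth positive initial data they hold, and since the paper explicitly adopts a formal methodology and cites \cite{markowich2000trend} for this result, I would handle these points briefly by invoking the assumption $\rho_t \in \PP$ and the decay properties inherited from the strongly logconcave target, rather than proving them from scratch. The only genuinely ``mathematical'' step requiring care is the log-Sobolev inequality itself, and there the Bakry--\'Emery theorem does all the work.
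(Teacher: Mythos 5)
Your proposal is correct and is exactly the classical entropy--dissipation argument (time derivative of the KL divergence equals minus the relative Fisher information, then the Bakry--\'Emery logarithmic Sobolev inequality with constant $\alpha$, then Gr\"onwall) that underlies the cited reference \cite{markowich2000trend}; the paper itself offers no independent proof and simply invokes this classical result. The constants are handled correctly ($I \ge 2\alpha\,\mathrm{KL}$ giving the rate $e^{-2\alpha t}$), and your remarks on the regularity bookkeeping are consistent with the paper's formal treatment.
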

The convergence rate depends crucially on $\rho_{\rm post}$. If $\rho_{\rm post} = \N(0,I)$, the unit Gaussian distribution, the convergence rate is $e^{-2t}$. However, if the Gaussian distribution is highly anisotropic, then $\alpha$ depends on the lower bound of the eigenvalues of the {precision} matrix, which can be very small, leading to a slow convergence rate. Related convergence results with relaxed assumptions on the target distribution can be found in \cite{poincare1890equations, gross1975logarithmic, bakry1985diffusions, bakry2014analysis, villani2021topics}, based on functional inequalities, {and in \cite{mattingly2002ergodicity} using
coupling methods.} There are also many results  \cite{durmus2019analysis, shen2019randomized, vempala2019rapid, dalalyan2020sampling, chewi2021analysis, wu2022minimax} regarding the non-asymptotic convergence guarantee for discrete algorithms based on the Langevin dynamics.
\end{example}

{Note that for the choice of energy functionals in the preceding example, it is not necessary to know
the normalization constant for $\rho_{\rm post}$; this is highly desirable. On the other hand, for
the Wasserstein metric the preceding example shows that the convergence rate depends sensitively on
the target; this is undesirable. In Sections \ref{sec:Energy} and \ref{sec:GF and AI} we discuss, 
within a broader context, choices of the energy functionals and metric tensors. We identify certain 
choices that can facilitate ease of numerical implementations and fast convergence rates
{across large classes of targets}.}

\section{Choice of Energy Functionals} 
\label{sec:Energy}


In the literature, the KL divergence is the most commonly used energy functional for deriving gradient flows in the probability space. Using \cref{eqn-first-variation-def} and \cref{eq:energy} we can formally calculate 
its first variation as
\begin{align}
\label{eq:vare}
{\frac{\delta \mathcal{E}}{\delta \rho} = \log \rho- \log \rho_{\rm post} - \E_{\rho}[\log \rho- \log \rho_{\rm post}] \in T_\rho^* \PP} .
\end{align}
From \cref{eq:vare} we observe that, for the KL divergence, $\frac{\delta \mathcal{E}}{\delta \rho}$ remains unchanged if we scale $\rho_{\rm post}$ by any positive constant $c >0$, i.e. if we change $\rho_{\rm post}$ to $c\rho_{\rm post}$. This  property eliminates the need to know the normalization constant of $\rho_{\rm post}$ in order to calculate
the first variation. As a consequence, the gradient flow~\cref{eq:GF} derived from this energy functional is also independent of the normalization constant. This has significant advantages in terms of numerical implementations, as handling the unknown normalization constant is a well-known challenge in sampling.

We can also formulate the property in terms of the energy functional itself, without delving into the technicality of dealing with the mathematical well-definedness of first variations\footnote{For the $f$-divergences considered in this paper, we can show the limit in \cref{eqn-first-variation-def} exists so the first variation is well-defined. For details see the proof of \Cref{thm: KL unique f divergence} in \Cref{sec: Proofs of prop: KL unique f divergence} using a localization argument.}. An equivalent formulation is 
that $\mathcal{E}(\rho;c\rho_{\rm post})-\mathcal{E}(\rho;\rho_{\rm post})$ is independent of $\rho$, for any $c \in (0,\infty)$. {
In fact, this implies that  $$
\Bigl\langle \frac{\delta \mathcal{E}}{\delta \rho}(\rho;c\rho_{\rm post}) - \frac{\delta \mathcal{E}}{\delta \rho}(\rho; \rho_{\rm post}), \sigma \Bigr\rangle  = 0
$$
for $\sigma \in T_{\rho}\PP$ when the first variations exist; thus in such case, first variations do not depend on the scaling of $\rho_{\rm post}$.
}

Can this property be satisfied for other energy functionals? In \Cref{thm: KL unique f divergence}, we show that the answer is no among all $f$-divergences with continuously differentiable  $f$.
Here the $f$-divergence between two continuous
density functions $\rho$ and $\rho_{\rm post}$, positive everywhere, is defined as 
\[D_f[\rho \Vert \rho_{\rm post}] = \int \rho_{\rm post} f\Bigl(\frac{\rho}{\rho_{\rm post}}\Bigr) {\rm d}\theta.\]
For convex $f$ with $f(1) = 0$, Jensen's inequality implies that $D_f[\rho \Vert \rho_{\rm post}] \geq 0$. The KL divergence used in \eqref{eq:energy} corresponds to the choice $f(x)=x\log x.$ 


\begin{theorem}
\label{thm: KL unique f divergence} Assume that $f:(0,\infty) \to \mathbb{R}$ is continuously differentiable and $f(1) = 0$.
    Then the KL divergence is the only $f$-divergence (up to scalar factors) such that $D_f[\rho \Vert c\rho_{\rm post}]-D_f[\rho \Vert \rho_{\rm post}]$ is independent of $\rho \in \mathcal{P}$
    for any $c \in (0,\infty) $ and 
    for any $\rho_{\rm post} \in \mathcal{P}$.
\end{theorem}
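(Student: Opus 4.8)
The plan is to translate the scaling-invariance condition into a functional equation for $f$ and solve it. Fix $c\in(0,\infty)$ and compute
\[
D_f[\rho \Vert c\rho_{\rm post}] - D_f[\rho \Vert \rho_{\rm post}]
= \int \rho_{\rm post}\Bigl( c\, f\bigl(\tfrac{\rho}{c\rho_{\rm post}}\bigr) - f\bigl(\tfrac{\rho}{\rho_{\rm post}}\bigr)\Bigr)\dd\theta,
\]
where I used $\int (c\rho_{\rm post}) f(\rho/(c\rho_{\rm post}))\dd\theta = \int \rho_{\rm post}\, c\, f(\rho/(c\rho_{\rm post}))\dd\theta$. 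Setting $g_c(x) := c\,f(x/c) - f(x)$ for $x>0$, the quantity above is $\int \rho_{\rm post}\, g_c(\rho/\rho_{\rm post})\dd\theta$, and the hypothesis says this integral is the same for all $\rho\in\PP$ and all $\rho_{\rm post}\in\PP$. First I would argue that this forces $g_c$ to be an affine function of the form $g_c(x) = \alpha(c)\,x + \beta(c)$: indeed, by choosing $\rho_{\rm post}$ fixed and perturbing $\rho$ to $\rho+\epsilon\sigma$ with $\int\sigma\dd\theta=0$, the derivative condition gives $\int \sigma\, g_c'(\rho/\rho_{\rm post})\dd\theta = 0$ for all admissible $\sigma$, hence $g_c'(\rho/\rho_{\rm post})$ must be constant in $\theta$; since the ratio $\rho/\rho_{\rm post}$ ranges over a large set of values (one can make it take any two prescribed positive values on two small balls by choosing $\rho$ appropriately, a localization argument as the paper alludes to in its footnote), $g_c'$ is globally constant, so $g_c$ is affine.

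Next, writing out $c\,f(x/c) - f(x) = \alpha(c) x + \beta(c)$, I differentiate in $x$ to get $f'(x/c) - f'(x) = \alpha(c)$, i.e. $f'(x/c) = f'(x) + \alpha(c)$. Substituting $x=1$ gives $f'(1/c) = f'(1) + \alpha(c)$, so $\alpha(c) = f'(1/c) - f'(1)$, and the relation becomes $f'(x/c) - f'(1/c) = f'(x) - f'(1)$ for all $x,c>0$. Setting $h(x) := f'(x) - f'(1)$, this reads $h(x/c) = h(x) - h(c)$ (after also using $h$ at $x=1$ is $0$ and rearranging $h(x/c) + h(c) = ?$ — more precisely $h(x/c) = h(x) - h(c)$ follows by putting $x=c$ to see $h(1)=0$ and then the general form). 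This is the Cauchy logarithmic functional equation; since $f$ is continuously differentiable, $h$ is continuous, so $h(x) = k\log x$ for some constant $k\in\R$. Therefore $f'(x) = k\log x + f'(1)$, and integrating with the normalization $f(1)=0$ yields $f(x) = k\bigl(x\log x - x + 1\bigr) + f'(1)(x-1)$.

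Finally I would observe that the terms $-kx + k$ and $f'(1)(x-1)$ contribute only a "trivial" part to the $f$-divergence: for any density pair, $\int \rho_{\rm post}\cdot(a(\rho/\rho_{\rm post}) + b)\dd\theta = a + b$ is a constant independent of $\rho$, so replacing $f(x)$ by $f(x) + a(x-1)$ does not change the divergence as a functional on $\PP\times\PP$ (it only shifts it by a constant, and in fact $a(x-1)$ contributes $0$ since both are probability densities). Hence up to such trivial affine corrections and the overall scalar $k$, we must have $f(x) = x\log x$, which is the KL divergence; and conversely one checks directly that $f(x)=x\log x$ does satisfy the stated scaling invariance, since $D_f[\rho\Vert c\rho_{\rm post}] - D_f[\rho\Vert\rho_{\rm post}] = -\log c$ is indeed independent of $\rho$.

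The main obstacle I anticipate is the rigorous justification of the localization step: making precise that the hypothesis "$\int \rho_{\rm post}\, g_c(\rho/\rho_{\rm post})\dd\theta$ is independent of $\rho$ and $\rho_{\rm post}$" really pins down $g_c$ pointwise as an affine function, given that we are working on all of $\R^{N_\theta}$ with smooth positive densities (so $\rho/\rho_{\rm post}$ cannot be an arbitrary function, and integrability of $f(\rho/\rho_{\rm post})$ must be ensured). This requires constructing, for any prescribed values $u_1,u_2>0$, smooth positive densities whose ratio equals $u_1$ on one small ball and $u_2$ on another while keeping the integrals finite, and then comparing; care is also needed to ensure the first-variation limit in \eqref{eqn-first-variation-def} exists, which is presumably where the paper's footnote about a localization argument comes in. Everything after that reduction is the standard solution of the logarithmic Cauchy equation under a regularity assumption and is routine.
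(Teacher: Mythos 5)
Your proposal is correct and follows essentially the same route as the paper's proof: a first-variation/localization argument to show that $f'(\rho/(c\rho_{\rm post}))-f'(\rho/\rho_{\rm post})$ is pointwise constant, reduction to the (logarithmic) Cauchy functional equation, continuity to get $f'(x)=k\log x+{\rm const}$, and the observation that the residual affine term $a(x-1)$ contributes nothing to $D_f$. The only notable difference is cosmetic: where you identify the constant by first arguing that $g_c'$ is globally constant and then substituting $x=1$, the paper pins it down more directly by evaluating at a point $\theta^\dagger$ where $\rho(\theta^\dagger)/\rho_{\rm post}(\theta^\dagger)=1$ (which exists since both densities are continuous and integrate to one), which slightly lightens the localization burden you flag as the main obstacle.
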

The proof can be found in \Cref{sec: Proofs of prop: KL unique f divergence}.
As a consequence of Theorem \ref{thm: KL unique f divergence},
gradient flows defined by the energy \eqref{eq:energy} 
do not depend on the normalization 
constant of $\rho_{\rm post}$. {Hence numerical approximations of gradient
flows stemming from this energy $\mathcal{E}$ are more straightforward to implement
in comparison with the use of other divergences or {distances} as energy.}
This justifies the choice of the KL divergence as an energy functional for sampling,
and our developments in most of this paper are hence specific to the energy 
\cref{eq:energy}. 

\begin{newremark} 
Other energy functionals can be, and are, used for 
constructing gradient flows; for example, the chi-squared divergence~\cite{chewi2020svgd,lindsey2022ensemble}:
\begin{equation}
\label{eq:chi-squared}
    \chi^2(\rho \Vert \rho_{\rm post}) = \int \rho_{\rm post}\Bigl(\frac{\rho}{\rho_{\rm post}} - 1\Bigr)^2 \dd \theta
    = \int \frac{\rho^2}{\rho_{\rm post}}\dd \theta - 1.
\end{equation}

The normalization constant can appear explicitly in the gradient flow equation when general energy functionals are used. Additional techniques need to be explored to simulate such flows. For example, when the energy functional is the chi-squared divergence, in \cite{chewi2020svgd}, kernelization is used to avoid the normalization constant in the Wasserstein gradient flow. Moreover, in \cite{lindsey2022ensemble} where a modification of the Fisher-Rao metric is used, ensemble methods with birth-death type dynamics are adopted to derive numerical methods; the normalization constant can be absorbed into the birth-death rate. These techniques may change the flow equation completely or its time scales, thus affecting its convergence behavior.
\end{newremark}

\section{Choice of Metrics}
\label{sec:GF and AI}
We now fix the energy functional to be the KL divergence and discuss the choices of $M(\cdot)$. The metric tensor can significantly influence the rate at which the gradient flow converges. Within this section, we delve into a range of metrics, {focusing on \textit{invariance} properties; as we will show in this and the
next section, these play an important role in determining convergence rates of the flow
and algorithms based on it}.
In \Cref{ssec:Fisher-Rao}, we discuss the Fisher-Rao gradient flow, highlight its diffeomorphism invariance property, and showcase the remarkable uniform convergence rate to the target distribution which
stems from this invariance; it is also known that the Fisher-Rao metric is the unique diffeomorphism invariant metric (up to scaling) in the probability density space. However, implementing the Fisher-Rao gradient flow via particle methods {is resource-demanding and does not currently lead to efficient algorithms (see discussion
below).}
In \Cref{sec:gradient-flow}, we introduce a relaxed, affine invariance property for gradient flows.
It is worth noting that certain widely used gradient flows, namely the Wasserstein and Stein gradient flows, are not affine invariant. Some modifications of the Wasserstein and Stein gradient flows to achieve affine invariance are introduced in \Cref{ssec:Wasserstein} and \Cref{ssec:Stein}. These affine invariant gradient flows are more convenient to approximate numerically via particle methods compared to the diffeomorphism invariant Fisher-Rao gradient flow. 
In \Cref{sec-affine-invariance-experiments}, we conduct illustrative numerical experiments showing that these affine invariant gradient flows behave more favorably compared to their non-affine-invariant versions when sampling highly anisotropic distributions.

\subsection{Fisher-Rao Gradient Flow and Diffeomorphism Invariance}
\label{ssec:Fisher-Rao}

The Fisher-Rao Riemannian metric\footnote{For some intuitions for the definition of the Fisher-Rao Riemannian metric, see \Cref{appendix: intuition Fisher-Rao Riemannian Metric}.} \cite{amari2016information,ay2017information} is
\[g_{\rho}^{\mathrm{FR}}(\sigma_1,\sigma_2)=\int \frac{\sigma_1\sigma_2}{\rho} \mathrm{d}\theta, \text{ for } \sigma_1,\sigma_2 \in T_{\rho}\PP.\]
The metric tensor then admits the following form: {
\begin{align}
\label{eq:gm}
\M^{\mathrm{FR}}(\rho)^{-1} \psi = \rho \psi \in T_{\rho}\PP
\end{align}
for any $\psi \in T_\rho^* \PP$; recall that $\psi \in T_\rho^* \PP$ implies $\E_\rho[\psi] =0$.}
{By direct calculations, we obtain}
the Fisher-Rao gradient flow of the KL divergence as 
\begin{equation} 
\begin{aligned}    
\label{eq:mean-field-Fisher-Rao}
\frac{\partial \rho_t}{\partial t} =& -\M^{\mathrm{FR}}(\rho_t)^{-1}\frac{\delta \mathcal{E}}{\delta \rho}\Bigr|_{\rho=\rho_t}, \\ 
=&  -\rho_t \Bigl( 
\bigl( \log \rho_t - \log \rho_{\rm post}\bigr) - \E_{\rho_t}[ \log \rho_t -
\log \rho_{\rm post}] \Bigr).
\end{aligned}
\end{equation}
An important property of the Fisher-Rao gradient flow is its \textit{diffemorphism invariance}. More precisely, consider a diffemorphism in the parameter space $\varphi: \R^{N_{\theta}} \to \R^{N_{\theta}}$ and correspondingly $\tilde{\rho}_t = \varphi \# \rho_t$,$\tilde{\rho}_{\rm post} = \varphi \# \rho_{\rm post}$. Then, it holds that 
\begin{equation}
\label{eq-transformed-Fisher-Rao}
   \frac{\partial \tilde{\rho}_t}{\partial t} =  -\tilde{\rho}_t \Bigl( \bigl( \log \tilde{\rho}_t- \log \tilde{\rho}_{\rm post}\bigr) - \E_{\tilde{\rho}_t}[  \log \tilde{\rho}_t - \log \tilde{\rho}_{\rm post}] \Bigr).
\end{equation}
That is, the form of the equation {remains invariant under diffeomorphic transformations.} As a consequence, suppose there is a diffemorphism satisfying $\tilde{\rho}_{\rm post} = \varphi \# \rho_{\rm post} = \N(0,I)$, then we can study the convergence rate of \cref{eq-transformed-Fisher-Rao} for a Gaussian target distribution and the rate will apply directly to \cref{eq:mean-field-Fisher-Rao}, due to the following property of the KL divergence:
\begin{equation}
    \mathrm{KL}[\rho_t \Vert \rho_{\rm post}] =\mathrm{KL}[\varphi \# \rho_t \Vert \varphi \# \rho_{\rm post}] = \mathrm{KL}[\tilde{\rho}_t \Vert \tilde{\rho}_{\rm post}]. 
\end{equation}
The above fact implies that the Fisher-Rao gradient flow may converge at the same rate for general distributions and Gaussian distributions, indicating its favorable convergence behavior. Indeed, in 
\Cref{thm:FR-convergence} we prove its uniform convergence rate {across a wide
class of target} distributions. The proof can be found in \Cref{proof:FR-convergence}; we note that the diffemorphism invariance property is not explicitly used in the current proof.
\begin{theorem}
\label{thm:FR-convergence}  
Assume that there exist constants $K, B>0$ 
such that the initial density $\rho_0$ satisfies
\begin{align}\label{e:asup1}
    e^{-K(1+|\theta|^2)}\leq \frac{\rho_0(\theta)}{\rho_{\rm post}(\theta)}\leq e^{K(1+|\theta|^2)},
\end{align}
and both $\rho_0, \rho_{\rm post}$ have bounded second moments:
\begin{align}\label{e:asup2}
    \int |\theta|^2 \rho_0(\theta)\dd  \theta\leq B, \quad \int |\theta|^2 \rho_{\rm post}(\theta)\dd  \theta\leq B.
\end{align}
Let $\rho_t$ solve the Fisher-Rao gradient flow \eqref{eq:mean-field-Fisher-Rao}. Then, for any $t\geq \log\bigl((1+B)K\bigr)$,
\begin{align}\label{e:KLconverge}
    {\rm KL}[\rho_{t} \Vert  \rho_{\rm post}]\leq (2+B+eB)Ke^{-t}.
\end{align}
\end{theorem}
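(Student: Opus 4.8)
The plan is to analyze the evolution of the density ratio $u_t(\theta) := \rho_t(\theta)/\rho_{\rm post}(\theta)$, since the Fisher-Rao gradient flow has a particularly clean form in these variables. Writing $\log u_t = \log \rho_t - \log \rho_{\rm post}$, the flow \eqref{eq:mean-field-Fisher-Rao} becomes, after dividing by $\rho_t$,
\begin{align}
\label{eq:proposal-ratio-flow}
\frac{\partial}{\partial t}\log u_t = -\bigl(\log u_t - \E_{\rho_t}[\log u_t]\bigr),
\end{align}
where $\E_{\rho_t}[\log u_t] = \mathrm{KL}[\rho_t\|\rho_{\rm post}] \geq 0$ is spatially constant. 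The key structural observation is that at each spatial point $\theta$, $\log u_t(\theta)$ satisfies a scalar linear ODE with a forcing term $\E_{\rho_t}[\log u_t]$ that does not depend on $\theta$. Solving this ODE pointwise gives $\log u_t(\theta) = e^{-t}\log u_0(\theta) + \int_0^t e^{-(t-s)}\mathrm{KL}[\rho_s\|\rho_{\rm post}]\,\dd s$. Equivalently, in terms of $u_t$ itself, $u_t(\theta) = u_0(\theta)^{e^{-t}}\cdot \Lambda_t$ for a spatially constant factor $\Lambda_t > 0$ determined by the normalization $\int \rho_t\,\dd\theta = 1$, i.e. $\Lambda_t = \bigl(\int u_0^{e^{-t}}\rho_{\rm post}\,\dd\theta\bigr)^{-1}$.

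From this explicit representation, I would bound $\mathrm{KL}[\rho_t\|\rho_{\rm post}] = \int \rho_t\log u_t\,\dd\theta$ directly. Substituting the solution formula, $\mathrm{KL}[\rho_t\|\rho_{\rm post}] = e^{-t}\int \rho_t \log u_0\,\dd\theta + \log\Lambda_t$, and then one needs to control both terms using the hypotheses \eqref{e:asup1}--\eqref{e:asup2}. For the first term, the two-sided bound $|\log u_0(\theta)| \leq K(1+|\theta|^2)$ from \eqref{e:asup1} gives $|\int \rho_t\log u_0\,\dd\theta| \leq K(1 + \int|\theta|^2\rho_t\,\dd\theta)$, so one must also track the second moment of $\rho_t$. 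For the second moment, using $\rho_t = u_0^{e^{-t}}\Lambda_t \rho_{\rm post}$ together with the elementary inequality $a^{e^{-t}} \leq 1 + a$ (or $\max(1,a)$) for $a>0$ and $t\geq 0$, one can bound $\int|\theta|^2\rho_t\,\dd\theta$ in terms of $\int|\theta|^2\rho_{\rm post}\,\dd\theta$ and $\int|\theta|^2\rho_0\,\dd\theta$, both $\leq B$, provided $\Lambda_t$ is controlled; similarly $\log\Lambda_t$ is controlled by Jensen applied to $\int u_0^{e^{-t}}\rho_{\rm post}\,\dd\theta$. Assembling these estimates should give $\mathrm{KL}[\rho_t\|\rho_{\rm post}] \lesssim (1+B)K e^{-t}$ for $t$ large enough that the $e^{-t}$ factor dominates the polynomial-in-$B$ corrections, which is precisely where the threshold $t \geq \log((1+B)K)$ enters.

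The main obstacle, I expect, is making the bookkeeping with the constant $\Lambda_t$ (equivalently $\E_{\rho_s}[\log u_s]$, which appears implicitly) fully rigorous and getting the explicit constant $(2+B+eB)K$ rather than just a qualitative rate. Because $\Lambda_t$ is defined through an integral against $\rho_{\rm post}$ of a power of $u_0$, and $u_0$ only satisfies a Gaussian-type two-sided bound, one has to combine \eqref{e:asup1} with the second-moment bound \eqref{e:asup2} carefully — for instance via $\int u_0^{e^{-t}}\rho_{\rm post}\,\dd\theta \leq \int (1+u_0)\rho_{\rm post}\,\dd\theta = 2$ and a matching lower bound using Jensen, $\int u_0^{e^{-t}}\rho_{\rm post}\,\dd\theta \geq \exp(e^{-t}\int \rho_{\rm post}\log u_0\,\dd\theta) \geq \exp(-e^{-t}K(1+B))$. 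A secondary technical point is justifying the pointwise ODE solution and the interchange of differentiation and integration, i.e. verifying that $\rho_t$ genuinely stays in $\PP$ and has finite second moment for all $t\geq 0$; this can be handled by noting the solution formula itself exhibits a smooth positive density and then checking it solves \eqref{eq:mean-field-Fisher-Rao}. Once the explicit representation \eqref{eq:proposal-ratio-flow} is in hand, the remaining work is careful but elementary estimation.
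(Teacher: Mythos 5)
Your proposal follows essentially the same route as the paper's proof: both solve the flow explicitly by the variation-of-constants formula to get $\rho_t = Z_t^{-1}\rho_0^{e^{-t}}\rho_{\rm post}^{1-e^{-t}}$ (your $\Lambda_t = Z_t^{-1}$), both lower-bound the normalization constant by Jensen's inequality applied to $\int u_0^{e^{-t}}\rho_{\rm post}\,\dd\theta \ge \exp\bigl(e^{-t}\int\rho_{\rm post}\log u_0\,\dd\theta\bigr) \ge e^{-Ke^{-t}(1+B)}$, and both reduce the problem to controlling $\int|\theta|^2\rho_t\,\dd\theta$. The one substantive difference is in that last estimate: you propose the elementary inequality $u_0^{e^{-t}} \le 1+u_0$, which gives $\int|\theta|^2 u_0^{e^{-t}}\rho_{\rm post}\,\dd\theta \le \int|\theta|^2(\rho_{\rm post}+\rho_0)\,\dd\theta \le 2B$ and hence the slightly weaker constant $(2+B+2eB)K$, whereas the paper applies H\"older's inequality to the product $(|\theta|^2\rho_0)^{e^{-t}}(|\theta|^2\rho_{\rm post})^{1-e^{-t}}$ to get $B^{e^{-t}}B^{1-e^{-t}} = B$, which is exactly what is needed for the stated constant $(2+B+eB)K$ once $t \ge \log((1+B)K)$ makes $e^{Ke^{-t}(1+B)} \le e$. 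Replacing your pointwise bound with this interpolation step closes that small gap; everything else in your outline, including the role of the threshold on $t$, matches the paper's argument.
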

Similar propositions are presented in concurrent works~\cite[Theorem 3.3]{lu2019accelerating} and~\cite[Theorem 2.3]{lu2022birth}. Our results do not require $ {\rho_0(\theta)}/{\rho_{\rm post}(\theta)}$ to be bounded, thus applying to a broader class of distributions including Gaussians. An explicit expansion of the KL divergence along the Fisher-Rao gradient flow is also given in  
\cite{domingo2023explicit}, which indicates the optimal asymptotic convergence rate is $\bigO(e^{-2t})$.


As mentioned earlier, the diffeomorphism invariance of the Fisher-Rao gradient flow provides an intuitive explanation {as to why} it achieves such an exceptional uniform convergence. This is in sharp contrast to the Wasserstein gradient flow (\Cref{prop-convergence-wasserstein}). {Nevertheless, effective numerical approximation} the Fisher-Rao gradient flow is not straightforward, in particular because na\"ive particle methods lead to algorithms which
do not evolve the support of the measure, requiring enormous resources therefore. In \cite{lu2019accelerating, lu2022birth}, birth-death dynamics are used to simulate the flow, using kernel density estimators to approximate the current density and adding transport steps to move the support of particles; the latter in fact changes the flow equation. The diffeomorphism invariance property is also not preserved at the particle level.

On the other hand, the invariance of the gradient flow is due to the invariance of the Riemannian metric itself. The Fisher-Rao metric is known to be the only metric, up to constants, that satisfies the diffeomorphism invariance property \cite{cencov2000statistical, ay2015information, bauer2016uniqueness}. Therefore there 
{are no} alternatives if we ask for such strong diffeomorphism invariance. In the following subsection, we explore a relaxed, affine invariance property for the gradient flows. In contrast to the diffeomorphism invariance property, we are able to construct various affine invariant gradient flows, in particular, the affine invariant Wasserstein and Stein gradient flows. 
Moreover, when using particle methods to simulate the flow, we can show that the affine invariance property is also preserved at the particle level.

\subsection{Affine Invariance}
\label{sec:gradient-flow}
Roughly speaking, affine invariant gradient flows are invariant under any invertible \textit{affine transformations} of the density variables;
as a consequence, the convergence rate is independent of the affine transformation. It is
thus natural to expect that algorithms with this property have an advantage for sampling highly anisotropic target distributions. We note that the affine invariance property has been studied in MCMC algorithms \cite{goodman2010ensemble,foreman2013emcee} and certain gradient flows \cite{garbuno2020interacting,garbuno2020affine}; see the literature survey in \Cref{sec: survey affine invariance}. Our goal here is to bring this notion to general gradient flows in the density space.
\subsubsection{Affine Invariant Gradient Flow}
In the following, we define affine invariant gradient flows.

\begin{definition}[Affine Invariant Gradient Flow] 
\label{def: Affine Invariant Gradient Flow} 
Consider the gradient flow
\begin{equation}
    \frac{\partial \rho_t}{\partial t}  =-M(\rho_t)^{-1}\frac{\delta \mathcal{E}}{\delta \rho}(\rho; \rho_{\rm post})\Bigr|_{\rho = \rho_t}, 
\end{equation}
and the affine transformation $\tilde\theta = \varphi(\theta) = A\theta + b$. Let $\tilde{\rho}_t :=\varphi \# \rho_t$ and $\tilde{\rho}_{\rm post} = \varphi \# \rho_{\rm post}$.
The {\em gradient flow is affine invariant} if 
\begin{equation}
    \frac{\partial \tilde{\rho}_t}{\partial t}  =-M(\tilde{\rho}_t)^{-1}\frac{\delta \mathcal{E}}{\delta \rho}(\tilde{\rho}; \tilde{\rho}_{\rm post})\Bigr|_{\tilde\rho = \tilde\rho_t}, 
\end{equation}
for any invertible affine transformation $\varphi$.
\end{definition}

The key idea in the preceding definition is that, after the affine transformation, the dynamics of $\tilde{\rho}_t$ is itself a gradient flow, in the same metric as the gradient flow in the original variables. {Thus affine
invariance is similar to diffemorphism invariance, but with the diffemorphism constrained to invertible affine mappings.} In \Cref{ssec:Wasserstein,ssec:Stein}, we will construct concrete examples such as affine invariant Wasserstein and Stein gradient flows that satisfy the affine invariance.

\subsubsection{Affine Invariant Mean-Field Dynamics}
{As mentioned in the Fisher-Rao gradient flow, another important factor when using the flow for sampling is its numerical implementation, for example via particle methods. In the following we introduce the affine invariance property at the particle level. More specifically we consider the use of mean-field It\^o SDEs of the form \eqref{eq:MFD}
and study its affine invariance property.}
The density of solution to \eqref{eq:MFD} is governed by a nonlinear Fokker-Planck equation
\begin{equation} 
\label{eq:MFD-2}
 \frac{\partial \rho_t}{\partial t} = -\nabla_\theta \cdot(\rho_t f) + \frac12\nabla_{\theta}\cdot\bigl(\nabla_\theta \cdot(hh^T \rho_t)\bigr).
\end{equation}
By choice of $f, h$ it may be possible to ensure that \cref{eq:MFD-2} coincides with
\cref{eq:GF}. Then an interacting particle system can be used to approximate
\cref{eq:MFD}, generating an empirical measure which approximates $\rho_t.$


\begin{definition}[Affine Invariant Mean-Field Dynamics]
\label{def: affine invariant mean-field equation}
 Consider the mean-field dynamics~\cref{eq:MFD}
and the affine transformation $\tilde\theta = \varphi(\theta) = A\theta + b$, {with $A$ invertible.}
The mean-field dynamics is called affine invariant, if
\begin{subequations}
\label{eq:aMFD-AI}
\begin{align}
\begin{split}
\label{eq:aMFD-AI-f}
&Af(\theta; \rho, \rho_{\rm post}) = f(\varphi(\theta); \varphi \# \rho, \varphi \# \rho_{\rm post}), 
\end{split}\\
\begin{split}
\label{eq:aMFD-AI-sigma}
&Ah(\theta; \rho, \rho_{\rm post}) = h(\varphi(\theta); \varphi \# \rho, \varphi \# \rho_{\rm post}),
\end{split}
\end{align}
\end{subequations}
for any {such invertible} affine transformation $\varphi$. Equivalently, this implies that $\tilde{\theta}_t = \varphi(\theta_t)$ satisfies a SDE of the same form as \cref{eq:MFD}:
\begin{equation}
\begin{aligned}
\dd \tilde{\theta}_t = f(\tilde{\theta}_t; \tilde{\rho}_t, \tilde{\rho}_{\rm post})\dd t + h(\tilde{\theta}_t; \tilde{\rho}_t,\tilde{\rho}_{\rm post}){\rm d}W_t, 
\end{aligned}
\end{equation}
where $\tilde{\rho}_t = \varphi \# \rho_t$ and $\tilde{\rho}_{\rm post} = \varphi \# \rho_{\rm post}$.
\end{definition}

Again, the key idea in the preceding definition is that, after the affine transformation, the mean field dynamics \cref{eq:MFD} remains the same. We will present affine invariant mean-field dynamics for the affine invariant gradient flows we constructed.

\subsection{Affine Invariant Wasserstein Gradient Flow} 
\label{ssec:Wasserstein}
In this subsection, we discuss the Wasserstein metric tensor\footnote{For completeness, we include some intuitions regarding the definition of the Wasserstein Riemannian metric in \Cref{appendix: intuition Wasserstein Riemannian Metric}.}  \cite{otto2001geometry, jordan1998variational, Lott08} that has been widely used to construct gradient flows. Define 
$\psi_\sigma$ to be the solution of the PDE 
\begin{equation}
\label{eq:liouville}
-\nabla_{\theta} \cdot (\rho \nabla_{\theta} \psi_\sigma)=\sigma.
\end{equation}
This definition requires specification of function spaces to ensure 
unique invertibility
of the divergence form elliptic operator; here we use it formally to derive flow equations.
The Wasserstein Riemannian metric has the form
\begin{equation}
\label{eqn-Wasserstein-Riemannian-metric}
    g_{\rho}^{\mathrm{W}}(\sigma_1,\sigma_2) = \int \rho(\theta) \nabla_{\theta} \psi_{\sigma_1}(\theta)^T  \nabla_{\theta} 
\psi_{\sigma_2}(\theta) \mathrm{d}\theta.
\end{equation}
The corresponding metric tensor satisfies {(see Example \ref{ex:first})}
\begin{align}
M^{\mathrm{W}}(\rho)^{-1} \psi = -\nabla_\theta \cdot (\rho \nabla_\theta \psi) \in T_{\rho}\PP.
\end{align}


{Recall the form of the Wasserstein gradient flow of the KL divergence given in Example \ref{ex:first},
along with the (trivial) mean-field Langevin dynamics.
The Wasserstein gradient flow is not affine invariant. Indeed, from \Cref{prop-convergence-wasserstein}, we observe that its convergence rate is not invariant upon affine transformations of the target distribution. To make it affine invariant we consider the following generalized Wasserstein metric tensor}
\begin{equation}
    \M^{\mathrm{AIW}}(\rho)^{-1} \psi = -\nabla_\theta \cdot (\rho P(\theta,\rho) \nabla_\theta \psi) \in T_{\rho}\PP,
\end{equation}
where $\Prec: \R^{N_\theta}\times \PP \to \R^{N_\theta\times N_\theta}_{\succ 0}$ is a preconditioner satisfying the following condition:

\begin{condition}
\label{condition:Preconditioner}
Consider any invertible affine transformation $\tilde\theta = \varphi(\theta) = A \theta + b$ and correspondingly $\tilde{\rho} = \varphi \# \rho$.  The preconditioning matrix satisfies
\begin{align} 
\label{eq:P-affine}
\Prec(\tilde\theta, \tilde\rho) = A \Prec(\theta, \rho) A^T.
\end{align}
\end{condition}

\begin{newremark}
    Examples of preconditioning matrices that satisfy \cref{eq:P-affine} include the covariance matrix $P(\theta,\rho) = C(\rho)$ and some local preconditioners,  such as \[P(\theta,\rho) = \bigl(\theta - m(\rho)\bigr)\bigl(\theta - m(\rho)\bigr)^T,\] or more generally, \[P(\theta,\rho) = \int (\theta' - m(\rho))(\theta' - m(\rho))^T\kappa(\theta,\theta', \rho) \rho(\theta') {\rm d}\theta'.\]
    Here $\kappa: \R^{N_{\theta}}\times\R^{N_{\theta}}\times\PP \rightarrow \R$ is a positive definite kernel for any fixed $\rho$, and it is affine invariant, namely $\kappa(\tilde\theta, \tilde\theta', \tilde\rho) = \kappa(\theta, \theta', \rho)$ 
    under any invertible affine transformation $\tilde\theta = \varphi(\theta) = A \theta + b$ and correspondingly $\tilde{\rho} = \varphi \# \rho$. A potential choice is $$ \kappa(\theta, \theta', \rho) = \exp\left(-\frac{1}{2}(\theta - \theta')^TC(\rho)^{-1}(\theta - \theta')\right).$$
\end{newremark}
When such condition is satisfied, the gradient flow is affine invariant:
\begin{theorem}
\label{thm:Wasserstein-affine-invariant}
Under the assumption on $P$ given in 
\Cref{condition:Preconditioner},
the associated gradient flow of the KL divergence with respect to the metric tensor $M^{\mathrm{AIW}}$, namely
\begin{equation}
\begin{aligned}    
\label{eq:AI-WGF}
\frac{\partial \rho_t(\theta)}{\partial t} 
&= \nabla_{\theta} \cdot \Bigl(\rho_t \Prec(\theta, \rho_t) (\nabla_{\theta} \log \rho_t  - \nabla_{\theta} \log \rho_{\rm post})\Bigr),
\end{aligned}
\end{equation}
is affine invariant.
\end{theorem}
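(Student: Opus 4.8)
The plan is to verify the affine invariance directly from \Cref{def: Affine Invariant Gradient Flow}: starting from a solution $\rho_t$ of \eqref{eq:AI-WGF}, push it forward by an arbitrary invertible affine map $\varphi(\theta) = A\theta + b$, compute the PDE satisfied by $\tilde\rho_t := \varphi \# \rho_t$, and check that it is exactly \eqref{eq:AI-WGF} with $\rho_{\rm post}$ replaced by $\tilde\rho_{\rm post} := \varphi \# \rho_{\rm post}$ and the preconditioner $P$ evaluated at the transformed arguments. The only structural inputs are the change-of-variables identity $\tilde\rho(\tilde\theta) = \rho(\varphi^{-1}(\tilde\theta))\,|\det A^{-1}|$ recorded in the Notation section, the corresponding transformation of gradients $\nabla_\theta = A^T \nabla_{\tilde\theta}$, and \Cref{condition:Preconditioner}, $P(\tilde\theta,\tilde\rho) = A\,P(\theta,\rho)\,A^T$.

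First I would record the elementary transformation rules. Writing $\theta = \varphi^{-1}(\tilde\theta) = A^{-1}(\tilde\theta - b)$, the chain rule gives, for any scalar field $g$, $\nabla_\theta\bigl(g\circ\varphi^{-1}\bigr)(\tilde\theta)$ in terms of $\nabla_{\tilde\theta}$; more usefully, if $g$ is viewed as a function of $\theta$, then as a function of $\tilde\theta$ we have $\nabla_{\tilde\theta} g = A^{-T}\nabla_\theta g$. In particular $\nabla_{\tilde\theta}\log\tilde\rho_t = A^{-T}\nabla_\theta\log\rho_t$ (the Jacobian factor $|\det A^{-1}|$ is constant and drops out of the logarithmic gradient) and likewise $\nabla_{\tilde\theta}\log\tilde\rho_{\rm post} = A^{-T}\nabla_\theta\log\rho_{\rm post}$, both evaluated at corresponding points. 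For the time derivative, since $\tilde\rho_t(\tilde\theta) = \rho_t(\varphi^{-1}(\tilde\theta))|\det A^{-1}|$ and $\varphi^{-1}$ is time-independent, $\partial_t\tilde\rho_t(\tilde\theta) = |\det A^{-1}|\,(\partial_t\rho_t)(\varphi^{-1}(\tilde\theta))$.

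Next I would substitute into the right-hand side of \eqref{eq:AI-WGF} written in the tilde variables and transport every factor back to the untilded variables. The combination $P(\tilde\theta,\tilde\rho_t)\bigl(\nabla_{\tilde\theta}\log\tilde\rho_t - \nabla_{\tilde\theta}\log\tilde\rho_{\rm post}\bigr)$ becomes $A\,P(\theta,\rho_t)A^T \cdot A^{-T}\bigl(\nabla_\theta\log\rho_t - \nabla_\theta\log\rho_{\rm post}\bigr) = A\,P(\theta,\rho_t)\bigl(\nabla_\theta\log\rho_t - \nabla_\theta\log\rho_{\rm post}\bigr)$, so the vector field inside the divergence is the original vector field left-multiplied by $A$. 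Multiplying by $\tilde\rho_t(\tilde\theta) = |\det A^{-1}|\rho_t(\theta)$, the bracketed expression equals $|\det A^{-1}|\,A\bigl(\rho_t P(\theta,\rho_t)(\nabla_\theta\log\rho_t - \nabla_\theta\log\rho_{\rm post})\bigr)$. Finally I would apply the divergence identity: for a vector field $V(\theta)$ and $\tilde V(\tilde\theta) := A\,V(\varphi^{-1}(\tilde\theta))$ one has $\nabla_{\tilde\theta}\cdot\tilde V = (\nabla_\theta\cdot V)\circ\varphi^{-1}$ (each $\partial_{\tilde\theta_i}$ contributes a factor $A^{-1}$, which cancels the $A$ in $\tilde V$). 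Hence $\nabla_{\tilde\theta}\cdot(\cdots) = |\det A^{-1}|\,\bigl(\nabla_\theta\cdot(\rho_t P(\theta,\rho_t)(\nabla_\theta\log\rho_t - \nabla_\theta\log\rho_{\rm post}))\bigr)\circ\varphi^{-1} = |\det A^{-1}|\,(\partial_t\rho_t)\circ\varphi^{-1} = \partial_t\tilde\rho_t$, which is precisely \eqref{eq:AI-WGF} in the tilde variables. Since $\varphi$ was an arbitrary invertible affine map, affine invariance in the sense of \Cref{def: Affine Invariant Gradient Flow} follows.

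I do not expect a genuine obstacle here; the result is a bookkeeping exercise in the change of variables, and the content is entirely concentrated in the scaling hypothesis \Cref{condition:Preconditioner}, which is exactly what is needed to make the $A$'s and $A^{-T}$'s cancel. The one point requiring a modicum of care is the divergence identity $\nabla_{\tilde\theta}\cdot(A V\circ\varphi^{-1}) = (\nabla_\theta\cdot V)\circ\varphi^{-1}$, which should be stated cleanly (it is the statement that the divergence of a pushed-forward vector field under an affine map, with the correct density weighting, transforms covariantly); tracking the constant Jacobian factor $|\det A^{-1}|$ consistently on both sides is the only place an error could creep in. For full rigor one would, as elsewhere in the paper, work formally at the level of smooth positive densities and not dwell on the function-space setting for the elliptic operator in \eqref{eq:liouville}.
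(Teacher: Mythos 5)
Your proposal is correct and follows essentially the same route as the paper's proof: both push the flow forward by $\varphi$, use the change-of-variables rules $\tilde\rho_t(\tilde\theta)=\rho_t(\theta)|\det A^{-1}|$ and $\nabla_{\tilde\theta}\log\tilde\rho_t=A^{-T}\nabla_\theta\log\rho_t$ together with the divergence identity $\nabla_\theta\cdot g=\nabla_{\tilde\theta}\cdot(A\tilde g)$ (the paper's Lemma in \Cref{appendix:affine-invariant}), and observe that \Cref{condition:Preconditioner} is exactly what makes the factors of $A$ and $A^{-T}$ cancel. No gaps.
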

The proof of this theorem is provided in \Cref{proof:Wasserstein-affine-invariant}. Henceforth we refer to $M^{\mathrm{AIW}}$ satisfying the condition of the
preceding proposition as an affine invariant Wasserstein metric tensor. For the specific choice of the preconditioner $P(\theta,\rho) = C(\rho)$, the covariance, the flow is called {the Kalman-Wasserstein gradient flow \cite{garbuno2020interacting,garbuno2020affine}; the underlying metric structure was first identified in \cite{reich2015probabilistic}.}
When $\rho_{\rm post}$ belongs to the Gaussian family, the Kalman-Wasserstein gradient flow provably achieves convergence rate $\bigO(e^{-t})$ \cite{garbuno2020interacting, garbuno2020affine,carrillo2021wasserstein}. It is worth mentioning that there will be an additional computational cost associated with evaluating $P(\theta,\rho)$. However, 
for many practical problems, the dominant computational expense lies in evaluating $\Phi_R$ and its derivatives, and the additional computational cost is negligible in comparison.

We may use the following mean field dynamics to simulate the affine invariant Wasserstein gradient flow~\eqref{eq:AI-WGF}:
\begin{equation} 
\label{eq:AI-Wasserstein-MD-Ct}
\begin{aligned}
\dd \theta_t &= 
\Prec(\theta_t, \rho_t)\nabla_{\theta} \log \rho_{\rm post}(\theta_t)\dd t\\
&\quad\quad\quad+ \Bigl(\bigl(D(\theta_t,\rho_t) - \Prec(\theta_t, \rho_t) \bigr)\nabla_{ \theta} \log \rho_t(\theta_t) + d(\theta_t, \rho_t)  
\Bigr)\dd t +  h(\theta_t, \rho_t){\rm d}W_t.
\end{aligned}
\end{equation}
Here $h : \R^{N_{\theta}} \times \PP   \rightarrow \R^{N_{\theta}\times N_{\theta}}$,
 $D(\theta,\rho) = \frac{1}{2}h(\theta,\rho)h(\theta,\rho)^T$ and 
$d(\theta,\rho) = \nabla_{\theta} \cdot D(\theta,\rho)$.    
 When $D(\theta,\rho_t)  -P(\theta_t,\rho_t) \neq 0$, the equation requires
knowledge of the score function $\nabla_{ \theta} \log \rho_t(\theta_t)$ of the current density; for this purpose, various approaches have been adopted in the literature \cite{maoutsa2020interacting, wang2022optimal, shen2022self, boffi2022probability}; see also \cite{song2020sliced} and references therein
for discussions of score estimation. 


At the particle level, we can prove the above dynamics is affine invariant; see \Cref{lem:AI-Wasserstein-MD} and its proof in \Cref{proof:AI-Wasserstein-MD}.
\begin{theorem}
\label{lem:AI-Wasserstein-MD}
    The mean-field dynamics~\cref{eq:AI-Wasserstein-MD-Ct} is affine invariant under the assumption on the preconditioner $P$ given in \Cref{condition:Preconditioner} 
    and the assumptions on $h$ given in \cref{eq:aMFD-AI-sigma}.
\end{theorem}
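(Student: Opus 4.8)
\textbf{Proof proposal for Theorem \ref{lem:AI-Wasserstein-MD}.}
The plan is to verify directly that the drift $f$ and diffusion coefficient $h$ appearing in \eqref{eq:AI-Wasserstein-MD-Ct} satisfy the intertwining relations \eqref{eq:aMFD-AI-f}--\eqref{eq:aMFD-AI-sigma} of \Cref{def: affine invariant mean-field equation}, which is exactly the assertion that $\tilde\theta_t = \varphi(\theta_t) = A\theta_t + b$ solves an SDE of the same form with $\rho_t, \rho_{\rm post}$ replaced by $\tilde\rho_t = \varphi\#\rho_t$, $\tilde\rho_{\rm post} = \varphi\#\rho_{\rm post}$. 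For the diffusion part, \eqref{eq:aMFD-AI-sigma} holds by assumption, so the main work is the drift. I would collect the transformation rules I need: from the change-of-variables formula, $\nabla_\theta \log \rho(\theta) = A^T \nabla_{\tilde\theta}\log\tilde\rho(\tilde\theta)$ (the Jacobian determinant contributes only a constant which drops under $\nabla_\theta$), and identically $\nabla_\theta\log\rho_{\rm post}(\theta) = A^T\nabla_{\tilde\theta}\log\tilde\rho_{\rm post}(\tilde\theta)$; from \Cref{condition:Preconditioner}, $P(\tilde\theta,\tilde\rho) = A P(\theta,\rho)A^T$; and from \eqref{eq:aMFD-AI-sigma}, $D(\tilde\theta,\tilde\rho) = \tfrac12 h h^T$ transforms as $D(\tilde\theta,\tilde\rho) = A D(\theta,\rho) A^T$.

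Next I would handle each term of the drift in \eqref{eq:AI-Wasserstein-MD-Ct}. The first term $P(\theta,\rho)\nabla_\theta\log\rho_{\rm post}(\theta)$ becomes, under the substitution, $A P(\theta,\rho)A^T \cdot A^{-T} \cdot A^{-1}\!\cdot\! A\,\nabla_\theta\log\rho_{\rm post}$ — more cleanly, $P(\tilde\theta,\tilde\rho)\nabla_{\tilde\theta}\log\tilde\rho_{\rm post}(\tilde\theta) = AP(\theta,\rho)A^T A^{-T}\nabla_\theta\log\rho_{\rm post}(\theta) = A\bigl(P(\theta,\rho)\nabla_\theta\log\rho_{\rm post}(\theta)\bigr)$, so this term satisfies the covariance rule $f_1(\tilde\theta,\tilde\rho,\tilde\rho_{\rm post}) = A f_1(\theta,\rho,\rho_{\rm post})$. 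The same computation applied to $\bigl(D(\theta,\rho) - P(\theta,\rho)\bigr)\nabla_\theta\log\rho(\theta)$ shows it transforms the same way, using that both $D$ and $P$ carry the $A(\cdot)A^T$ rule and $\nabla\log\rho$ carries the $A^T(\cdot)$ rule. The one term requiring a little care is the correction $d(\theta,\rho) = \nabla_\theta\cdot D(\theta,\rho)$: here I would compute $\nabla_{\tilde\theta}\cdot D(\tilde\theta,\tilde\rho)$ componentwise using $\partial_{\tilde\theta_i} = \sum_k (A^{-1})_{ki}\partial_{\theta_k}$ together with $D(\tilde\theta,\tilde\rho) = A D(\theta,\rho)A^T$, and check that the chain-rule factors telescope to give $d(\tilde\theta,\tilde\rho) = A\, d(\theta,\rho)$ (the constant-in-$\theta$ matrices $A, A^T$ pass through the divergence, and the $A^{-1}$ from the chain rule contracts against the $A^T$ to leave a single $A$ acting on the divergence-in-$\theta$). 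Combining the three terms gives $f(\tilde\theta,\tilde\rho,\tilde\rho_{\rm post}) = A f(\theta,\rho,\rho_{\rm post})$, which is \eqref{eq:aMFD-AI-f}.

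I expect the divergence term $d(\theta,\rho) = \nabla_\theta\cdot D(\theta,\rho)$ to be the only genuinely delicate point, since it is the one place where the chain rule generates the Jacobian $A^{-1}$ and one must confirm that it combines with the $A D A^T$ rule to produce exactly one factor of $A$ and nothing else — in particular that no spurious derivative of a Jacobian determinant survives (it does not, because $\det A$ is constant). Everything else is bookkeeping with the transformation rules assembled above. An alternative, slightly slicker route would be to bypass the direct computation by invoking that \eqref{eq:AI-Wasserstein-MD-Ct} is a particular mean-field representation of the affine invariant Wasserstein gradient flow \eqref{eq:AI-WGF} — whose affine invariance is \Cref{thm:Wasserstein-affine-invariant} — and to verify only that the chosen drift/diffusion pair is equivariant; but since \Cref{def: affine invariant mean-field equation} is stated at the level of $f$ and $h$, the cleanest exposition is the direct verification above.
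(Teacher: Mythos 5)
Your proposal is correct and follows essentially the same route as the paper's proof: both verify the intertwining relations \eqref{eq:aMFD-AI-f}--\eqref{eq:aMFD-AI-sigma} directly, using $\nabla_\theta\log\rho = A^T\nabla_{\tilde\theta}\log\tilde\rho$, the rule $P(\tilde\theta,\tilde\rho)=AP(\theta,\rho)A^T$, the induced rule $D(\tilde\theta,\tilde\rho)=AD(\theta,\rho)A^T$, and $d(\tilde\theta,\tilde\rho)=Ad(\theta,\rho)$ to show $Af(\theta;\rho,\rho_{\rm post})=f(\varphi(\theta);\varphi\#\rho,\varphi\#\rho_{\rm post})$. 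Your explicit index check that the chain-rule factor $A^{-1}$ contracts against $A^T$ in $\nabla_\theta\cdot D$ is in fact slightly more careful than the paper, which disposes of that step with ``similarly.''
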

In particular, let $C(\rho)$ denote the covariance matrix of $\rho.$
If we take $\Prec(\theta, \rho) = C(\rho)$ 
 and $h(\theta,\rho) = \sqrt{2C(\rho)}$, then we get the following affine invariant overdamped Langevin equation, introduced in \cite{garbuno2020interacting,garbuno2020affine}:
\begin{equation}
\begin{aligned}    
\label{eq:AI-Wasserstein-MD-Ct2}
\mathrm{d}\theta_t = 
C(\rho_t)\nabla_{\theta} \log \rho_{\rm post}(\theta_t)\mathrm{d}t 
+ \sqrt{2C(\rho_t)}{\rm d}W_t.
\end{aligned}
\end{equation}
Comparison with \cref{eq:lan} demonstrates that it is a preconditioned
{version of the standard overdamped Langevin dynamics \eqref{eq:mean-field-Wasserstein}.}

\subsection{Affine Invariant Stein Gradient Flow} 
\label{ssec:Stein}
In this subsection we discuss the widely used Stein metric tensor\footnote{For completeness, we include some intuitions in the definition of the Stein Riemannian metric in \Cref{appendix: intuition Stein Riemannian Metric}.} for constructing gradient flows. 
The Stein gradient flow \cite{liu2017stein, lu2019scaling, duncan2019geometry} is the continuous limit of the Stein variational gradient descent~\cite{liu2016stein}.


Let $\psi_\sigma$ solve the integro-partial
differential equation
\begin{equation}
\label{eq:liouville2}
 -\nabla_{\theta} \cdot \Bigl(\rho(\theta)  \int \kappa(\theta,\theta',\rho) \rho(\theta')\nabla_{\theta'} \psi_\sigma(\theta') \mathrm{d}\theta'\Bigr)=\sigma(\theta).
\end{equation}
Here $\kappa: \R^{N_{\theta}}\times\R^{N_{\theta}}\times\PP \rightarrow \R$ is a positive definite kernel for any fixed $\rho$. As before suitable function spaces need to be specified to ensure that
this equation is uniquely solvable; here we will treat it formally to derive flow equations. The Stein Riemannian metric is 
\begin{equation}
    g_{\rho}^{\mathrm{S}}(\sigma_1,\sigma_2) = \int\int \kappa(\theta,\theta',\rho)\rho(\theta)\nabla_\theta \psi_{\sigma_1}(\theta)^T \nabla_{\theta'}\psi_{\sigma_2}(\theta') \rho(\theta')\mathrm{d}\theta\mathrm{d}\theta'.
\end{equation}
The corresponding metric tensor satisfies
\begin{align}
\M^{\mathrm{S}}(\rho)^{-1} \psi =   -\nabla_{\theta} \cdot \Bigl(\rho(\theta)  \int \kappa(\theta,\theta',\rho) \rho(\theta')\nabla_{\theta'} \psi(\theta') \mathrm{d}\theta'\Bigr) \in T_{\rho}\PP.
\end{align}

The Stein gradient flow has the form
\begin{equation}
\begin{aligned}    
\label{eq:Stein-GF}
\frac{\partial \rho_t(\theta)}{\partial t} &=-\Bigl(\M^{\mathrm{S}}(\rho_t)^{-1}\frac{\delta \mathcal{E}}{\delta \rho}\Bigr|_{\rho=\rho_t}\Bigr)(\theta) \\
&= \nabla_{\theta}\cdot\Bigl(\rho_t(\theta)\int \kappa(\theta,\theta',\rho_t)\rho_t(\theta')\nabla_{\theta'} \bigl(\log\rho_t(\theta') - \log \rho_{\rm post}(\theta') \bigr)\mathrm{d}\theta' \Bigr),
\end{aligned}
\end{equation}
and it has the mean-field counterpart~\cite{liu2016stein,liu2017stein} in $\theta_t$ with the
law $\rho_t$:
\begin{equation}
\begin{aligned}    
\label{eq:particle-Stein}
\frac{\mathrm{d}\theta_t}{\mathrm{d}t} 
&= 
\int \kappa(\theta_t,\theta',\rho_t)\rho_t(\theta') \nabla_{\theta'}\bigl(\log \rho_{\rm post}(\theta') - \log\rho_t(\theta') \bigr)\mathrm{d}\theta' \\
&= 
\int \kappa(\theta_t,\theta',\rho_t)\rho_t(\theta') \nabla_{\theta'}\log \rho_{\rm post}(\theta') +  \rho_t(\theta')\nabla_{\theta'}\kappa(\theta_t,\theta', \rho_t)\mathrm{d}\theta'.
\end{aligned}
\end{equation}
Here, the second equality is obtained using integration by parts; it facilitates
an expression that avoids the score (gradient of the log density function of $\rho_t$). 
This is useful because, when implementing particle methods, the resulting integral can then be approximated directly by Monte Carlo methods.

The Stein gradient flow is in general not affine invariant. To make it affine invariant, we introduce the following metric tensor
\begin{align}
\M^{\mathrm{AIS}}(\rho)^{-1} \psi = - \nabla_{\theta}\cdot\Bigl(\rho(\theta)\int \kappa(\theta,\theta',\rho)\rho(\theta')\Prec( \theta, \theta',\rho)\nabla_{\theta'}\psi(\theta')\mathrm{d}\theta' \Bigr) \in T_{\rho}\PP.
\end{align}
Here, the preconditioner $\Prec: \R^{N_{\theta}}\times\R^{N_{\theta}}
\times \PP \to \R^{N_{\theta}\times N_{\theta}}$
is positive definite, which may take the form
of $\Prec(\theta, \theta',\rho) = L(\theta,\rho)L(\theta',\rho)^T$.
The kernel function and the preconditioner satisfy the following condition:
\begin{condition}
\label{Condition:Preconditioner-kernel}
Consider any invertible affine transformation $\tilde\theta = \varphi(\theta) = A \theta + b$ and correspondingly $\tilde{\rho} = \varphi \# \rho$; moreover $\tilde{\theta}' = \varphi(\theta')$. The preconditioning matrix and the kernel satisfy
$$ \kappa(\tilde\theta, \tilde\theta',\tilde\rho)\Prec(\tilde \theta, \tilde \theta',  \tilde\rho)
= \kappa(\theta, \theta', \rho) A \Prec(\theta, \theta', \rho) A^T.$$
\end{condition}

\begin{newremark}
    Examples satisfying \Cref{Condition:Preconditioner-kernel} can be obtained using an affine invariant kernel that fulfills the condition $\kappa(\tilde\theta, \tilde\theta', \tilde\rho) = \kappa(\theta, \theta', \rho)$, along with preconditioning matrices that satisfy \cref{eq:P-affine}.
\end{newremark}
Then, the corresponding gradient flow becomes affine invariant:
\begin{theorem}
\label{proposition:Stein-affine-invariant}
Under the assumption on $P$ and $\kappa$ given in \Cref{condition:Preconditioner}.
The associate gradient flow of the KL divergence with respect to the metric tensor $M^{\rm AIS}$, namely
\begin{equation}
\begin{aligned}    
\label{eq:AI-Stein}
\frac{\partial \rho_t(\theta)}{\partial t} &= \nabla_{\theta}\cdot(\mathsf{f})\\
\mathsf{f}&= \Bigl(\rho_t(\theta)\int \kappa(\theta,\theta',\rho_t)\rho_t(\theta')\Prec( \theta, \theta',\rho_t)\nabla_{\theta'} \bigl(\log\rho_t(\theta') - \log \rho_{\rm post}(\theta') \bigr)\dd \theta' \Bigr),
\end{aligned}
\end{equation}
is affine invariant.
\end{theorem}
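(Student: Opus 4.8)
The plan is to mimic the structure used for the affine invariant Wasserstein flow in \Cref{thm:Wasserstein-affine-invariant}, tracking how each ingredient of the gradient flow transforms under the affine map $\varphi(\theta) = A\theta+b$. First I would record the elementary transformation rules: if $\tilde\rho = \varphi\#\rho$ then $\tilde\rho(\tilde\theta) = \rho(\theta)/|\det A|$ with $\tilde\theta = A\theta+b$ (from the change-of-variables formula in the Notation subsection, since $\nabla_{\tilde\theta}\varphi^{-1} = A^{-1}$), and consequently $\nabla_{\tilde\theta}\log\tilde\rho(\tilde\theta) = A^{-T}\nabla_\theta\log\rho(\theta)$; the same holds for $\tilde\rho_{\rm post}$. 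Hence the score difference transforms as $\nabla_{\tilde\theta'}\bigl(\log\tilde\rho_t(\tilde\theta') - \log\tilde\rho_{\rm post}(\tilde\theta')\bigr) = A^{-T}\nabla_{\theta'}\bigl(\log\rho_t(\theta') - \log\rho_{\rm post}(\theta')\bigr)$. I would also note that the first variation of the KL divergence, as given in \eqref{eq:vare}, transforms consistently, so that the only thing to check is that the metric tensor $M^{\rm AIS}$ intertwines correctly with $\varphi$.

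Next I would compute the transformed right-hand side $\mathsf{f}$ in the $\tilde\theta$ variables. Changing the integration variable $\tilde\theta' = \varphi(\theta')$ introduces a factor $|\det A|\,\dd\theta'$ from $\dd\tilde\theta'$, which cancels against the $1/|\det A|$ in $\tilde\rho_t(\tilde\theta')$. Using \Cref{Condition:Preconditioner-kernel}, namely $\kappa(\tilde\theta,\tilde\theta',\tilde\rho)\Prec(\tilde\theta,\tilde\theta',\tilde\rho) = \kappa(\theta,\theta',\rho)\,A\,\Prec(\theta,\theta',\rho)\,A^T$, together with the score transformation above, the integrand becomes $\kappa(\theta,\theta',\rho_t)\,A\,\Prec(\theta,\theta',\rho_t)\,A^T\,A^{-T}\nabla_{\theta'}(\log\rho_t - \log\rho_{\rm post})(\theta')\,\rho_t(\theta')$; the $A^T A^{-T}$ collapses, leaving an overall left-multiplication by $A$ of the original integral, times $\tilde\rho_t(\tilde\theta) = \rho_t(\theta)/|\det A|$. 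So $\tilde{\mathsf{f}}(\tilde\theta) = \tfrac{1}{|\det A|}A\,\mathsf{f}(\theta)$. Finally I would apply the divergence-transformation identity $\nabla_{\tilde\theta}\cdot\bigl(A\,v(\varphi^{-1}(\tilde\theta))\bigr) = (\nabla_\theta\cdot v)(\theta)$ — which is the standard fact that pushing a vector field forward by a linear map $A$ and taking divergence in the new coordinates reproduces the original divergence — to conclude that $\partial_t\tilde\rho_t(\tilde\theta) = \tfrac{1}{|\det A|}\,\partial_t\rho_t(\theta) = \nabla_{\tilde\theta}\cdot\tilde{\mathsf{f}}$, which is exactly the flow \eqref{eq:AI-Stein} written in the tilde variables. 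This verifies \Cref{def: Affine Invariant Gradient Flow}.

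The bookkeeping with the Jacobian factors $|\det A|$ is the part most prone to sign/placement errors, so I would be careful to treat the three places they enter — in $\tilde\rho_t(\tilde\theta)$, in $\tilde\rho_t(\tilde\theta')$, and in $\dd\tilde\theta'$ — simultaneously and confirm that exactly two of them cancel, leaving the single $1/|\det A|$ that is then absorbed by the final divergence identity. I expect the main obstacle to be not any deep difficulty but rather stating the divergence-pushforward identity cleanly: one must be slightly careful that $\mathsf{f}$ is a genuine vector field (the argument of $\nabla_\theta\cdot$), that its pushforward under the linear part of $\varphi$ is $v\mapsto A v\circ\varphi^{-1}$, and that translations $b$ drop out of all gradients and divergences. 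I would relegate the detailed verification to \Cref{proof:Wasserstein-affine-invariant}-style computations in the appendix, remarking that the argument is identical in spirit to the proof of \Cref{thm:Wasserstein-affine-invariant}, with the extra integral over $\theta'$ handled by the single additional change of variables described above; the statement about $\kappa$ and $\Prec$ in \Cref{Condition:Preconditioner-kernel} is precisely what is needed to make the double-integral structure close.
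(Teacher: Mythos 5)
Your proposal is correct and follows essentially the same route as the paper's proof in \Cref{proof:Stein-affine-invariant}: the change of variables $\tilde\theta'=\varphi(\theta')$ in the inner integral (with $\tilde\rho_t(\tilde\theta')\,\dd\tilde\theta' = \rho_t(\theta')\,\dd\theta'$), the score transformation $\nabla_{\tilde\theta'}\log\tilde\rho = A^{-T}\nabla_{\theta'}\log\rho$, \Cref{Condition:Preconditioner-kernel} to collapse $A\Prec A^{T}A^{-T}$ to $A\Prec$, and the divergence identity of \Cref{lemma:change_variable} to absorb the remaining $A$ and the single leftover $|\det A|^{-1}$. Your Jacobian bookkeeping matches the paper's exactly; the only difference is cosmetic (you compute the transformed vector field $\tilde{\mathsf{f}}$ forward, whereas the paper manipulates the right-hand side of the transformed equation until the condition on $\kappa$ and $\Prec$ emerges as sufficient).
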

The proof of this theorem can be found in~\Cref{proof:Stein-affine-invariant}. Furthermore, we can construct the following mean-field dynamics to simulate the flow:
\begin{equation}
\begin{aligned}    
\label{eq:AI-Stein-MD}
\frac{\dd \theta_t}{\dd t} 
&= 
\int \Bigl(\kappa(\theta_t,\theta', \rho_t)\rho_t(\theta') \Prec( \theta_t, \theta', \rho_t ) \nabla_{\theta'}\log \rho_{\rm post}(\theta')\\
&\qquad\qquad\qquad\qquad\qquad\qquad +  \nabla_{\theta'}\cdot (\kappa(\theta_t,\theta',\rho_t) \Prec(\theta_t, \theta', \rho_t) ) \rho_t(\theta')\Bigr) \dd \theta'.
\end{aligned}
\end{equation}
Similarly, we can prove the affine invariance of the above mean field dynamics; see \Cref{lem:AI-Stein-MD} and its proof in \Cref{proof:AI-Stein-MD}.
\begin{theorem}
\label{lem:AI-Stein-MD}
The mean-field dynamics \cref{eq:AI-Stein-MD} is affine invariant under \Cref{Condition:Preconditioner-kernel}.
\end{theorem}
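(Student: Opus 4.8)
The plan is to verify the two identities \eqref{eq:aMFD-AI-f} and \eqref{eq:aMFD-AI-sigma} of \Cref{def: affine invariant mean-field equation} directly, where the drift $f$ is read off from \eqref{eq:AI-Stein-MD} and the diffusion coefficient $h$ is identically zero (the dynamics \eqref{eq:AI-Stein-MD} is a deterministic ODE), so \eqref{eq:aMFD-AI-sigma} holds trivially. Thus the whole content is checking the drift identity $A f(\theta;\rho,\rho_{\rm post}) = f(\varphi(\theta); \varphi\#\rho, \varphi\#\rho_{\rm post})$ for every invertible affine map $\varphi(\theta)=A\theta+b$.

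First I would set up notation: write $\tilde\theta = \varphi(\theta) = A\theta+b$, $\tilde\rho = \varphi\#\rho$, $\tilde\rho_{\rm post}=\varphi\#\rho_{\rm post}$, and recall from the Notation subsection the change-of-variables formula $\tilde\rho(\tilde\theta) = \rho(\varphi^{-1}(\tilde\theta))|\det A|^{-1}$. The key auxiliary computations are: (i) the score transforms as $\nabla_{\tilde\theta}\log\tilde\rho_{\rm post}(\tilde\theta) = A^{-T}\nabla_\theta\log\rho_{\rm post}(\theta)$, since the Jacobian factor $|\det A|^{-1}$ is constant and drops under the gradient of the log; (ii) the pushforward identity for integrals against $\tilde\rho_{\rm post}(\tilde\theta')\,\dd\tilde\theta'$, namely $\int g(\tilde\theta')\tilde\rho_{\rm post}(\tilde\theta')\,\dd\tilde\theta' = \int g(\varphi(\theta'))\rho_{\rm post}(\theta')\,\dd\theta'$, which follows from the duality definition of $\#$; and (iii) the change of variables in the divergence term $\nabla_{\tilde\theta'}\cdot(\,\cdot\,)$, using $\nabla_{\tilde\theta'} = A^{-T}\nabla_{\theta'}$. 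Feeding these into \eqref{eq:AI-Stein-MD} written at $\tilde\theta_t=\varphi(\theta_t)$: the kernel factor is invariant, $\kappa(\tilde\theta_t,\tilde\theta',\tilde\rho) = \kappa(\theta_t,\theta',\rho)$, and by \Cref{Condition:Preconditioner-kernel} the product $\kappa\cdot\Prec$ picks up exactly a conjugation by $A$, i.e. $\kappa(\tilde\theta_t,\tilde\theta',\tilde\rho)\Prec(\tilde\theta_t,\tilde\theta',\tilde\rho) = \kappa(\theta_t,\theta',\rho)\,A\,\Prec(\theta_t,\theta',\rho)\,A^T$. Combining with the $A^{-T}$ from the transformed score in the first term, the $A^T$ from the conjugated preconditioner and the $A^{-T}$ from $\nabla_{\tilde\theta'}$ in the second term, one finds that the integrand of the transformed drift equals $A$ times the original integrand, so $f(\varphi(\theta_t);\tilde\rho,\tilde\rho_{\rm post}) = A f(\theta_t;\rho,\rho_{\rm post})$, which is precisely \eqref{eq:aMFD-AI-f}.

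Once the drift identity is established, I would invoke the equivalent characterization stated at the end of \Cref{def: affine invariant mean-field equation}: \eqref{eq:aMFD-AI-f}–\eqref{eq:aMFD-AI-sigma} imply that $\tilde\theta_t=\varphi(\theta_t)$ solves an SDE (here an ODE) of the same form as \eqref{eq:MFD} driven by the same coefficients, which is the definition of affine invariance. A clean way to present this is to differentiate $\tilde\theta_t = A\theta_t + b$ directly: $\dd\tilde\theta_t = A\,\dd\theta_t = A f(\theta_t;\rho_t,\rho_{\rm post})\,\dd t = f(\tilde\theta_t;\tilde\rho_t,\tilde\rho_{\rm post})\,\dd t$, using the drift identity in the last step and the fact that the law of $\tilde\theta_t$ is $\tilde\rho_t = \varphi\#\rho_t$ by definition of pushforward.

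The main obstacle is purely bookkeeping: tracking the various factors of $A$, $A^{-1}$, $A^T$, $A^{-T}$ and $|\det A|$ through the score gradient, the inner integral, the preconditioner, and the outer divergence, and confirming they cancel to leave a single leading $A$. The one place requiring a little care is the divergence term $\nabla_{\theta'}\cdot(\kappa\Prec)\rho$ in \eqref{eq:AI-Stein-MD}: under the substitution $\tilde\theta'=\varphi(\theta')$ one must correctly combine $\nabla_{\tilde\theta'}\cdot\big(\kappa(\tilde\theta_t,\tilde\theta',\tilde\rho)\Prec(\tilde\theta_t,\tilde\theta',\tilde\rho)\big)$ with the $|\det A|$ from $\dd\tilde\theta'=|\det A|\,\dd\theta'$ and the $|\det A|^{-1}$ from $\tilde\rho(\tilde\theta')$, and use that $\nabla_{\tilde\theta'}\cdot(A M(\theta')A^T) = A\,\nabla_{\theta'}\cdot(M(\theta'))$ for a matrix field $M$ when the $\tilde\theta'$-derivative is converted via the constant Jacobian $A^{-T}$. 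This is the computation most prone to sign/transpose errors, but it is otherwise routine, and the details are deferred to \Cref{proof:AI-Stein-MD}.
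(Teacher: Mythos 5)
Your proposal is correct and follows essentially the same route as the paper: verify the drift identity \eqref{eq:aMFD-AI-f} directly via the score transformation $\nabla_{\theta}\log\rho = A^T\nabla_{\tilde\theta}\log\tilde\rho$, the change of variables $\tilde\rho(\tilde\theta')\,\dd\tilde\theta' = \rho(\theta')\,\dd\theta'$, and \Cref{Condition:Preconditioner-kernel}, then invoke \Cref{def: affine invariant mean-field equation}. The only (cosmetic) difference is that the paper carries out the bookkeeping on the pre-integration-by-parts form of the drift, $\int \kappa P\,\nabla_{\theta'}(\log\rho-\log\rho_{\rm post})\rho\,\dd\theta'$, which sidesteps the matrix-divergence term, whereas you work with \eqref{eq:AI-Stein-MD} as written and therefore need the additional (true, and correctly anticipated) identity $\nabla_{\tilde\theta'}\cdot\bigl(A M A^T\bigr) = A\,\nabla_{\theta'}\cdot M$ for the term $\nabla_{\theta'}\cdot(\kappa P)\rho$.
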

For a survey of convergence analysis of these gradient flows we refer to \cite[Section 3.5]{chen2023gradient}.
\subsection{Illustrative Numerical Experiments}
\label{sec-affine-invariance-experiments}
In this subsection, we conduct numerical experiments comparing the affine invariant Wasserstein and Stein gradient flows with their non-affine invariant versions. We focus our experiments on three two-dimensional  posterior distributions. In defining
them we use the notation $\theta = [\theta^{(1)},\theta^{(2)}]^T \in \R^2$.
\begin{enumerate}
    \item \emph{Gaussian Posterior.} 
        $$
    \V(\theta) = \frac{1}{2}\theta^{T}
    \begin{bmatrix}
      1 & 0\\
      0 & \lambda
    \end{bmatrix}
    \theta
    \quad \textrm{with} \quad \lambda = 0.01,\,0.1,\,1.
    $$
    We initialize the gradient flows from 
    $$
    \theta_0 \sim \N\Bigl(
    \begin{bmatrix}
  10\\
  10
    \end{bmatrix}
, 
\begin{bmatrix}
  \frac{1}{2} & 0\\
  0 & 2
\end{bmatrix}\Bigr).
$$
    \item \emph{Logconcave Posterior}. 
    $$
\V(\theta) = \frac{(\sqrt{\lambda}\theta^{(1)} - \theta^{(2)})^2}{20} +\frac{(\theta^{(2)})^4}{20} \quad \textrm{with} \quad \lambda = 0.01,\,0.1,\,1.
$$
We initialize the gradient flows from 
$
\theta_0 \sim \N\Bigl(
\begin{bmatrix}
  10\\
  10
\end{bmatrix}
, 
\begin{bmatrix}
  4 & 0\\
  0 & 4
\end{bmatrix}\Bigr)
$.

    \item \emph{General Posterior.} 
$$
\V(\theta) = \frac{\lambda( \theta^{(2)} - (\theta^{(1)})^2)^2}{20} + \frac{(1 - \theta^{(1)})^2}{20}  \quad \textrm{with} \quad \lambda = 0.01,\,0.1,\,1.
$$
This example is known as the Rosenbrock function~\cite{goodman2010ensemble}. We initialize the gradient flows from 
$$
\theta_0 \sim \N\Bigl(
\begin{bmatrix}
  0\\
  0
\end{bmatrix}
, 
\begin{bmatrix}
  4 & 0\\
  0 & 4
\end{bmatrix}\Bigr).
$$
\end{enumerate}
In all the examples, the parameter $\lambda$ controls the anisotropy of the distribution.
We will use several summary statistics to compare the resulting samples with the ground truth. They are $\E[\theta]$, the covariance $\Cov[\theta]$, and $\E[\cos(\omega^T \theta + b)]$; for the last summary statistics we randomly draw $\omega\sim\N(0, I)$ and $b\sim \textrm{Uniform}(0, 2\pi)$ and report the average over 20 random draws of $\omega$ and $b$. The ground truths of these summary statistics are evaluated by integrating $\rho_{\rm post}$ numerically (See \Cref{sec:integration} for detailed formula of numerical integration). We will compare the performance of the following gradient flows (GFs) in our experiments. 
\begin{itemize}
    \item Wasserstein GF: The Wasserstein gradient flow, which is implemented by the Langevin dynamics 
    \eqref{eq:lan}.
    \item Affine invariant Wasserstein GF: The affine invariant Wasserstein gradient flow with $P(\theta, \rho) = C(\rho)$ and $h(\theta,\rho) = \sqrt{2C(\rho)}$; this is implemented by the stochastic interacting particle approximation of the
    {preconditioned Langevin dynamics} \eqref{eq:AI-Wasserstein-MD-Ct2}.
    \item Stein GF: The Stein gradient flow with 
\begin{equation}
\label{eq:Stein-kernel}
    \kappa(\theta, \theta', \rho) = (1 + 4\log(J+1)/N_{\theta})^{N_{\theta}/2}\exp(-\frac{1}{h}\lVert\theta - \theta'\rVert^2),
\end{equation}
which is implemented as a deterministic interacting particle dynamics~\eqref{eq:particle-Stein}.
Here $h = \textrm{med}^2/\log(J+1)$; $\textrm{med}^2$ is the squared median of the pairwise Euclidean distance between the current particles, following~\cite{liu2016stein}.
    \item Affine invariant Stein GF: The affine invariant Stein gradient flow with 
\begin{equation}
\label{eq:affine-invariant-Stein-kernel}
P = C(\rho), \quad \kappa(\theta, \theta',\rho) =(1 + 2/N_\theta)^{N_{\theta}/2}\exp(-\frac{1}{2N_\theta}(\theta - \theta')^TC(\rho)^{-1}(\theta - \theta')),
\end{equation}
which is implemented as a deterministic interacting particle dynamics~\eqref{eq:AI-Stein-MD}.
\end{itemize}
Here the scaling constants\footnote{In the case of the Stein GF, there is no analytical formula for the integral~\cref{eq:kernel-const}. To deal with this, we estimate the scaling constant by replacing 
$\textrm{med}^2\I$ with $N_\theta C(\rho)$, for which we can analytically compute~\cref{eq:kernel-const}.} in the above definition of kernel functions are chosen such that
\begin{align}
\label{eq:kernel-const}
    \int \int \kappa(\theta, \theta', \rho) \N(\theta, m, C(\rho)) \N(\theta', m, C(\rho)) \dd \theta \dd \theta' = 1.
\end{align} 
This choice makes the Stein gradient flows comparable with the Wasserstein 
gradient flows in terms of time scales, for a fair comparison. In practical implementation, it is recommended to drop these scaling constants.
 All the gradient flows are implemented by interacting particle systems with $J =100$ particles (except for the Wasserstein GF, {using \eqref{eq:lan}, where particles do not interact}).

For the Gaussian posterior, the convergence of different gradient flows, according to the three summary statistics, are presented in~\Cref{fig:Gaussian_gd_particle_converge}.  The imposition of the affine invariance property makes the convergence rate independent of the anisotropy $\lambda$ and accelerates the sampling for badly scaled Gaussian ($\lambda = 0.01$).
We note that all these gradient flows 
do not converge within machine precision because of the limited number of particles.

\begin{figure}[ht]
\centering
    \includegraphics[width=0.8\textwidth]{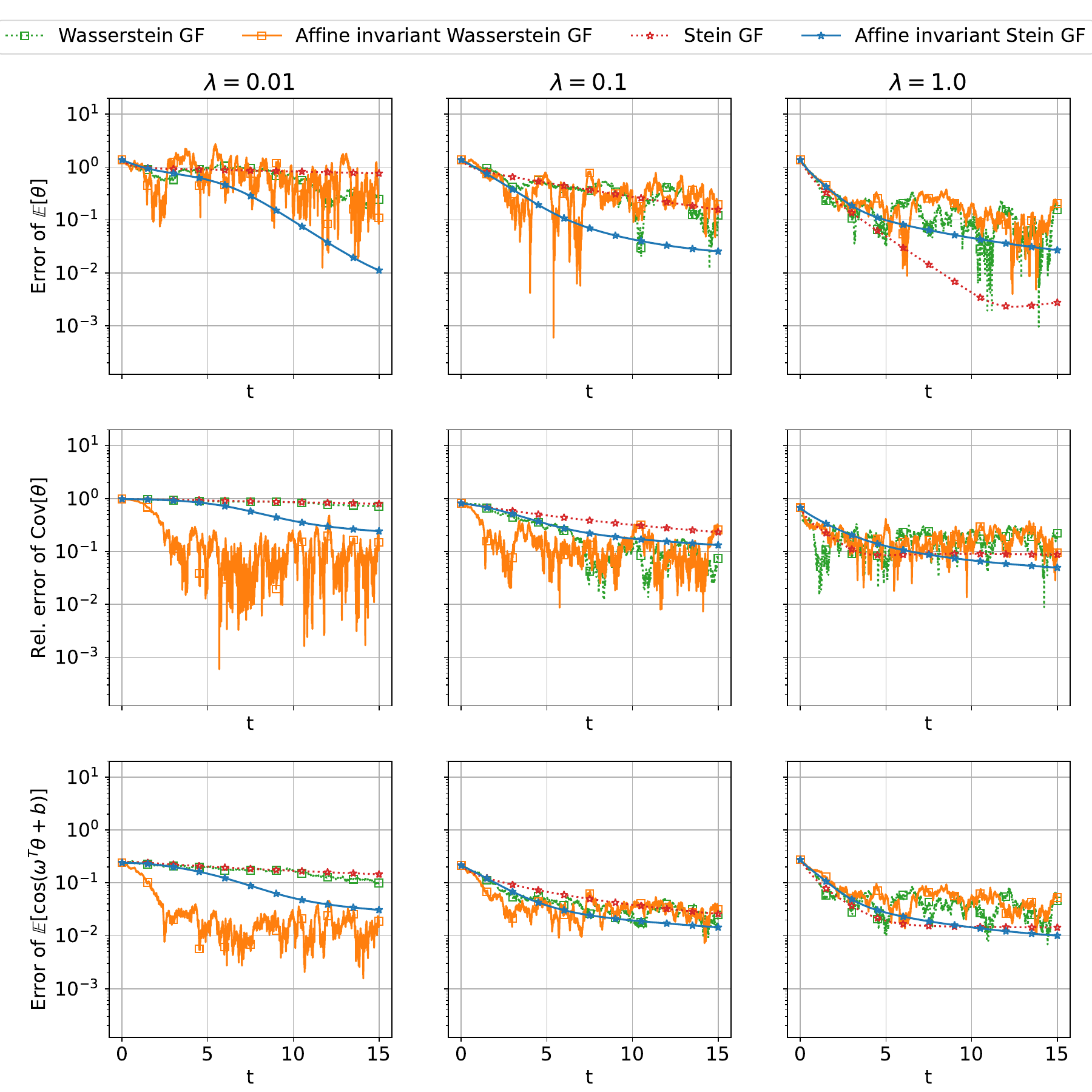}
    \caption{Gaussian posterior case: convergence of different gradient flows in terms of the $L^2$ error of $\E[\theta_t]$, the relative Frobenius norm error of the covariance $ \frac{\lVert \Cov[\theta_t] - \Cov[\theta_{{\rm true}}]\rVert_F}{\lVert \Cov[\theta_{\rm true}]\rVert_F}$, and the error of $\E[\cos(\omega^T \theta_t + b)]$. }
    \label{fig:Gaussian_gd_particle_converge}
\end{figure}

For the logconcave posterior, the results are shown in \Cref{fig:Logconcave_gd_particle_converge}. Again, the imposition of the affine invariance property makes the convergence rate independent of the anisotropy $\lambda$ and accelerates the sampling in the highly anisotropic case ($\lambda = 0.01$).
\begin{figure}[ht]
\centering
    \includegraphics[width=0.8\textwidth]{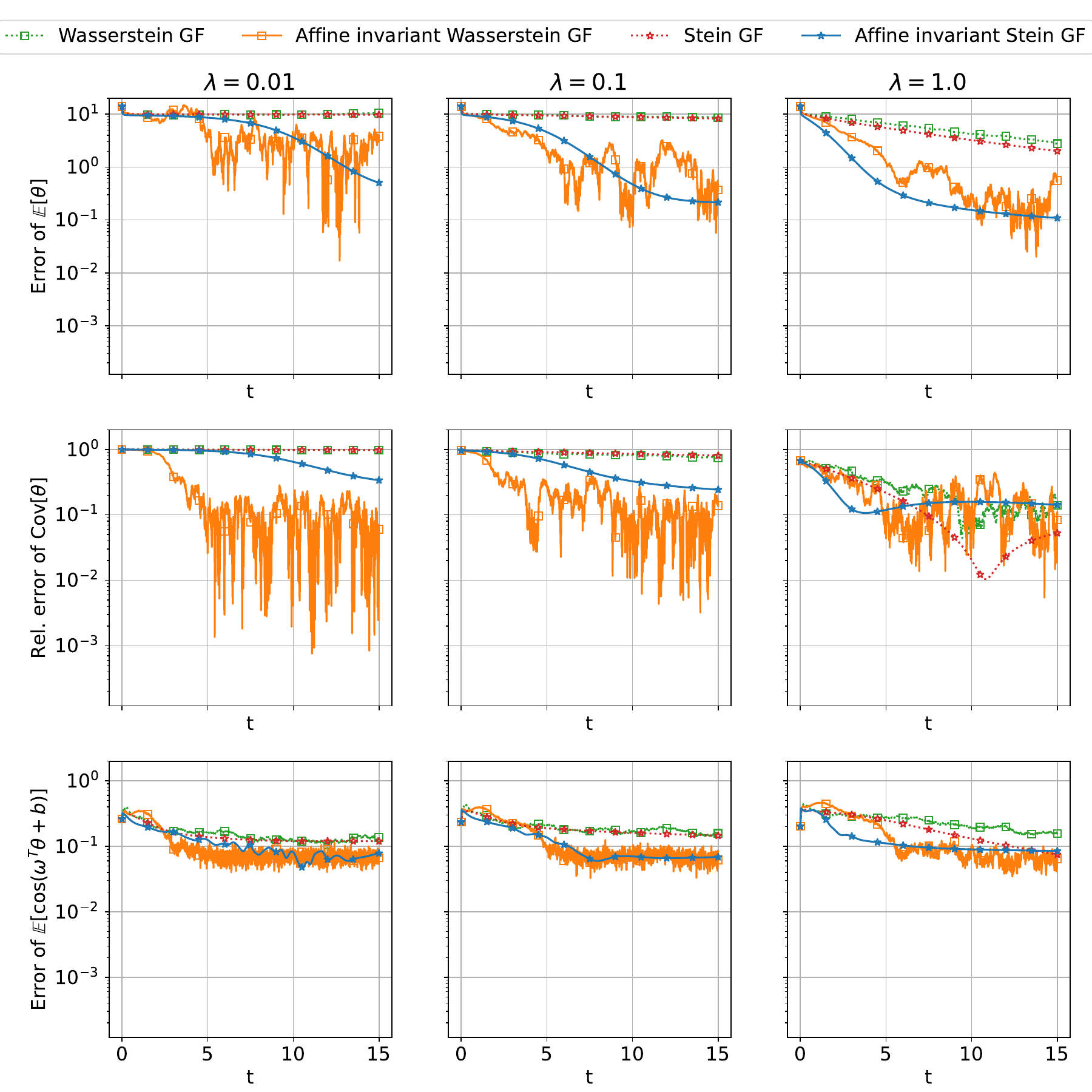}
    \caption{Logconcave posterior case: convergence of different gradient flows in terms of the $L^2$ error of $\E[\theta_t]$, the relative Frobenius norm error of the covariance $ \frac{\lVert \Cov[\theta_t] - \Cov[\theta_{{\rm true}}]\rVert_F}{\lVert \Cov[\theta_{\rm true}]\rVert_F}$, and the error of $\E[\cos(\omega^T \theta_t + b)]$.}
    \label{fig:Logconcave_gd_particle_converge}
\end{figure}

For the general posterior, we note that the Rosenbrock function is a non-convex function. Its minimum is at $[1, 1]$. The expectation and covariance of the posterior density function is (See \Cref{sec:integration}) 
$$
\E[\theta] = \begin{bmatrix}
  1\\
  11
\end{bmatrix} \qquad 
\Cov[\theta] = \begin{bmatrix}
  10& 20\\
  20& \frac{10}{\lambda} + 240
\end{bmatrix}.
$$ The particles obtained by different gradient flows at $t=15$ are depicted in~\Cref{fig:Rosenbrock_gd_particle_density}, and their convergence behaviors according to the three summary statistics are depicted in~\Cref{fig:Rosenbrock_gd_particle_converge}.
For small $\lambda$ (e.g., $\lambda = 0.01$), ${\theta^{(2)}}$ is the stretch direction, and therefore the imposition of the affine invariance property makes the convergence faster.  
However, when $\lambda$ increases, the posterior density concentrates on
a manifold with significant curvature (See \Cref{fig:Rosenbrock_gd_particle_density}). 
Although the particle positions match well with the density contours, the convergence of summary statistics significantly deteriorates; the imposition of 
affine invariance does not {significantly relieve the issue.} This indicates that additional structures need to be explored for sampling {densities such as this, which
concentrate} on a curved manifold.
\begin{figure}[ht]
\centering
    \includegraphics[width=0.8\textwidth]{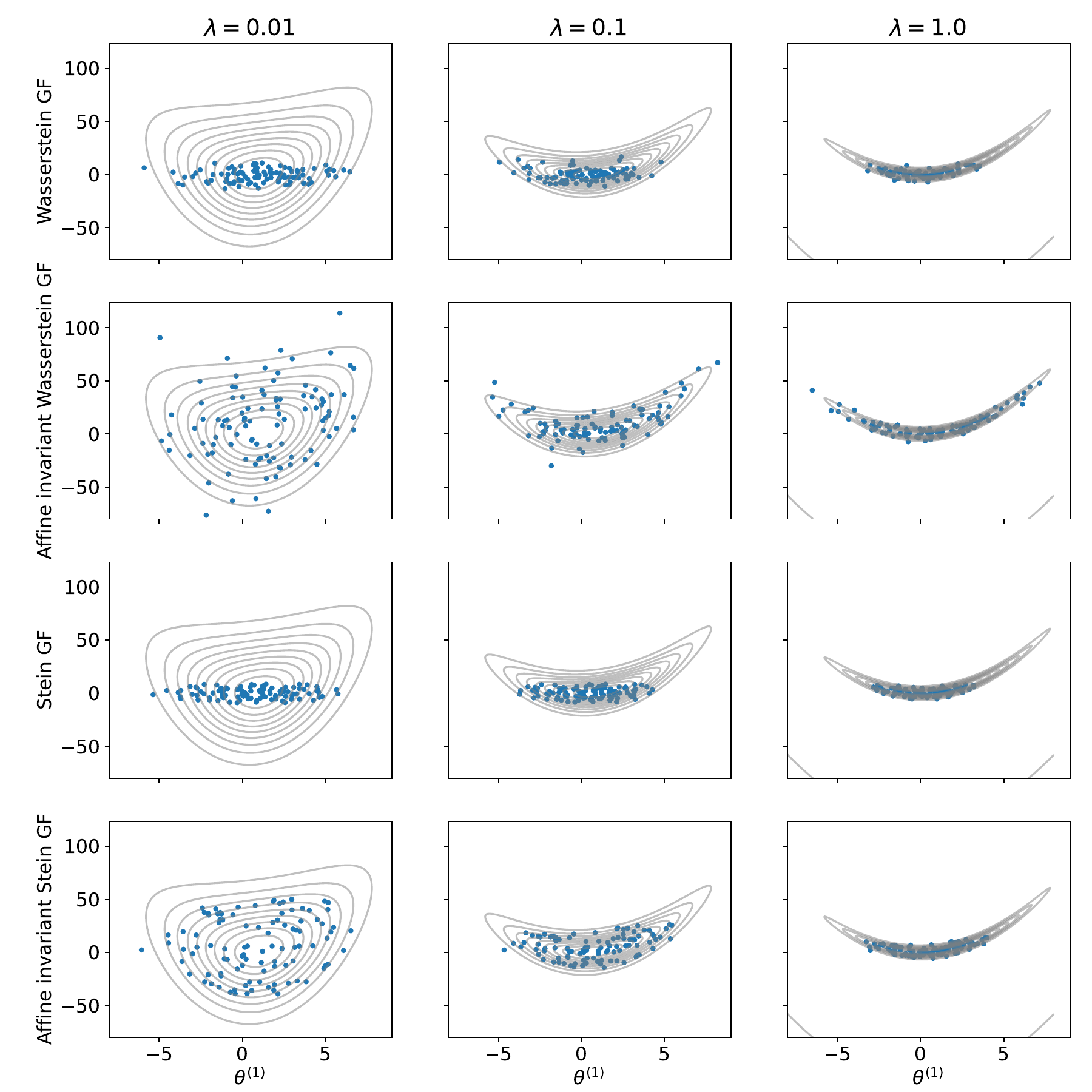}
    \caption{General posterior case: particles obtained by different gradient flows at $t=15$. Grey lines represent the contour of the true posterior.}
    \label{fig:Rosenbrock_gd_particle_density}
\end{figure}

\begin{figure}[ht]
\centering
    \includegraphics[width=0.8\textwidth]{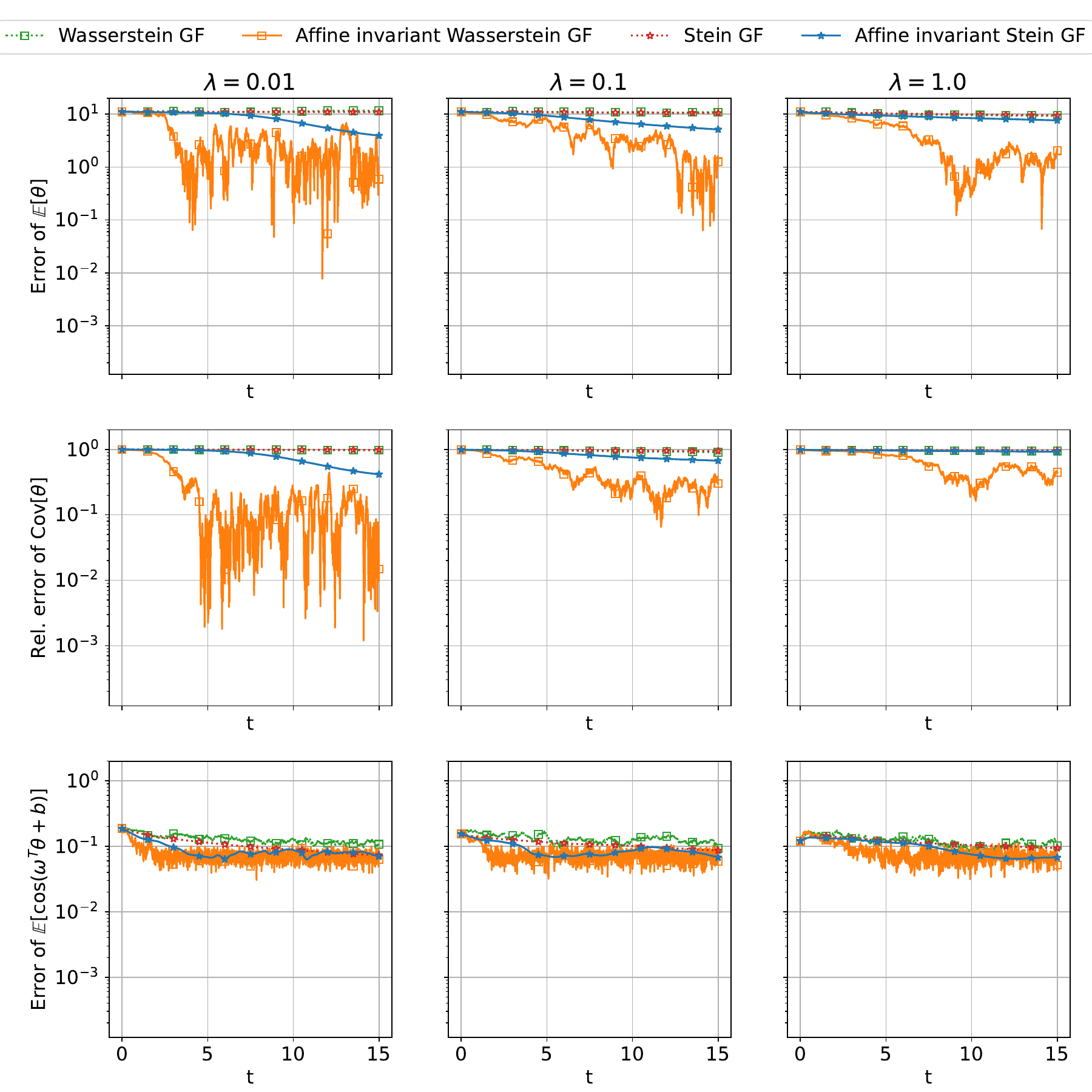}
    \caption{General posterior case: convergence of different gradient flows in terms of the $L^2$ error of $\E[\theta_t]$, the relative Frobenius norm error of the covariance $ \frac{\lVert \Cov[\theta_t] - \Cov[\theta_{{\rm true}}]\rVert_F}{\lVert \Cov[\theta_{\rm true}]\rVert_F}$, and the error of $\E[\cos(\omega^T \theta_t + b)]$.}
    \label{fig:Rosenbrock_gd_particle_converge}
\end{figure}

Moreover, the convergence curves for the affine invariant Wasserstein gradient flow, implemented with the Langevin dynamics, oscillate slightly due to the added noise; those obtained from the affine invariant Stein gradient flow, implemented by Stein variational gradient descent, are smooth~(See \Cref{fig:Gaussian_gd_particle_converge,fig:Logconcave_gd_particle_converge,fig:Rosenbrock_gd_particle_converge}). The added noise helps for sampling non-Gaussian, highly anisotropic posteriors in comparison with the affine invariant Stein gradient flow~(See \Cref{fig:Rosenbrock_gd_particle_density,fig:Rosenbrock_gd_particle_converge}).

As a summary, affine invariant gradient flows, when implemented via particle methods, typically outperform their non-affine invariant versions, especially when the target distribution is highly anisotropic. 
\section{Gaussian Approximate Gradient Flow}
\label{sec:GGF}
An alternative to using particle methods to approximate a density is through parametric approximations. In this section, we focus on Gaussian approximations of the gradient flows in the density space. {The resulting Gaussian approximate gradient flows can be understood as gradient flows in the parameter space of Gaussian distributions, 
within the framework of variational inference} \cite{jordan1999introduction,wainwright2008graphical,blei2017variational}.

In \Cref{ssec:GGF} and \Cref{sec-gaussian-moment-closure} we introduce Gaussian approximations based on metric projection and moment closure respectively. We show their equivalence under certain conditions in \Cref{sec-equivalence-gaussian-approx}. Based on this equivalence, we use the moment closure approach to derive different Gaussian approximate gradient flows. In \Cref{ssec:Gaussian-Fisher-Rao}
we introduces several explicit Gaussian approximate gradient flows, establish their connections and discuss their relation to natural gradient methods in variational inference. In \Cref{ssec:GGF-converge} we provide convergence analysis for these Gaussian approximate gradient flows, showing that the affine invariance property is beneficial for the convergence rates of the flows, for Gaussian and logconcave target distributions. We conduct illustrative numerical experiments in   \Cref{sec-gaussian-numerics} to compare these gradient flows further; the results validate our theoretical analysis.

\subsection{Gaussian Approximation via Metric Projection}

\label{ssec:GGF}
Consider the Gaussian density space
\begin{subequations}
\label{eq:PPG}
\begin{align}
    \PPG &:= \Bigl\{\rho_{\rhoa}: \rho_{\rhoa}(\theta) = \frac{\exp\bigl({-\frac{1}{2}(\theta - m)^T C^{-1} (\theta - m)}\bigr)}{\sqrt{|2\pi C|}} \textrm{ with } \rhoa =(m,C) \in \mathcal{A}\Bigr\},\label{eq:PG-a}\\ 
    \mathcal{A}&=
    \Bigl\{(m,C) : m\in\R^{N_\theta},\,C\succ 0\in \R^{N_\theta \times N_\theta}
    \Bigr\}.
\end{align}
\end{subequations}
Instead of minimizing $\rho$ over $\PP$ as in the gradient flow in the whole probability density space, one can minimize over 
Gaussian density space $\PPG \subset \PP$. Given an energy functional $\EE$, this consideration leads to the following constrained minimization problem:
\begin{equation}
\label{eq:Gaussian-KL0}
\min_{a \in \mathcal{A}} \EE(\rho_a;\rho_{\rm post}) .
\end{equation}
This forms the basis of Gaussian variational inference. For the reasons explained in \Cref{sec:Energy} we choose $\EE$
to be the KL divergence.
Given a metric tensor $M(\rho)$ in the tangent space of $\PP$, we can define the projected metric tensor in the tangent space of $\mathcal{A}$ as
\begin{equation}
\label{eq:mra}
\fM(\rhoa):=\int \nabla_\rhoa \rho_\rhoa(\theta) \bigl(M(\rho_a)\nabla_\rhoa \rho_\rhoa^T(\theta)\bigr)\dd \theta,
\end{equation}
{defined by}
\begin{equation}
\label{eq:mra2}
\langle  \fM(a)\sigma_1, \sigma_2 \ranglen  = \langle M(\rho_{a})\nabla_\rhoa \rho_{\rhoa} \cdot \sigma_1, \nabla_\rhoa \rho_{\rhoa} \cdot \sigma_2 \rangle, \quad \forall \sigma_1,\sigma_2\in\R^{N_{\rhoa}},
\end{equation}
where $N_{\rhoa}$ is the dimension of the parameter $a$.
{Given the metric tensor,} we can write down the resulting gradient flow equation in $\mathcal{A}$:
\begin{align}
\label{eq:G-GF}
   \frac{\partial \rhoa_t}{\partial t} = -\fM(\rhoa_t)^{-1}\left.\frac{\partial  \EE(\rho_{\rhoa};\rho_{\rm post})}{\partial \rhoa } \right|_{\rhoa=\rhoa_t}.
\end{align}
This leads to a flow of the density $\rho_{a_t}$ that stays in the Gaussian space $\PP^G$. We remark that the above metric projection approach is standard in Riemannian geometry to obtain constrained metrics and gradient flows \cite{do1992riemannian}.

\subsection{Gaussian Approximation via Moment Closures}
\label{sec-gaussian-moment-closure}
For any gradient flow in the density space \cref{eq:GF}, we can also consider the following moment closure approach to get a Gaussian approximation. First, we write down
evolution equations for the mean and covariance under \cref{eq:GF} noting that they
satisfy the following identities:
\begin{equation}
\begin{aligned}
\label{eq:mC-Momentum}
&\frac{\dd  m_t}{\dd t} = \frac{\dd }{\dd t}\int  \rho_t(\theta) \theta \dd \theta = -\int 
M(\rho_t)^{-1}\frac{\delta \mathcal{E}}{\delta \rho}\Bigr|_{\rho = \rho_t} \theta \dd \theta, \\
&\frac{\dd  C_t}{\dd t} = \frac{\dd }{\dd t}\int  \rho_t(\theta) (\theta - m_t)(\theta - m_t)^T \dd \theta = -\int M(\rho_t)^{-1}\frac{\delta \mathcal{E}}{\delta \rho}\Bigr|_{\rho = \rho_t}  (\theta - m_t)(\theta - m_t)^T \dd \theta.
\end{aligned}
\end{equation}
This is not, in general, a closed system for the
mean and covariance; this is because $\rho_t$ is not, in general,
determined by only the first and second moments. To close the system, we replace $\rho_t$ by $\rho_{\rhoa_t} = \N(m_t, C_t)$ in \cref{eq:mC-Momentum}. Then we obtain the following closed system for the evolution of
$(m_t,C_t)$:
\begin{equation}
\begin{aligned}
\label{eq:mC-Momentum2}
&\frac{\dd  m_t}{\dd t} = -\int 
M(\rho_{a_t})^{-1}\frac{\delta \mathcal{E}}{\delta \rho}\Bigr|_{\rho = \rho_{a_t}}  \theta \dd \theta, \\
&\frac{\dd  C_t}{\dd t} = -\int  M(\rho_{a_t})^{-1}\frac{\delta \mathcal{E}}{\delta \rho}\Bigr|_{\rho = \rho_{a_t}}   (\theta - m_t)(\theta - m_t)^T \dd \theta. 
\end{aligned}
\end{equation}
The moment closure approach has been used in Kalman filtering \cite{sarkka2007unscented} and Wasserstein gradient flows \cite{lambert2022variational} to obtain a reduced system of equations.
\subsection{Equivalence Between Two Approaches}
\label{sec-equivalence-gaussian-approx}
{The metric projection approach encapsulated in identity \eqref{eq:mra2} is rooted in Riemannian geometry, 
whilst the moment closure approach leading to \eqref{eq:mC-Momentum2} is based on probability. However
these two seemingly distinct approaches} lead to equivalent Gaussian approximations of the gradient flow; see the following \Cref{prop:1} and its proof in \Cref{proof-ssec:GGF}. In fact, such equivalence has been pointed out for the Wasserstein gradient flow in \cite{lambert2022variational}. Our result provides a general condition for such equivalence.
\begin{theorem}
\label{prop:1}
Suppose the following condition holds:

\begin{equation}
    \label{eq:mcp-cond}
    \M(\rho_\rhoa) T_{\rho_\rhoa}\PPG = {\rm span}\{\theta_i, \theta_i\theta_j, 1\leq i,j\leq N_{\theta}\}.
\end{equation}
Here $\M(\rho_\rhoa)$ is the metric tensor and $T_{\rho_\rhoa}\PPG$ is the tangent space of the Gaussian density space $\PPG$. Moreover, $\theta_i, \theta_i\theta_j$ are understood as functions of $\theta$. Then the mean and covariance evolution equations \eqref{eq:mC-Momentum2} are equivalent to \eqref{eq:G-GF}.
\end{theorem}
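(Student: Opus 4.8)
The plan is to show that both flows are gradient flows of $\EE(\rho_a;\rho_{\rm post})$ with respect to the \emph{same} metric tensor $\fM(a)$ on $\mathcal{A}$, hence they coincide. The metric-projection flow \eqref{eq:G-GF} is by construction the $\fM$-gradient flow, so the work is to show that the moment-closure system \eqref{eq:mC-Momentum2} is too. First I would rewrite the right-hand side of \eqref{eq:mC-Momentum2} using the fact that, for $a=(m,C)$, the components of $\nabla_a \rho_a$ are (up to the density factor $\rho_a$ and constants) exactly the functions $\theta_i$ and $\theta_i\theta_j$; more precisely $\partial_{m} \rho_a$ and $\partial_{C}\rho_a$ span, after multiplication by suitable factors, the space $\rho_a \cdot {\rm span}\{1,\theta_i,\theta_i\theta_j\}$. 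Therefore the moments $\int (\cdot)\,\theta\,\dd\theta$ and $\int(\cdot)\,(\theta-m_t)(\theta-m_t)^T\dd\theta$ appearing in \eqref{eq:mC-Momentum2} can be repackaged as pairings of $M(\rho_{a_t})^{-1}\frac{\delta\EE}{\delta\rho}$ against the family $\{\nabla_a \rho_{a}\}$, i.e. as $\langle \frac{\delta\EE}{\delta\rho}, M(\rho_{a_t})^{-1}(\cdots)\rangle$ reorganized into $\R^{N_a}$-valued form.

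Second, I would use the chain rule for the first variation: $\frac{\partial}{\partial a}\EE(\rho_a;\rho_{\rm post}) = \langle \frac{\delta\EE}{\delta\rho}\big|_{\rho_a}, \nabla_a \rho_a\rangle$, which is exactly $\int \frac{\delta\EE}{\delta\rho}\nabla_a\rho_a\,\dd\theta$. Combining this with the definition \eqref{eq:mra}--\eqref{eq:mra2} of $\fM(a)$, the metric-projection flow \eqref{eq:G-GF} reads $\fM(a_t)\dot a_t = -\int \frac{\delta\EE}{\delta\rho}\nabla_a\rho_{a_t}\,\dd\theta$. On the other hand, stacking the two equations of \eqref{eq:mC-Momentum2} into a single $\R^{N_a}$-valued equation and inserting a resolution of identity, I want to show $\dot a_t$ from \eqref{eq:mC-Momentum2} also satisfies $\fM(a_t)\dot a_t = -\int \frac{\delta\EE}{\delta\rho}\nabla_a\rho_{a_t}\,\dd\theta$. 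This is where condition \eqref{eq:mcp-cond} enters: it guarantees that $M(\rho_{a_t})$ maps the tangent space $T_{\rho_{a_t}}\PPG$ (which is $\rho_{a_t}\cdot{\rm span}\{\text{linear and quadratic monomials, mean zero}\}$, i.e. the span of $\nabla_a\rho_{a_t}$) \emph{onto} ${\rm span}\{1,\theta_i,\theta_i\theta_j\}$, so that the moment integrals $\int M(\rho_{a_t})^{-1}(\cdots)\,\theta\,\dd\theta$ and $\int M(\rho_{a_t})^{-1}(\cdots)(\theta-m_t)(\theta-m_t)^T\dd\theta$ are precisely the components of $M(\rho_{a_t})^{-1}\frac{\delta\EE}{\delta\rho}$ paired against $M(\rho_{a_t})\nabla_a\rho_{a_t}$ — i.e. $\fM(a_t)$ acting on something, up to an invertible change of basis coming from the linear relation between $\{\theta,(\theta-m)(\theta-m)^T\}$ and $\nabla_a\rho_a$.

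Concretely, I would make explicit the linear map $L$ with $\nabla_a\rho_a = \rho_a\, L\,(1,\theta,\theta\theta^T)^{\mathrm{flat}}$ and note $\int M(\rho_a)^{-1}\psi\cdot(\theta,(\theta-m)(\theta-m)^T)^{\mathrm{flat}}\dd\theta$ differs from $\int M(\rho_a)^{-1}\psi\cdot \nabla_a\rho_a\,\dd\theta$ only through $L$ and lower-order (mean, constant) terms that vanish because $M(\rho_a)^{-1}\psi\in T_{\rho_a}\PP$ integrates to zero and the constraint \eqref{eq:mcp-cond} pins down the image. Feeding $\psi=\frac{\delta\EE}{\delta\rho}$ through, both flows become $L\fM(a_t)L^{-1}\dot a_t = -L\int\frac{\delta\EE}{\delta\rho}\nabla_a\rho_{a_t}\dd\theta$ style identities that reduce to the same ODE, and invertibility of the relevant map lets us cancel it.

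The main obstacle is the bookkeeping in the preceding paragraph: carefully identifying $T_{\rho_a}\PPG$ with the span of $\{\rho_a,\rho_a\theta_i,\rho_a\theta_i\theta_j\}$ modulo the mean-zero constraint, tracking the (invertible, $a$-dependent) change of basis $L$ between the parametrization-derivative basis $\nabla_a\rho_a$ and the raw-monomial basis, and verifying that condition \eqref{eq:mcp-cond} is exactly what makes the moment integrals in \eqref{eq:mC-Momentum2} land in the image of $\fM(a)$ rather than producing spurious components outside $T_{\rho_a}\PPG$. Once the linear-algebraic dictionary between "moments of $M^{-1}\frac{\delta\EE}{\delta\rho}$" and "the metric tensor $\fM$ applied to $\dot a$" is set up, equality of the two flows is immediate from uniqueness of the solution to $\fM(a_t)\dot a_t = -\partial_a\EE(\rho_{a_t};\rho_{\rm post})$.
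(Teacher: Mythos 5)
Your proposal is correct and follows essentially the same route as the paper's proof: both hinge on using condition \eqref{eq:mcp-cond} to write each monomial test function $f\in{\rm span}\{\theta_i,\theta_i\theta_j\}$ as $M(\rho_a)\nabla_a\rho_a\cdot\sigma$, moving the self-adjoint $M(\rho_a)$ across the $L^2$ pairing, and invoking the definition \eqref{eq:mra2} of $\fM$ together with the chain rule $\frac{\partial \EE}{\partial a}=\bigl\langle \frac{\delta\EE}{\delta\rho},\nabla_a\rho_a\bigr\rangle$. The paper phrases this as equality of the pairings of the two candidate density time-derivatives against such $f$ (which are exactly the mean and covariance moments), which is the same bookkeeping you describe via the change of basis $L$ and the vanishing of the constant terms.
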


We have the following \Cref{prop:2}  showing that many metric tensors we have considered in \Cref{sec:GF and AI} indeed satisfy the condition~\cref{eq:mcp-cond}. Its proof can be found in \Cref{proof-ssec:GGF}.

\begin{theorem}
\label{prop:2}
Assumption~\eqref{eq:mcp-cond} holds for the Fisher-Rao metric tensor, the affine invariant Wasserstein metric tensor with a preconditioner $\Prec$ independent of $\theta$, and the affine invariant Stein metric tensor with a preconditioner $\Prec$ independent of $\theta$ and with a bilinear kernel $\kappa(\theta, \theta',\rho) = (\theta - m)^TA(\rho)(\theta' - m) + b(\rho)$; here we require $b\neq0$, and $A$ nonsingular.
\end{theorem}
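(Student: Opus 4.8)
The plan is to verify condition \eqref{eq:mcp-cond} separately for each of the three metric tensors, by computing the image of the Gaussian tangent space $T_{\rho_a}\PPG$ under the metric tensor $\M(\rho_a)$ and checking that it equals $\mathrm{span}\{1,\theta_i,\theta_i\theta_j\}$, which is a space of dimension $1+N_\theta+\binom{N_\theta+1}{2}$. The first step is to write the Gaussian tangent space explicitly: differentiating $\rho_a$ with respect to $m$ and (the symmetric part of) $C$ at a fixed $a=(m,C)$, one finds that $\nabla_a\rho_a$ spans $\rho_a\cdot\{ (\theta-m)_i,\ (\theta-m)_i(\theta-m)_j - C_{ij}\}$; equivalently $T_{\rho_a}\PPG = \rho_a\cdot\mathrm{span}\{1,\theta_i,\theta_i\theta_j\}$ intersected with the zero-integral constraint, so effectively $T_{\rho_a}\PPG = \rho_a\cdot\mathrm{span}\{\theta_i,\ \theta_i\theta_j\}$ modulo constants. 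It is convenient to record that the polynomials of degree $\le 2$ against a Gaussian weight form an invariant object under affine changes of variable, which will be used repeatedly.

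For the Fisher-Rao tensor, $\M^{\mathrm{FR}}(\rho_a)^{-1}\psi = \rho_a\psi$, so by \eqref{eq:mra2} the relevant object is the inverse map: $\M^{\mathrm{FR}}(\rho_a)$ applied to an element $\sigma\in T_{\rho_a}\PPG$ of the form $\sigma = \rho_a q$ with $q$ a degree-$\le 2$ polynomial returns (the zero-mean representative of) $q$ itself, and as $\sigma$ ranges over $T_{\rho_a}\PPG$ the polynomial $q$ ranges over $\mathrm{span}\{\theta_i,\theta_i\theta_j\}$ modulo constants; adjoining the constant function (which is the image of any $\sigma$ under the non-normalized multiplication, or simply because $T^*_\rho\PP$ is defined up to constants) gives exactly $\mathrm{span}\{1,\theta_i,\theta_i\theta_j\}$. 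For the affine invariant Wasserstein tensor with $\Prec=\Prec(\rho)$ independent of $\theta$, one computes $\M^{\mathrm{AIW}}(\rho_a)(\rho_a q)$ by solving $-\nabla_\theta\cdot(\rho_a \Prec\nabla_\theta\psi) = \rho_a q$; for $\rho_a$ Gaussian and $q$ a degree-$\le 2$ polynomial, $\psi$ can be taken to be a degree-$\le 2$ polynomial (this is the key closure: the Gaussian score $\nabla_\theta\log\rho_a = -C^{-1}(\theta-m)$ is affine, so the operator $-\Prec\Delta_\theta(\cdot) + \Prec C^{-1}(\theta-m)\cdot\nabla_\theta(\cdot)$ maps degree-$\le 2$ polynomials into degree-$\le 2$ polynomials, and one checks it is invertible modulo constants on that finite-dimensional space), hence $\psi\in\mathrm{span}\{1,\theta_i,\theta_i\theta_j\}$ and the image is contained in, and by dimension count equal to, the desired span. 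For the affine invariant Stein tensor with $\Prec=\Prec(\rho)$ and bilinear kernel $\kappa(\theta,\theta',\rho)=(\theta-m)^TA(\rho)(\theta'-m)+b(\rho)$, the inner integral $\int \kappa(\theta,\theta',\rho_a)\rho_a(\theta')\Prec\nabla_{\theta'}\psi(\theta')\dd\theta'$ is, for polynomial $\psi$ of degree $\le 2$, again an affine function of $\theta$ (because only the $(\theta-m)$-dependence of $\kappa$ survives, multiplied by a constant vector obtained by Gaussian integration), and then the outer $-\nabla_\theta\cdot(\rho_a\,(\cdot))$ returns a degree-$\le 2$ polynomial times $\rho_a$; running this in reverse and using $b\neq 0$, $A$ nonsingular to guarantee non-degeneracy of the Gaussian-integrated linear maps, one again gets surjectivity onto $\mathrm{span}\{1,\theta_i,\theta_i\theta_j\}$ by a dimension argument.

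The main obstacle I expect is the invertibility/non-degeneracy bookkeeping in the Wasserstein and Stein cases: one must show that the relevant linear operators, when restricted to the finite-dimensional space of degree-$\le 2$ polynomials modulo constants, are actually isomorphisms, not merely self-maps — this is where positive-definiteness of $\Prec$, nonsingularity of $A$, and the hypothesis $b\neq 0$ enter, and it is cleanest to argue it via affine invariance by first reducing to the standard Gaussian $\N(0,I)$ (legitimate because all three tensors are affine invariant and the span $\mathrm{span}\{1,\theta_i,\theta_i\theta_j\}$ is preserved under affine change of variables) and then doing an explicit, finite, symmetric-tensor computation there, e.g.\ diagonalizing in the Hermite polynomial basis where the Fisher-Rao, Wasserstein, and Stein operators all act diagonally (or block-diagonally) on the low-degree modes. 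The remaining bookkeeping — checking that the zero-integral constraint defining $T_{\rho_a}\PPG$ exactly accounts for quotienting by constants, and that dimensions match — is routine and I would state it without the full computation.
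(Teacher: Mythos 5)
Your proposal is correct and matches the paper's proof in substance: both rest on identifying $T_{\rho_a}\PPG$ with $\rho_a$ times the zero-mean polynomials of degree at most two, and on the closure of that finite-dimensional space under the three metric operators (via the affine Gaussian score, the $\theta$-independence of $\Prec$, and the bilinear kernel with $b\neq 0$, $A$ nonsingular). The only organizational difference is that the paper works exclusively with the explicit operator $M(\rho_a)^{-1}$, computing its image on $\mathrm{span}\{\theta_i,\theta_i\theta_j\}$ and reading off that it equals $T_{\rho_a}\PPG$, which sidesteps the forward-direction PDE solve and the separate finite-dimensional invertibility check that your plan defers to a Hermite-basis computation.
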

This equivalence is useful as it allows us to use the moment closure approach to calculate the projected gradient flows. The moment closure approach is more convenient than the metric projection approach, since the former involves $M(\rho_{a_t})^{-1}$ while the latter involves $M(\rho_{a_t})$.  We know from the examples of the Wasserstein and Stein metric tensors that $M(\rho_{a_t})^{-1}$ is a differential operator which is simpler in calculations compared to its inverse. {On the other hand the projected gradient
flow is an intrinsic object and so it is desirable that the easy-to-implement
moment closure recovers it.}


%
\subsection{Equations of Gaussian Approximate Gradient Flows} 
\label{ssec:Gaussian-Fisher-Rao}
In this subsection, we use the moment closure approach to derive Gaussian approximate gradient flows for different metric tensors, discuss their connections, and draw relations to natural gradient methods in variational inference.
\subsubsection{Gaussian Approximate Fisher-Rao Gradient Flow}
Applying the moment closure approach to Fisher-Rao gradient flow \eqref{eq:mean-field-Fisher-Rao} leads to
the following equation:
\begin{equation*}
\begin{aligned}
\frac{\dd  m_t}{\dd  t} & = \int \theta \left(\rho_{a_t} \bigl(\log \rho_{\rm post} - \log \rho_{a_t}\bigr) - \rho_{a_t}\E_{\rho_{a_t}}[\log \rho_{\rm post} - \log \rho_{a_t}] \right) {\rm d}\theta \\
&= \Cov_{\rho_{\rhoa_t}}[\theta ,\, \log \rho_{\rm post} - \log \rho_{\rhoa_t} ]\\ & = C_t\E_{\rho_{\rhoa_t}}[\nabla_\theta (\log \rho_{\rm post} - \log \rho_{\rhoa_t}) ],\\
\frac{\dd  C_t}{\dd  t} &= \E_{\rho_{\rhoa_t}}[(\theta -m_t)(\theta -m_t)^T(\log \rho_{\rm post} - \log \rho_{\rhoa_t} - \E_{\rho_{\rhoa_t}}[\log \rho_{\rm post} - \log \rho_{\rhoa_t}])]\\
&= C_t \E_{\rho_{\rhoa_t}}[\Hess(\log \rho_{\rm post} - \log \rho_{\rhoa_t})] C_t,
\end{aligned}
\end{equation*}
where $\rho_{\rhoa_t} \sim \N(m_t, C_t)$, and we have used the Stein's lemma (see \Cref{lemma: stein equality}) in the last steps of the derivation. Furthermore noting that $\E_{\rho_{a_t}}[\nabla \log \rho_{a_t}] = 0$, we obtain
\begin{equation}
\begin{aligned}
\label{eq:Gaussian Fisher-Rao}
\frac{\dd  m_t}{\dd  t}  & =  C_t\E_{\rho_{\rhoa_t}}[\nabla_\theta \log \rho_{\rm post} ],\\
\frac{\dd  C_t}{\dd  t}
&= C_t + C_t \E_{\rho_{\rhoa_t}}[\Hess \log \rho_{\rm post}]C_t.
\end{aligned}
\end{equation}

\begin{newremark}
\Cref{eq:Gaussian Fisher-Rao} is the same as the continuous limit of \textit{natural gradient descent} \cite{amari1998natural,martens2020new,zhang2019fast} for Gaussian variational inference, where the Fisher information matrix is used to precondition the gradient descent dynamics for the optimization problem in \cref{eq:Gaussian-KL0}. In fact, the Fisher information matrix is equal to $\fM(\cdot)$ in \cref{eq:mra} when $M(\cdot)$ is chosen as the Fisher-Rao metric tensor \cite{malago2015information}.
\end{newremark}

{The Gaussian approximate Fisher-Rao gradient flow~\eqref{eq:Gaussian Fisher-Rao} is affine invariant.
Under any invertible affine transformation $\tilde{\theta} =\varphi(\theta) = A\theta + b$, the Gaussian density $\rho_{a_t}$ with $a_t = (m_t, C_t)$ in \cref{eq:Gaussian Fisher-Rao} will be transformed to another Gaussian density 
$\tilde{\rho}_a = \varphi\#\rho_a = \rho_{\tilde{a}_t}$ with $\tilde{a}_t = (\tilde{m}_t, \tilde{C}_t) = (Am, ACA^T)$. Let $\tilde{\rho}_{\rm post} = \varphi\#\rho_{\rm post}$.
Then, the dynamics of $(\tilde{m}_t, \tilde{C}_t)$ remains the same form as~\eqref{eq:Gaussian Fisher-Rao} satisfying
\begin{equation}
\begin{aligned}
\frac{\dd  \tilde{m}_t}{\dd  t}  & =  \tilde{C}_t\E_{\rho_{\tilde{\rhoa}_t}}[\nabla_{\tilde\theta} \log \tilde{\rho}_{\rm post} ],\\
\frac{\dd  \tilde{C}_t}{\dd  t}
&= \tilde{C}_t + \tilde{C}_t \E_{\rho_{\tilde{\rhoa}_t}}[\nabla_{\tilde\theta}\nabla_{\tilde\theta} \log \tilde{\rho}_{\rm post}]\tilde{C}_t.
\end{aligned}
\end{equation}
The above equation can be derived using \Cref{lemma:change_variable}.
}



\subsubsection{Gaussian Approximate Wasserstein Gradient Flow}
By applying the moment closure approach to \eqref{eq:mean-field-Wasserstein}, we get the mean and covariance evolution equations 
for the Gaussian approximate Wasserstein gradient flow with 
$\theta-$independent $\Prec$ as follows:
\begin{equation}
\label{eq:Wasserstein-mC}
\begin{split}
      \frac{\mathrm{d} m_t}{\mathrm{d}t} &= \int\Bigl[\rho_{\rhoa_t} \Prec(\rho_{\rhoa_t}) \nabla_{\theta}(\log \rho_{\rm post} - \log \rho_{\rhoa_t}) \Bigr] \mathrm{d}\theta =  \Prec(\rho_{\rhoa_t}) \E_{\rho_{\rhoa_t}}[\nabla_\theta \log \rho_{\rm post} ], \\
    \frac{\mathrm{d} C_t}{\mathrm{d}t} &= \int\nabla_{\theta} \cdot \Bigl[\rho_{\rhoa_t} \Prec(\rho_{\rhoa_t}) \nabla_{\theta} (\log \rho_{\rhoa_t}  - \log \rho_{\rm post})\Bigr] (\theta - m_t)(\theta - m_t)^T\mathrm{d}\theta \\
    &=  2\Prec(\rho_{\rhoa_t}) +  \Prec(\rho_{\rhoa_t}) \E_{\rho_{\rhoa_t}}[\Hess\log\rho_{\rm post}]C_t +  C_t\E_{\rho_{\rhoa_t}}[\Hess\log\rho_{\rm post}] \Prec(\rho_{\rhoa_t}), 
\end{split}
\end{equation}
where $\rho_{\rhoa_t} \sim \N(m_t, C_t)$, and we have used integration by parts and Stein's lemma (see \Cref{lemma: stein equality}), and the fact that $\rho \nabla_{\theta} \log \rho = \nabla_{\theta}\rho$ and $\E_{\rho_{a_t}}[\nabla_{\theta} \log \rho_{a_t}] = 0$ in the above derivation.

When we set $\Prec(\rho) \equiv \I$, the 
evolution equation becomes
\begin{equation}
\begin{aligned}   
\label{eq:Gaussian-Wasserstein}
&\frac{\mathrm{d}m_t}{\mathrm{d}t} &&= \E_{\rho_{\rhoa_t}} \bigl[ \nabla_{\theta}  \log \rho_{\rm post}  \bigr],\\
&\frac{\mathrm{d}C_t}{\mathrm{d}t} &&= 2\I + \E_{\rho_{\rhoa_t}}\bigl[\Hess\log \rho_{\rm post} \bigr]C_t + C_t\E_{\rho_{\rhoa_t}}\bigl[\Hess\log \rho_{\rm post} \bigr].
\end{aligned}
\end{equation}
This corresponds to the gradient flow under the constrained Wasserstein metric in the Gaussian density space~\cite{sarkka2007unscented,julier1995new,lambert2022variational}.
\begin{newremark}
    When $P$ is the identity operator, the metric tensor $\fM(\rhoa)$ define in \cref{eq:mra} has an explicit formula \cite{chen2018natural, takatsu2011wasserstein, Luigi2018, bhatia2019bures, li2023}, and the corresponding Gaussian density space is called the Bures–Wasserstein space \cite{villani2021topics}.
\end{newremark}

If we set $\Prec(\theta, \rho) = C(\rho)$, a choice that leads to affine invariant Wasserstein gradient flows in the density space, then the
resulting evolution equations for the corresponding mean and covariance are
\begin{equation}
\label{eq:Gaussian-AI-Wasserstein}
\begin{aligned}    
&\frac{\dd m_t}{\dd t} &&= C_t\E_{\rho_{\rhoa_t}} \bigl[ \nabla_{\theta}  \log \rho_{\rm post} \bigr],\\
&\frac{\dd C_t}{\dd t} &&= 2C_t + 2C_t \E_{\rho_{\rhoa_t}}\bigl[\Hess \log \rho_{\rm post}\bigr]C_t.
\end{aligned}
\end{equation}
\Cref{eq:Gaussian-AI-Wasserstein} is similar to the Gaussian approximate Fisher-Rao gradient flow \eqref{eq:Gaussian Fisher-Rao}, but with scaling factor $2$ in the covariance evolution. {This equation is also affine invariant, akin to the Gaussian approximate Fisher-Rao gradient flow~\eqref{eq:Gaussian Fisher-Rao}.}


\subsubsection{Gaussian Approximate Stein Gradient Flow}
We apply the moment closure approach to \eqref{eq:Stein-GF}. The mean and covariance evolution equations of the preconditioned Stein gradient flow \eqref{eq:AI-Stein} with a $\theta-$independent $\Prec$
are
\begin{equation}
\begin{aligned}   
\label{eq:Gaussian-Stein-CTL0}
&\frac{\mathrm{d}m_t}{\mathrm{d}t} = -\int\Bigl(\rho_{\rhoa_t}(\theta)\int \kappa(\theta,\theta',\rho_{\rhoa_t})\rho_{\rhoa_t}(\theta')\Prec_t \nabla_{\theta'} \bigl(\log\rho_{\rhoa_t}(\theta') - \log \rho_{\rm post}(\theta') \bigr)\mathrm{d}\theta' \Bigr) \mathrm{d}\theta, \\
&\frac{\mathrm{d}C_t}{\mathrm{d}t} =\\
&-\int \Bigl(\rho_{\rhoa_t}(\theta)\int \kappa(\theta,\theta',\rho_{\rhoa_t})\rho_{\rhoa_t}(\theta')\Prec_t \nabla_{\theta'} \bigl(\log\rho_{\rhoa_t}(\theta') - \log \rho_{\rm post}(\theta') \bigr)\mathrm{d}\theta' \Bigr) (\theta - m_t)^T \\
&+ (\theta - m_t)\Bigl(\rho_{\rhoa_t}(\theta)\int \kappa(\theta,\theta',\rho_{\rhoa_t})\rho_{\rhoa_t}(\theta')\Prec_t \nabla_{\theta'} \bigl(\log\rho_{\rhoa_t}(\theta') - \log \rho_{\rm post}(\theta') \bigr)\mathrm{d}\theta' \Bigr)^T \mathrm{d}\theta. \\
\end{aligned}
\end{equation}
where $\rho_{\rhoa_t} \sim \N(m_t, C_t)$, $P_t:=P(\rho_{\rhoa_t})$, and we have used integration by parts in the above derivation. 

Imposing the form of the bilinear
kernel mentioned in \Cref{prop:2} and using the Stein's lemma (See \Cref{lemma: stein equality}) and the fact that $\E_{\rho_{a_t}}[\nabla \log \rho_{a_t}] = 0$, we obtain
\begin{equation}
\begin{aligned}   
\label{eq:Gaussian-Stein-CTL}
\frac{\mathrm{d}m_t}{\mathrm{d}t} 
=&b_tP_t \E_{\rho_{\rhoa_t}}[\nabla_{\theta}\log\rho_{\rm post}]
,\\
\frac{\mathrm{d}C_t}{\mathrm{d}t} 
=& P_tA_tC_t + C_tA_tP_t\\
&\quad\quad\quad+ P_t\E_{\rho_{\rhoa_t}}[\Hess\log\rho_{\rm post}]C_tA_tC_t +
C_tA_tC_t\E_{\rho_{\rhoa_t}}[\Hess\log\rho_{\rm post}]P_t.
\end{aligned}
\end{equation}
Here, we used the notations that $A_t= A(\rho_{a_t})$ and $b_t = b(\rho_{a_t})$.

Different choices of the preconditioner $\Prec$ and the bilinear kernel $\kappa$ in \Cref{prop:2}
allow us to construct various Gaussian approximate gradient flows. Setting $\Prec(\theta, \rho) = C(\rho)$
and choosing the bilinear kernel with $A(\rho) = \frac{1}{2}C(\rho)^{-1}$ and $b(\rho) = 1$
lead to the Gaussian approximate Fisher-Rao gradient flow in \cref{eq:Gaussian Fisher-Rao}.
Choosing the preconditioner $\Prec(\rho) = \I$ and the bilinear kernel 
with $A(\rho)=C(\rho)^{-1}$ and $b(\rho)=1$ leads to the Gaussian approximate Wasserstein gradient flow in \cref{eq:Gaussian-Wasserstein}.
Moreover, setting the preconditioner $\Prec_t = \I$ and the bilinear kernel 
with $A(\rho)=I$ and $b(\rho)=1$
recovers the Gaussian sampling approach introduced in \cite{galy2021flexible}:
\begin{equation}
\label{eq:galy2021flexible}
\begin{aligned}
\frac{\mathrm{d}m_t}{\mathrm{d}t}
=&\E_{\rho_{\rhoa_t}}[\nabla_{\theta}\log\rho_{\rm post}]
,\\
\frac{\mathrm{d}C_t}{\mathrm{d}t}
=& 2C_t + \E_{\rho_{\rhoa_t}}[\Hess\log\rho_{\rm post}]C_t^2 +
C_t^2\E_{\rho_{\rhoa_t}}[\Hess\log\rho_{\rm post}].
\end{aligned}
\end{equation}
{We note that this dynamics is not affine invariant.}

{The above discussion implies that the Gaussian approximate Stein gradient flow is quite general:  with various preconditioners and bilinear kernels, it can recover many Gaussian dynamics used in sampling.}
{
\begin{remark}
In many sampling problems, we may evaluate $\nabla_{\theta}\log\rho_{\rm post}$ but not  the Hessian matrix $\nabla_{\theta}\nabla_{\theta}\log\rho_{\rm post}$.
Nevertheless we note that Stein's identities, as presented in \Cref{lemma: stein equality}, allow one to eliminate one derivative in the aforementioned dynamics, through
\begin{equation}
    \E_{\rho_{a_t}}[ \nabla_{\theta} \nabla_{\theta}\log \rho_{\rm post}] = \E_{\rho_{a_t}}[\nabla_{\theta} \log \rho_{\rm post} (\theta - m_t)^T] C_t^{-1}.
\end{equation}   
\end{remark}
}

\subsection{Convergence Analysis}
\label{ssec:GGF-converge}
In the last subsection, we obtained various Gaussian approximate gradient flows, {all of which which can be used to obtain} a Gaussian approximation of the target distribution.  Some of the Gaussian dynamics are affine invariant and some are not. The goal of the subsection is to provide convergence analysis of these dynamics and understand the effect of affine invariance for different target distributions.
We will consider three classes of target distribution: the Gaussian posterior case, 
logconcave posterior case, and general posterior case.

\subsubsection{Gaussian Posterior Case} 
\label{ssec:gaussian}

We start with the Gaussian posterior case. For such case, we are able to compute the explicit formula of the dynamics and then establish convergence rates. We consider the Gaussian approxiamte Fisher-Rao and Wasserstein gradient flows here. Additionally, for the purpose of comparison, we also consider the standard gradient flow for Gaussian variational inference, employing the Euclidean inner-product metric in $\R^{N_a}$ (i.e., setting $\fM(\rhoa_t)=I$ in \eqref{eq:G-GF}). This choice leads to the following dynamical system
\begin{equation}
\label{eq:g-gf}
\begin{aligned}
\frac{\dd m_t}{\dd t} &= \E_{\rho_{\rhoa_t}}\bigl[\nabla_\theta  \log \rho_{\rm post}   \bigr],  \\
\frac{\dd C_t}{\dd t} &= \frac{1}{2}C_t^{-1} + \frac{1}{2}\E_{\rho_{\rhoa_t}}\bigl[\Hess\log \rho_{\rm post} \bigr].
\end{aligned}
\end{equation}
In this article, we call it the vanilla Gaussian approximate gradient flow.
In \Cref{p:rate}, we provide convergence rates for the three different dynamics.
\begin{theorem}\label{p:rate}
Assume the posterior distribution is Gaussian
$\rho_{\rm post}(\theta)\sim \N(m_{\star}, C_{\star})$.
Denote the largest eigenvalue of $C_{\star}$ by $\la$. 
For gradient flows with initialization $C_0=\lambda_0\I$, the following hold:
\begin{enumerate}
\item for the Gaussian approximate Fisher-Rao gradient flow \eqref{eq:Gaussian Fisher-Rao}: 
\[\|m_t-m_{\star}\|_2=\bigO(e^{-t}), \|C_t-C_{\star}\|_2=\bigO(e^{-t});\]
\item for the Gaussian approximate Wasserstein gradient flow \eqref{eq:Gaussian-Wasserstein}: 
\[\|m_t-m_{\star}\|_2=\bigO(e^{-t/\la}), \|C_t-C_{\star}\|_2=\bigO(e^{-2t/\la});\]
\item for the vanilla Gaussian approximate gradient flow~\eqref{eq:g-gf}: \[\|m_t-m_{\star}\|_2=\bigO(e^{-t/\la}), \|C_t-C_{\star}\|_2=\bigO(e^{-t/(2\la^2)}).\]
\end{enumerate}
where the implicit constants depend on $m_{\star}$, $C_{\star}$ and $\lambda_0$.
\end{theorem}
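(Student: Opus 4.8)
The plan is to diagonalize. Since $\rho_{\rm post}\sim\N(m_\star,C_\star)$, we have $\nabla_\theta\log\rho_{\rm post}(\theta)=-C_\star^{-1}(\theta-m_\star)$ and $\Hess\log\rho_{\rm post}\equiv -C_\star^{-1}$, so the expectations over $\rho_{a_t}$ collapse: $\E_{\rho_{a_t}}[\nabla_\theta\log\rho_{\rm post}]=-C_\star^{-1}(m_t-m_\star)$ and $\E_{\rho_{a_t}}[\Hess\log\rho_{\rm post}]=-C_\star^{-1}$. Substituting into each of the three systems turns the mean equation into a linear ODE for $e_t:=m_t-m_\star$ and decouples it from $C_t$ only in the Fisher--Rao case; in the Wasserstein and vanilla cases the mean equation reads $\dot e_t = -C_\star^{-1}C_t e_t$ (Wasserstein) or $\dot e_t=-C_\star^{-1}e_t$... wait, let me restate: for Wasserstein $\dot m_t=\E[\nabla\log\rho_{\rm post}]=-C_\star^{-1}e_t$, so the mean is already linear and autonomous; for Fisher--Rao $\dot m_t = -C_t C_\star^{-1} e_t$, which does depend on $C_t$. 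In all three cases I would first analyze the covariance ODE, establish $C_t\to C_\star$ at the claimed rate, and then feed that into the (now asymptotically autonomous) linear equation for $e_t$ via a Gr\"onwall/variation-of-constants argument.

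The key reduction for the covariance is simultaneous diagonalization. Because $C_0=\lambda_0 I$ commutes with $C_\star$, and each covariance ODE has the form $\dot C_t = F(C_t,C_\star)$ with $F$ built only from products of $C_t$, $C_\star^{-1}$ and $I$, one checks inductively that $C_t$ commutes with $C_\star$ for all $t$. Hence, writing $C_\star=U\Lambda U^T$ with $\Lambda=\mathrm{diag}(\mu_1,\dots,\mu_{N_\theta})$, each $C_t=U\,\mathrm{diag}(c_1(t),\dots,c_{N_\theta}(t))\,U^T$ and the matrix ODE becomes $N_\theta$ scalar ODEs. Explicitly:
\begin{itemize}
\item[] Fisher--Rao \eqref{eq:Gaussian Fisher-Rao}: $\dot c_i = c_i - c_i^2/\mu_i = c_i(1-c_i/\mu_i)$ — a logistic equation with stable fixed point $c_i=\mu_i$ and linearization rate $-1$, giving $|c_i(t)-\mu_i|=\bigO(e^{-t})$ uniformly in $i$.
\item[] Wasserstein \eqref{eq:Gaussian-Wasserstein}: $\dot c_i = 2 - 2c_i/\mu_i$, linear, with solution $c_i(t)=\mu_i+(\lambda_0-\mu_i)e^{-2t/\mu_i}$; the slowest mode is $i=\arg\max\mu_i$, i.e. rate $2/\la$.
\item[] Vanilla \eqref{eq:g-gf}: $\dot c_i = \tfrac12 c_i^{-1} - \tfrac1{2\mu_i}$, whose fixed point is $c_i=\mu_i$ and whose linearization at the fixed point is $\dot\delta_i = -\tfrac1{2\mu_i^2}\delta_i$, giving rate $1/(2\la^2)$ for the slowest mode.
\end{itemize}
For the two nonlinear scalar ODEs (Fisher--Rao logistic and vanilla) I would either solve them in closed form (the logistic one is elementary; the vanilla one separates: $\int \frac{c\,dc}{c-\mu_i}$ is integrable) or, if a clean closed form is awkward, use a Lyapunov argument on $\delta_i=c_i-\mu_i$ together with the explicit sign structure to get the matching upper bound — one must be slightly careful that the initial datum $\lambda_0$ may lie on either side of $\mu_i$, but in all three cases the fixed point $\mu_i$ is globally attracting on $(0,\infty)$, so this is only a bookkeeping issue for the implied constants.

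Once $\|C_t-C_\star\|_2=\bigO(e^{-r t})$ is established with the appropriate $r$, the mean equation is handled by treating it as a linear system with a time-dependent coefficient converging exponentially to a constant. For Wasserstein and vanilla, $\dot e_t=-C_\star^{-1}e_t$ (vanilla) resp. $-C_\star^{-1}e_t$ — actually for vanilla $\dot m_t=\E[\nabla\log\rho_{\rm post}]=-C_\star^{-1}e_t$ as well, autonomous, so $\|e_t\|=\bigO(e^{-t/\la})$ directly from the smallest eigenvalue of $C_\star^{-1}$ being $1/\la$. For Fisher--Rao, $\dot e_t=-C_t C_\star^{-1}e_t$; diagonalizing in the same basis, $\dot{(e_i)} = -\frac{c_i(t)}{\mu_i}e_i$ with $c_i(t)/\mu_i\to 1$, and since $c_i(t)/\mu_i\ge \kappa>0$ for all $t\ge0$ (the flow stays in a compact subinterval of $(0,\infty)$), one gets $|e_i(t)|\le |e_i(0)|\exp(-\int_0^t c_i(s)/\mu_i\,ds)=\bigO(e^{-t})$ by comparing the integral to $t-\bigO(1)$.

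The main obstacle I anticipate is not any single estimate but the careful verification that each covariance flow remains in a region where the fixed point is attracting \emph{with the linearized rate}, uniformly over the spectrum of $C_\star$, so that the implied constants genuinely depend only on $m_\star, C_\star, \lambda_0$ and not, say, blow up as some $\mu_i\to 0$ in a way that would spoil the claimed exponents. This is where the commuting/simultaneous-diagonalization structure is essential: it reduces everything to one-dimensional phase-line analysis, where monotonicity and the explicit linearization make the rates transparent. The rest is routine Gr\"onwall.
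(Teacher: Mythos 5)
Your proposal is correct and follows essentially the same route as the paper's proof: collapse the expectations using the Gaussian form of $\rho_{\rm post}$, use the commutation of $C_t$ with $C_\star$ (guaranteed by $C_0=\lambda_0 I$) to reduce each covariance flow to scalar eigenvalue ODEs, and read off the rates from the logistic, linear, and $\frac{1}{2c}-\frac{1}{2\mu}$ equations respectively. The only cosmetic differences are that the paper handles the Fisher--Rao mean via the closed-form solution of \Cref{proposition:analytical-solution-ngd} rather than your Gr\"onwall comparison on $\int_0^t c_i(s)/\mu_i\,\dd s$, and solves the vanilla covariance ODE implicitly rather than by linearization; both variants are sound.
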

The proof can be found in~\Cref{proof:p:rate}. For the Gaussian approximate gradient flow~\eqref{eq:g-gf} and the Gaussian approximate Wasserstein gradient flow~\eqref{eq:Gaussian-Wasserstein}, if $\la$ is large, their convergence rate is much slower than the Gaussian approximate Fisher-Rao gradient flow. The affine invariance property of the Gaussian approximate Fisher-Rao gradient flow {allows it to} achieve a uniform exponential rate of convergence $\bigO(e^{-t})$ for Gaussian posteriors.

The uniform convergence rate $\bigO(e^{-t})$ has also been studied for the mean and covariance dynamics of the Kalman-Wasserstein gradient flow~\eqref{eq:Gaussian-AI-Wasserstein} in~\cite[Lemma 3.2]{garbuno2020interacting}\cite{carrillo2021wasserstein,burger2023covariance}, which is affine invariant. As a consequence, the affine invariance property accelerates the convergence of the flow when the target distribution is Gaussian.

Can affine invariance also accelerate the dynamics beyond Gaussian posteriors? To investigate this question, in the following we study the convergence property of the Gaussian approxiamate Fisher-Rao gradient flow, which is affine invariant, for logconcave posteriors and general posteriors.

\subsubsection{Logconcave Posterior Case} 
While the convergence rate of the Gaussian approximate Fisher-Rao gradient flow remains at $\bigO(e^{-t})$ and independent of $\rho_{\rm post}$ for the Gaussian posterior case (as shown in \Cref{p:rate}), the situation changes in the logconcave posterior case. In this scenario, the convergence of the Gaussian approximate Fisher-Rao gradient flow is no longer universally independent of $\rho_{\rm post}$. We can establish the following theorem regarding {lower and upper bounds on} the local convergence rate of the Gaussian approximate Fisher-Rao gradient flow \eqref{eq:Gaussian Fisher-Rao}.

\begin{theorem}\label{p:logconcave-local}
Assume the posterior distribution $\rho_{\rm post}$ is $\alpha$-strongly logconcave and $\beta$-smooth such that $\log \rho_{\rm post} \in C^2(\R^{N_{\theta}})$, $-\Hess \log \rho_{\rm post}(\theta) \succeq \alpha I,$ and $-\Hess \log \rho_{\rm post} \preceq \beta \I$.
Denote the unique minimizer of the Gaussian variational inference problem~\cref{eq:Gaussian-KL0} with the KL divergence as the energy functional by $\rho_{\rhoa_\star} :=  \N(m_{\star}, C_{\star})$.
For $N_\theta = 1$, let $\lambda_{\star,\max} < 0$ denote the largest eigenvalue of the linearized Jacobian matrix of the Gaussian approximate Fisher-Rao gradient flow \eqref{eq:Gaussian Fisher-Rao} around $m_\star$ and $C_\star$; this number
determines the local convergence rate of the Gaussian approximate Fisher-Rao 
gradient flow. 
Then we have
\begin{align}
\label{eq:lambda_max}
    - \lambda_{\star,\max} \geq \frac{1}{(7+\frac{4}{\sqrt{\pi}})\bigl(1 + \log(\frac{\beta}{\alpha})\bigr)}.
\end{align}
Moreover, the bound is sharp: it is possible to construct a sequence of triplets $\rho_{{\rm post},n}$, $\alpha_n$ and $\beta_n$, where $\lim_{n \to \infty} \frac{\beta_n}{\alpha_n} = \infty$, such that, if we let $\lambda_{\star,\max,n}$ denote the corresponding largest eigenvalues of the linearized Jacobian matrix for the $n$-th triple, then, it holds that
\[-\lambda_{\star,\max,n} = \bigO\left(1/\log \frac{\beta_n}{\alpha_n}\right). \]
\end{theorem}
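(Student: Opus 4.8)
The plan is to compute the $2\times 2$ Jacobian of \eqref{eq:Gaussian Fisher-Rao} at its fixed point explicitly, reduce the entire statement to an estimate on one second-moment quantity, and then prove the lower bound by a Gaussian tail split and the sharpness by an explicit piecewise-quadratic example. Throughout, write $V:=-\log\rho_{\rm post}$ (the potential of \eqref{eq:posterior}, up to an additive constant), so $\alpha\le V''\le\beta$, and let $\Phi,\phi$ be the standard Gaussian distribution function and density. In dimension $N_\theta=1$, \eqref{eq:Gaussian Fisher-Rao} reads $\dot m=-C\,\E_{\N(m,C)}[V']$ and $\dot C=C-C^2\,\E_{\N(m,C)}[V'']$, and the unique minimizer $(m_\star,C_\star)$ of \eqref{eq:Gaussian-KL0} is the fixed point, characterized by $\E_{\N(m_\star,C_\star)}[V']=0$ and $C_\star\E_{\N(m_\star,C_\star)}[V'']=1$. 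I would differentiate the Gaussian expectations using only the density identities $\partial_m\E_{\N(m,C)}[\psi]=\tfrac1C\E[\psi(\theta)(\theta-m)]$ and $\partial_C\E_{\N(m,C)}[\psi]=\tfrac{1}{2C^2}\E[\psi(\theta)((\theta-m)^2-C)]$ (plus one integration by parts, $\partial_C\E[V']=\tfrac1{2C}\E[V''(\theta)(\theta-m)]$) applied to $\psi=V',V''$; this is the one place where the hypothesis $\log\rho_{\rm post}\in C^2$ is exactly what is needed, the ``differentiate $\psi$'' form of Stein's identity (which would require $V\in C^4$) being unavailable. Using the fixed-point relations and abbreviating $a_1:=\E_{\N(m_\star,C_\star)}[V''(\theta)(\theta-m_\star)]$ and $a_2:=\E_{\N(m_\star,C_\star)}[V''(\theta)(\theta-m_\star)^2]$, this gives
\[ J=\begin{pmatrix}-1 & -\tfrac12 a_1\\ -C_\star a_1 & -\tfrac12(1+a_2)\end{pmatrix}. \]

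Next I would analyze the spectrum of $J$. One has $\operatorname{tr}J=-\tfrac12(3+a_2)<0$ and $\det J=\tfrac12(1+a_2-C_\star a_1^2)$; Cauchy--Schwarz for the positive measure $V''\,\mathrm{d}\N(m_\star,C_\star)$ yields $a_1^2\le a_2\E_{\N(m_\star,C_\star)}[V'']=a_2/C_\star$, so $C_\star a_1^2\le a_2$ and $\det J\ge\tfrac12>0$; and the discriminant equals $\tfrac14(a_2-1)^2+2C_\star a_1^2\ge0$, so the eigenvalues are real and both negative, which makes $\lambda_{\star,\max}<0$ well defined. For a Hurwitz $2\times2$ matrix with real spectrum, $-\lambda_{\star,\max}\ge\det J/(-\operatorname{tr}J)$ (the positive numbers $-\lambda_{\star,\max}\le-\lambda_{\star,\min}$ multiply to $\det J$ and sum to $-\operatorname{tr}J$, hence the smaller is at least their product divided by their sum). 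Therefore $-\lambda_{\star,\max}\ge(1+a_2-C_\star a_1^2)/(3+a_2)\ge 1/(3+a_2)$, and the first claim reduces to showing $a_2=\bigO(\log(\beta/\alpha))$. For that, I would set $z=(\theta-m_\star)/\sqrt{C_\star}\sim\N(0,1)$ and split $a_2=C_\star\E[V''(\theta)z^2]$ over $\{|z|\le T\}$ and $\{|z|>T\}$: on the first set $V''z^2\le T^2V''$, contributing at most $C_\star T^2\E[V'']=T^2$; on the second $V''\le\beta$, contributing at most $\beta C_\star\,\E[z^2\mathbf1_{|z|>T}]=\beta C_\star\bigl(2T\phi(T)+2(1-\Phi(T))\bigr)$, where $\beta C_\star\le\beta/\alpha$ because $C_\star=1/\E_{\N(m_\star,C_\star)}[V'']\le1/\alpha$. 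Choosing $T=\sqrt{2\log(\beta C_\star)}$ makes $\phi(T)\asymp(\beta C_\star)^{-1}$, so the tail part is $\bigO(\sqrt{\log(\beta/\alpha)})$, and for $\beta C_\star$ near $1$ the crude bound $a_2\le\beta C_\star$ suffices; tracking constants one gets $a_2\le 2\log(\beta/\alpha)+(4/\sqrt\pi)\sqrt{\log(\beta/\alpha)}$, and then $\sqrt x\le(1+x)/2$ gives $3+a_2\le(7+4/\sqrt\pi)(1+\log(\beta/\alpha))$, which is \eqref{eq:lambda_max}.

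For sharpness I would take $V_n$ to be a $C^2$ mollification, supported on a shrinking neighborhood of $0$, of the piecewise-quadratic potential $\tfrac{\alpha_n}2\theta^2\mathbf1_{\theta\le0}+\tfrac{\beta_n}2\theta^2\mathbf1_{\theta>0}$ with $\beta_n/\alpha_n\to\infty$. Writing $c_n=-m_\star/\sqrt{C_\star}$, the fixed-point equations become (using $\E[z\mathbf1_{z<c}]=-\phi(c)$ and $\E[z^2\mathbf1_{z<c}]=\Phi(c)-c\phi(c)$) $c_n=C_\star\phi(c_n)(\beta_n-\alpha_n)$ together with $1/C_\star=\alpha_n\Phi(c_n)+\beta_n(1-\Phi(c_n))$; these force $a_1=c_n/\sqrt{C_\star}$ and $a_2=1+c_n^2$, hence $C_\star a_1^2=c_n^2$, $\det J=1$, $\operatorname{tr}J=-(2+\tfrac12 c_n^2)$, and thus $-\lambda_{\star,\max,n}=\det J/(-\lambda_{\star,\min,n})\sim 2/c_n^2$. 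Eliminating $C_\star$ gives $\beta_n/\alpha_n=(\phi(c_n)+c_n\Phi(c_n))/(\phi(c_n)-c_n(1-\Phi(c_n)))$, and the Mills-ratio expansion $1-\Phi(c)=\phi(c)(c^{-1}-c^{-3}+\cdots)$ forces $c_n\to\infty$ with $c_n^2\sim 2\log(\beta_n/\alpha_n)$, so $-\lambda_{\star,\max,n}=\bigO(1/\log(\beta_n/\alpha_n))$. The mollification can be chosen fine enough to perturb $a_1,a_2$ only at lower order, which is possible precisely because the curvature transition sits $\sim c_n\to\infty$ standard deviations away from the mean of the optimal Gaussian.

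The main obstacle is twofold. First, the Jacobian computation must be done carefully: the \emph{sign} of the off-diagonal entry $-\tfrac12 a_1$ is exactly what keeps $\det J$ of order $1$ while $-\operatorname{tr}J=\tfrac12(3+a_2)$ grows, which is the mechanism producing the slow rate; the opposite sign would spuriously give $\det J\gtrsim a_2$ and hence a rate bounded below by a constant. Second, the estimate $a_2=\bigO(\log(\beta/\alpha))$ is the genuine quantitative step---the obvious bound $a_2\le\beta C_\star\le\beta/\alpha$ is exponentially too weak, and one must exploit the coupling $C_\star\E[V'']=1$ between the optimal variance and the average curvature together with Gaussian tail decay. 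Everything else is routine Gaussian calculus and $2\times2$ linear algebra.
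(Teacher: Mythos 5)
Your proposal is correct and, for the lower bound \eqref{eq:lambda_max}, follows essentially the same route as the paper: your Jacobian agrees entry-by-entry with \eqref{eq:Jacobian} (your $a_1,a_2$ are the paper's $A_1,A_2$), your inequality $-\lambda_{\star,\max}\ge \det J/(-\operatorname{tr}J)\ge 1/(3+a_2)$ is exactly the paper's bound $\lambda_2\le -1/(3+A_2)$, obtained there from the explicit root formula together with the same Cauchy--Schwarz step $C_\star A_1^2\le A_2$, and your near/far split of $\E[V''(\theta)(\theta-m_\star)^2]$ --- using $C_\star\E[V'']=1$ on the bulk and $V''\le\beta$, $C_\star\le 1/\alpha$ on the Gaussian tail --- is the same mechanism as the paper's split at $|\theta-m_\star|\le A$ with $A^2/(2C_\star)=\max\{\log(\beta/\alpha),1\}$, landing on the same constant $7+4/\sqrt{\pi}$.

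Where you genuinely diverge is the sharpness construction. The paper makes $-\Hess\log\rho_{\rm post}$ a narrow \emph{bump} (value $\beta$ on a shrinking window around a point $\gamma$, value $\alpha$ elsewhere, smoothed by a Gaussian kernel), engineered so that the Cauchy--Schwarz inequality is nearly tight, and then runs a fairly delicate Taylor expansion in the bump parameters $\gamma,\sigma$. You instead place a curvature \emph{step} at the origin and let the optimal Gaussian sit $c_n$ standard deviations away from it; the fixed-point equations then close in terms of $\Phi,\phi$, giving the exact identities $a_2=1+c_n^2$ and $C_\star a_1^2=c_n^2$, hence $\det J=1$ and $-\lambda_{\star,\max,n}\sim 2/c_n^2$ with $c_n^2\sim 2\log(\beta_n/\alpha_n)$ from the Mills ratio. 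This is arguably cleaner and even yields the asymptotic constant rather than only the order. The one soft spot is the mollification: the unmollified potential is only $C^1$ (its second derivative jumps), and the claim that a $C^2$ smoothing on a shrinking window perturbs $(m_\star,C_\star)$, $a_1$ and $a_2$ only at lower order is asserted rather than proved. It does hold --- the transition sits exponentially far in the Gaussian tail, so a window of width $o(\sqrt{C_\star}/c_n^{3})$ suffices --- but a complete write-up should quantify this, e.g.\ via a continuity or implicit-function argument for the stationary point, at which level of detail your construction matches the rigor of the paper's.
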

The proof can be found in~\Cref{proof:proof-counter-example-Gaussian-Fisher-Rao}. For the counterexample, we construct $\rho_{{\rm post},n}$ such that $-\nabla_{\theta}\nabla_{\theta}\rho_{{\rm post},n}$ is a bump function between $\alpha_n$ and $\beta_n$ with the width of the bump gradually approaching 0. Our analysis reveals that this construction can drive $-\lambda_{\star,\max}$ as small as possible.

\Cref{p:logconcave-local} implies that the local convergence rate of the Gaussian approximate Fisher-Rao gradient flow is $\log(\frac{\beta}{\alpha})$ for logconcave posteriors. Therefore, such affine invariant flow is still advantageous in sampling highly anisotropic logconcave posteriors, since the dependence of the local convergence rate on the anisotropic ratio or condition number $\beta/\alpha$ is only logarithmic. 

Recently, the work \cite{lambert2022variational} proved the global convergence of the Gaussian approximate Wasserstein gradient flow (which is not affine invariant) when the posterior is logconcave. We leave it as a future work to prove the global convergence rate for the Gaussian approximate Fisher-Rao gradient flows.

\subsubsection{General Posterior Case} 
In general, we can construct posteriors such that the convergence of the  Gaussian approximate Fisher Rao gradient flow to a stationary point can be arbitrarily slow. 

\begin{theorem}
\label{proposition:counter-example-slow}
For any $K > 0$ there exist a target $\rho_{\rm post}$ such that, for the three Gaussian approximate gradient flows~\cref{eq:Gaussian Fisher-Rao}, 
\cref{eq:Gaussian-Wasserstein} and \cref{eq:g-gf}, the convergence rate to 
their stationary points can be as slow as $\bigTheta(t^{-\frac{1}{2K}})$. 
\end{theorem}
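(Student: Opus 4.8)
\textbf{Proof proposal for \Cref{proposition:counter-example-slow}.}
The plan is to reduce the question to the study of a one-dimensional autonomous ODE governing the mean $m_t$ alone, obtained by forcing the covariance to be essentially pinned at a stationary value, and then to pick the potential $\V$ so that the reduced vector field vanishes to high order at its zero. Concretely, I would work in $N_\theta = 1$ and look for a target of the form $\rho_{\rm post}(\theta) \propto \exp(-\V(\theta))$ where $\V$ is an even, smooth, convex-at-infinity function whose derivative $\V'$ behaves like $\theta^{2K+1}$ near $\theta = 0$ (up to lower-order corrections chosen to make the Gaussian-averaged drift also vanish to order $2K+1$). Because $\V$ is even, $(m_\star, C_\star) = (0, C_\star)$ is a stationary point of all three flows \cref{eq:Gaussian Fisher-Rao}, \cref{eq:Gaussian-Wasserstein}, \cref{eq:g-gf}, with $C_\star$ determined by the corresponding scalar covariance fixed-point equation. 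I would then initialize at $m_0 \neq 0$ with $C_0 = C_\star$; by the evenness and the structure of the equations, one checks that $C_t \equiv C_\star$ is \emph{not} generally invariant because the $C$-dynamics couples to $m_t$, so instead I would first establish that $C_t$ stays in a compact interval $[c_-, c_+] \subset (0,\infty)$ bounded away from $0$ and $\infty$ uniformly in $t$ (using the strong-convexity-at-infinity of $\V$ to get two-sided a priori bounds on $\E_{\rho_{a_t}}[\V'']$), so that the mean equation $\dot m_t = -c(t)\,\E_{\N(m_t,C_t)}[\V'(\theta)]$ has $c(t) \in [c_-, c_+]$.

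The next step is the core estimate: show that for the chosen $\V$, the Gaussian-averaged drift $g(m,C) := \E_{\N(m,C)}[\V'(\theta)]$ satisfies $|g(m,C)| \le A |m|^{2K+1}$ for $|m|$ small and $C \in [c_-,c_+]$, and also a matching lower bound $|g(m,C)| \ge a|m|^{2K+1}$ with $a > 0$. The upper bound is the delicate part: $\E_{\N(m,C)}[\V'(\theta)]$ is a smooth function of $m$ (for fixed $C$) whose Taylor coefficients at $m=0$ are, up to constants, $\E_{\N(0,C)}[\V^{(j+1)}(\theta)]$; to kill the coefficients of $m, m^3, \dots, m^{2K-1}$ I would either (i) choose $\V'$ to be an explicit odd polynomial-plus-small-correction and solve the resulting finite linear system for the correction coefficients so that these $K$ Gaussian moments vanish for $C = c_\star$ and stay small for nearby $C$, or, more robustly, (ii) take $\V$ supported-near-origin-like: let $\V(\theta) = \tfrac12 \theta^2 + \epsilon\, W(\theta)$ where $W$ is even, compactly-supported-near-$0$ after smoothing, so that $\E_{\N(0,C)}[\V^{(j+1)}]$ is controlled and the $m$-drift near $0$ is dominated by a term one can force to be $O(m^{2K+1})$. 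Given the two-sided bound $a|m|^{2K+1} \le |g(m,C)| \le A|m|^{2K+1}$ and $c(t)\in[c_-,c_+]$, a standard comparison argument for the scalar ODE $\dot m_t = -c(t) g(m_t, C_t)$ yields $|m_t| \gtrsim (1 + t)^{-1/(2K)}$ — integrating $-\dot y \asymp y^{2K+1}$ gives $y(t) \asymp t^{-1/(2K)}$ — and hence the distance of $(m_t, C_t)$ to the stationary point is bounded below by $\bigO(t^{-1/(2K)})$, noting $2K \le 2K$ so this is at least as slow as $t^{-1/(2K)}$ (one can tune constants so the exponent is exactly $\tfrac{1}{2K}$ as stated; if $K$ is not an integer, replace $2K+1$ by any exponent $p$ with $p - 1 = 2K$).

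The remaining details are: (a) verifying that the constructed $\V$ genuinely has a \emph{unique} stationary point for the Gaussian variational problem near the one we use (so that "convergence to their stationary points" is unambiguous), which follows from $\V$ being globally convex if we arrange the correction $W$ small enough, or from a local uniqueness/implicit-function argument if we only have convexity at infinity; (b) checking the same $\V$ works simultaneously for all three flows — this is immediate because the mean equation has the identical form $\dot m_t = -(\text{positive scalar}) \cdot g(m_t, C_t)$ in all three cases, only the scalar prefactor (namely $C_t$, $1$, or $\tfrac12 C_t^{-1}$) differs, and all three prefactors are bounded in $[c_-, c_+']$ once $C_t$ is; (c) controlling $\E_{\rho_{a_t}}[\V'']$ and $\E_{\rho_{a_t}}[\V']$ via Gaussian concentration so that the a priori bounds in the first paragraph hold. \textbf{The main obstacle} I anticipate is step (ii)/(i) above — engineering $\V$ so that the \emph{Gaussian-averaged} drift, not just $\V'$ itself, vanishes to order $2K+1$ at the origin uniformly over the relevant range of covariances $C$; the averaging smears out any naive "flat" construction, so the correction must be designed at the level of Hermite/Gaussian moments rather than pointwise, and one must check the finite linear system for the correction coefficients is solvable and yields a legitimate (smooth, appropriately convex) potential.
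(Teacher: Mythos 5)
Your overall strategy---reduce to a one-dimensional scalar ODE whose vector field vanishes to order $2K+1$ at the fixed point, then integrate $-\dot y \asymp y^{2K+1}$ to get the $t^{-1/(2K)}$ rate---is exactly the right shape, and your observation that all three flows share the same structure up to bounded positive prefactors is correct. However, the specific place you put the degeneracy (the \emph{mean} equation) cannot work, and the obstacle you flag as ``the main obstacle'' is in fact a genuine obstruction rather than a delicate engineering problem. The linear coefficient of $m$ in the Gaussian-averaged drift $g(m,C)=\E_{\N(m,C)}[\V'(\theta)]$ at $m=m_\star$ is $\partial_m g(m_\star,C)=\E_{\N(m_\star,C)}[\V''(\theta)]$, and at any stationary point of any of the three flows with $C_\star\succ 0$ the covariance fixed-point equation forces $\E_{\N(m_\star,C_\star)}[\V''(\theta)]=C_\star^{-1}>0$ (this is \Cref{l:gd}; the same identity follows from setting the right-hand side of the covariance equation to zero in each of \cref{eq:Gaussian Fisher-Rao}, \cref{eq:Gaussian-Wasserstein}, \cref{eq:g-gf}). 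Hence $g(m,C_\star)=C_\star^{-1}(m-m_\star)+O(|m-m_\star|^2)$ near the stationary point, no choice of $\V$ can make the averaged drift vanish to order higher than one there, and your required two-sided bound $|g(m,C)|\le A|m-m_\star|^{2K+1}$ fails for small $|m-m_\star|$. Concretely, the $(1,1)$ entry of the linearized Jacobian at the stationary point is $-1$ for the Fisher--Rao flow (and $-C_\star^{-1}$, $-C_\star^{-1}$ for the other two), independent of the target: the mean always converges locally exponentially.

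The paper's proof avoids this by putting the degeneracy in the \emph{covariance} direction, where no such obstruction exists once one leaves the logconcave class. It takes $\V(\theta)=\sum_{k=0}^{2K+1}a_{2k}\theta^{2k}$ even, initializes at $m_0=0$ so that $m_t\equiv 0$ by symmetry (decoupling the mean entirely), and computes $\E_{\N(0,C)}[\V'']=f(C)$ explicitly as a polynomial in $C$ via Gaussian moments. The coefficients $a_{2k}$ are then chosen so that $1-Cf(C)=-(C-1)^{2K+1}$, which turns the Fisher--Rao covariance equation into $\dot C_t=-C_t(C_t-1)^{2K+1}$ (and the analogous degenerate scalar ODEs for the other two flows), giving $|C_t-1|=\bigO(t^{-1/(2K)})$. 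Note this target is necessarily non-logconcave (the degeneracy requires $\E[\V''\,\theta^2]<0$ relative to what logconcavity would allow), which is consistent with \Cref{p:logconcave-local} showing a nondegenerate local rate in the logconcave case. If you want to salvage your write-up, the fix is to redirect your ``vanishing to order $2K+1$'' construction from the $m$-Taylor coefficients of $\E_{\N(m,C)}[\V']$ to the $C$-dependence of $\E_{\N(0,C)}[\V'']$, where the finite linear system for the coefficients is explicitly solvable because Gaussian even moments of monomials are known in closed form.
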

The proof can be found in \Cref{proof:counter-example-slow}. This demonstrates that the use of affine invariant properties cannot achieve a universal acceleration for all posterior distributions.

\subsection{Illustrative Numerical Examples}
\label{sec-gaussian-numerics}
In this subsection, we perform numerical experiments comparing different Gaussian approximate gradient flows. The test target distributions from \Cref{sec-affine-invariance-experiments} are used.
We consider the three dynamical models for mean and covariance given in \cref{eq:Gaussian Fisher-Rao}, \cref{eq:Gaussian-Wasserstein} and \cref{eq:g-gf}.
Letting $\rho_{a_n} = \N(m_n, C_n)$ denote the approximated Gaussian density at time $n\Delta t$, the Gaussian approximate Fisher-Rao gradient flow \eqref{eq:Gaussian Fisher-Rao} is discretized as
\begin{equation}
\begin{aligned}
m_{n+1} & =  m_{n} + \Delta t C_n\E_{\rho_{\rhoa_n}}[\nabla_\theta \log \rho_{\rm post} ],\\
C_{n+1}^{-1}
&= C_n^{-1} - \Delta t \bigl( C_n^{-1} + \E_{\rho_{\rhoa_n}}[\Hess \log \rho_{\rm post}] \bigl) .
\end{aligned}
\end{equation}
Here, we use the forward Euler scheme to discretize the dynamics of $C^{-1}$ rather than $C$, for which we observe better numerical stability.
Following~\cite{lambert2022variational}, the Gaussian approximate Wasserstein gradient flow \eqref{eq:Gaussian-Wasserstein} is discretized as
\begin{equation}
\begin{aligned}   
m_{n+1}& = m_{n} + \Delta t \E_{\rho_{\rhoa_n}} \bigl[ \nabla_{\theta}  \log \rho_{\rm post}  \bigr],\\
C_{n+1}& = 
\Bigl( I + \Delta t \bigl(\E_{\rho_{\rhoa_n}}[\Hess \log \rho_{\rm post}] + C_{n}^{-1}\bigr) \Bigr) C_{n}  
 \Bigl( I + \Delta t \bigl(\E_{\rho_{\rhoa_n}}[\Hess \log \rho_{\rm post}] + C_{n}^{-1}\bigr) \Bigr).
\end{aligned}
\end{equation}
The Gaussian approximate gradient flow \eqref{eq:g-gf} is discretized by the forward Euler scheme directly.
The expectations in the evolution equations are calculated by the unscented transform~\cite{julier1997new,huang2022iterated,huang2022efficient} with $J =2N_\theta + 1 = 5$ quadrature points. In this implementation 
the Gaussian approximation has considerable speedup in comparison with the 
previously mentioned particle-based sampling approaches, where $J=100$. We use the same summary statistics as in \Cref{sec-affine-invariance-experiments} to compare the obtained Gaussian with the true target distribution.

For the Gaussian posterior, the convergence of different gradient flows, according to the three summary statistics, are presented in~\Cref{fig:Gaussian_gd_converge}. The imposition of the affine invariance property makes the convergence rate independent of the anisotropy $\lambda$ and accelerates the sampling for badly scaled Gaussian ($\lambda = 0.01$).
The convergence rates of Gaussian approximate gradient flows match well with the
predictions of~\Cref{p:rate}. 

\begin{figure}[ht]
\centering
    \includegraphics[width=0.8\textwidth]{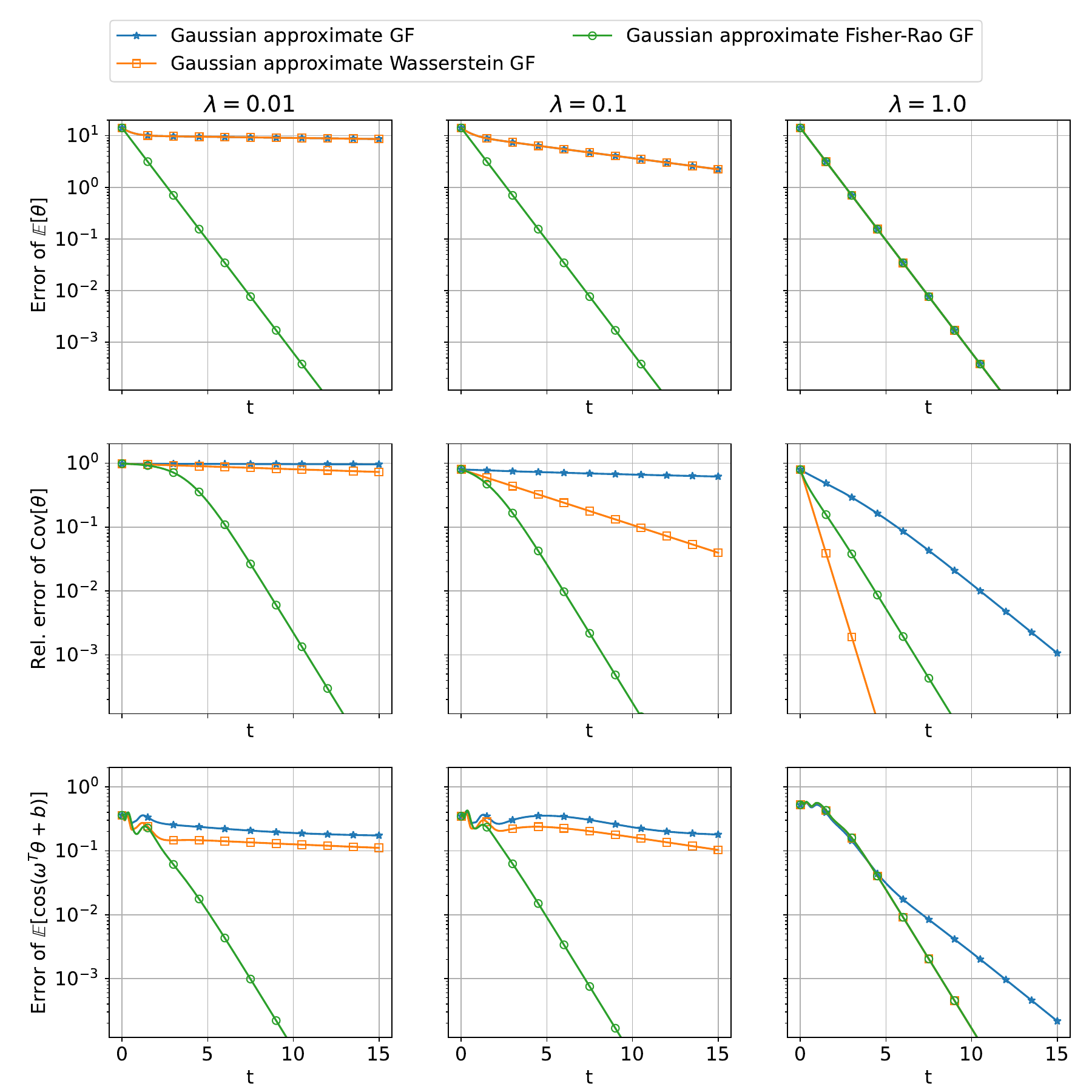}
    \caption{Gaussian posterior case: convergence of different dynamics in terms of $L^2$ error of $\E[\theta_t]$, the relative Frobenius norm error of the covariance $ \frac{\lVert \Cov[\theta_t] - \Cov[\theta_{{\rm true}}]\rVert_F}{\lVert \Cov[\theta_{\rm true}]\rVert_F}$, and the error of $\E[\cos(\omega^T \theta_t + b)]$.}
    \label{fig:Gaussian_gd_converge}
\end{figure}

For the logconcave posterior, 
the convergence behaviors of different gradient flows, according to the three summary statistics, are presented in \Cref{fig:Logconcave_gd_converge}. The imposition of the affine invariance property makes the convergence rate independent of the anisotropy $\lambda$ and accelerates the sampling in the highly anisotropic case ($\lambda = 0.01$). 
We observe that the convergence rate of the Gaussian approximate Fisher-Rao gradient flow does not deteriorate with increased anisotropy constant $\lambda$; this 
indicates that the local convergence rate in~\Cref{p:logconcave-local}, for this gradient
flow, may be extended to arbitrary dimensionalities, suggesting the possibility of achieving a global convergence rate as well.

\begin{figure}[ht]
\centering
    \includegraphics[width=0.8\textwidth]{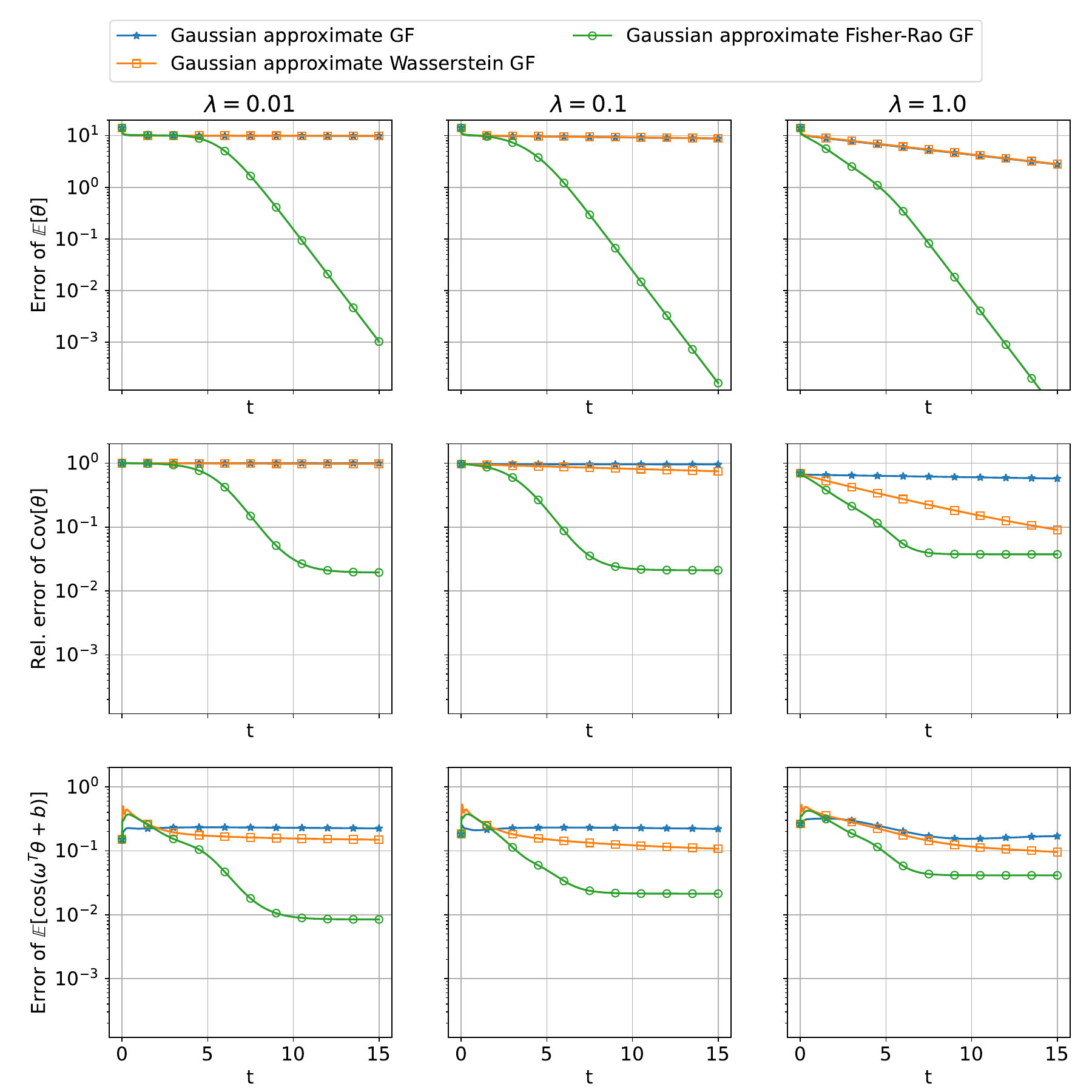}
    \caption{ Logconcave posterior case: convergence of different dynamics in terms of $L^2$ error of $\E[\theta_t]$, the relative Frobenius norm error of the covariance $ \frac{\lVert \Cov[\theta_t] - \Cov[\theta_{{\rm true}}]\rVert_F}{\lVert \Cov[\theta_{\rm true}]\rVert_F}$, and the error of $\E[\cos(\omega^T \theta_t + b)]$}
    \label{fig:Logconcave_gd_converge}
\end{figure}

For the general posterior, the estimated posterior densities (3 standard deviations) obtained by different Gaussian approximate gradient flows are presented in~\Cref{fig:Rosenbrock_gd_density}, and their convergence behaviors according to the three summary statistics are depicted in~\Cref{fig:Rosenbrock_gd_converge}.
Here, the use of Gaussian approximations cannot represent the posterior 
distribution well because the posterior is far from Gaussians. 

\begin{figure}[ht]
\centering
    \includegraphics[width=0.8\textwidth]{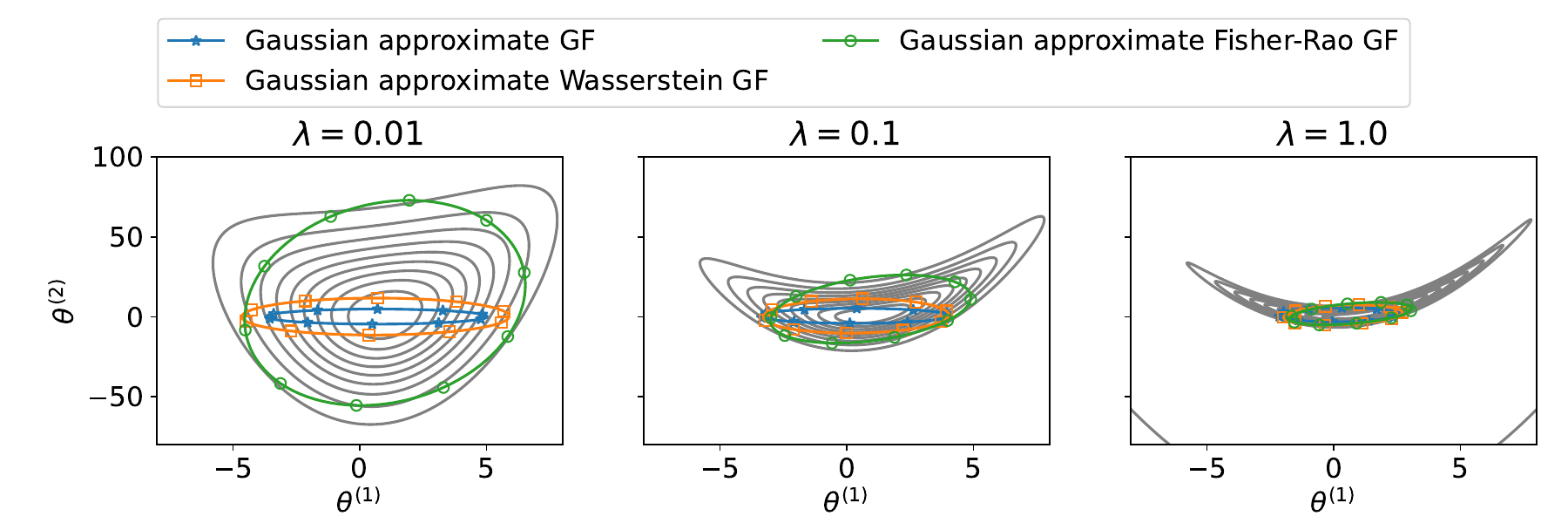}
    \caption{General posterior case: density functions (3 standard deviations) obtained by different dynamics at $t=15$. Grey lines represent the contour of the true posterior.
    }
    \label{fig:Rosenbrock_gd_density}
\end{figure}

\begin{figure}[ht]
\centering
    \includegraphics[width=0.8\textwidth]{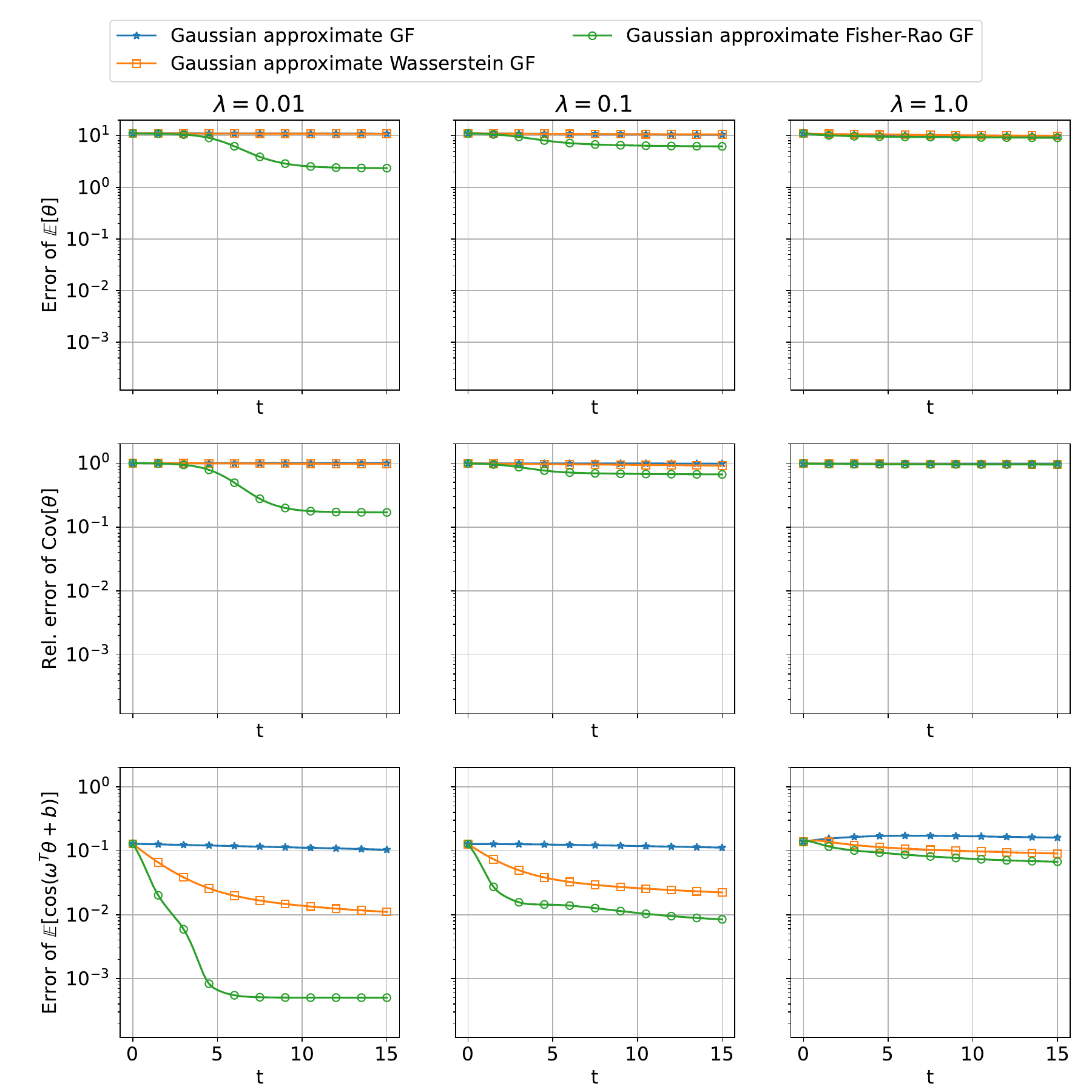}
    \caption{General posterior case: convergence of different dynamics in terms of $L_2$ error of $\E[\theta_t]$, the relative Frobenius norm error of the covariance $ \frac{\lVert \Cov[\theta_t] - \Cov[\theta_{{\rm true}}]\rVert_F}{\lVert \Cov[\theta_{\rm true}]\rVert_F}$, and the error of $\E[\cos(\omega^T \theta_t + b)]$.}
    \label{fig:Rosenbrock_gd_converge}
\end{figure}

In summary, along with the numerical results obtained by particle methods in \Cref{sec-affine-invariance-experiments}, we observe that affine-invariant modifications improve convergence for both Gaussian and particle approximations. Generally, Gaussian approximation methods are more efficient for sampling Gaussian and logconcave posterior distributions, as well as general posteriors with small $\lambda$, in terms of the number of function evaluations. However, this efficiency comes at the expense of computing the Hessian matrix. For general posterior distributions with $\lambda=1$ that deviate significantly from Gaussian distributions, Gaussian approximation may not accurately represent the distribution (See \Cref{fig:Rosenbrock_gd_density}). In such cases, particle methods, due to their nonparametric nature, might offer improved representations (See \Cref{fig:Rosenbrock_gd_particle_density}), but achieving convergence remains challenging and could require a large number of particles, with or without affine-invariant modifications.

\section{Application: Darcy Flow}
\label{sec:darcy}
In this section, we assess the effectiveness of the aforementioned approaches in addressing Bayesian inverse problems, specifically focusing on a 1D Darcy flow problem. The Darcy equation describes the pressure field $p(x)$ in a porous medium with a parameterized, positive permeability field $a(x,\theta)$, through the following PDE
\begin{equation}
\label{eq:Darcy}
    \begin{aligned}
       -\partial_x \bigl(a(x, \theta) \partial_x  p(x)\bigr) &= f(x), \quad &&x\in D,\\
    p(x) &= 0, \quad &&x\in \partial D.         
    \end{aligned}
\end{equation}
Here the computational domain is $D = [0,1]$. Homogeneous Dirichlet boundary conditions are applied on  $\partial D$, and $f$ is the source of the fluid (see the left of \Cref{fig:darcy-1d-ref}):
\begin{align}
    f(x) = \begin{cases}
               2000 & 0 \leq x \leq \frac{1}{3}\\
               1000 & \frac{1}{3} < x \leq \frac{2}{3}\\
               0 & \frac{2}{3} < x \leq 1\\
            \end{cases}. 
\end{align} 
The Bayesian inverse problem considered here is to determine parameter $\theta$ of
the field $a(\cdot;\theta)$ (see the right of \Cref{fig:darcy-1d-ref}) from observations $y_{\rm ref}$, which consist of pointwise measurements of the pressure $p(x)$ at $7$ equidistant points in the domain~(see the middle of \Cref{fig:darcy-1d-ref}), corrupted with observation noises $\eta \sim \N(0, \I)$. 
The field $a(x,\theta)$ is parametrized as
\begin{equation}
\label{eq:KL-1d}
    \log a(x,\theta) = \sum_{l=1} \theta_{(l)}\sqrt{\lambda_l} \psi_l(x),
\end{equation}
where the basis functions and eigenvalues are defined by
\begin{equation}
    \psi_l(x) = \sqrt{2}\cos(\pi l x),
                 \qquad \lambda_l = (\pi^2 |l|^2 + \tau^2)^{-2},
\end{equation}
and $\theta_{(l)} \sim \N(0,10^2)$ i.i.d. 
We note that these considerations amount to assuming
that $\log a(x, \theta)$ is a mean zero Gaussian random field
with covariance
\begin{equation}
    \mathsf{C} = (-\Delta + \tau^2 I )^{-2}.
\end{equation}
Here $-\Delta$ is the Laplacian operator on $D$ subject to homogeneous Neumann boundary conditions, acting on the space of spatial mean zero functions~\cite{dashti2013bayesian}. In fact, \eqref{eq:KL-1d} is the Karhunen-Lo\`eve expansion of the Gaussian random field.
Here, for numerical studies, we truncate the sum \eqref{eq:KL-1d} to the first $N_\theta = 16$ terms
and we take $\tau=3$. 

The forward problem, denoted by 
$y = \mathcal{G}(\theta)$, is defined as the map from $\theta$ to $p(x)$ at the $7$ equidistant points. Numerically, the map is implemented through solving the Darcy equation~\eqref{eq:Darcy} by a finite difference method on a size-$128$ grid to obtain $p(x)$ at the $7$ observation locations. A typical computational goal in Bayesian inverse problems is to sample the posterior distribution
\begin{equation}
\rho_{\rm post} \propto \exp(-\Phi_R(\theta)), 
\qquad 
\Phi_R(\theta) = \frac{1}{2}\bigl(y_{\rm ref} - \mathcal{G}(\theta)\bigr)^T\bigl(y_{\rm ref} - \mathcal{G}(\theta)\bigr) + \frac{1}{200}\theta^T\theta,
\end{equation}
where we specify a Gaussian prior $\N(0, 10^2\I)$ on $\theta$. Note that here $y_{\rm ref}, \G(\theta) \in \mathbb{R}^7$ and $\theta \in \mathbb{R}^{16}.$ The evaluation of $\Phi_R(\theta)$ requires solving the Darcy equation~\eqref{eq:Darcy}, and the corresponding gradients and Hessians are computed automatically through the Julia ForwardDiff library~\cite{RevelsLubinPapamarkou2016}. 
\begin{figure}[ht]
\centering
    \includegraphics[width=0.9\textwidth]{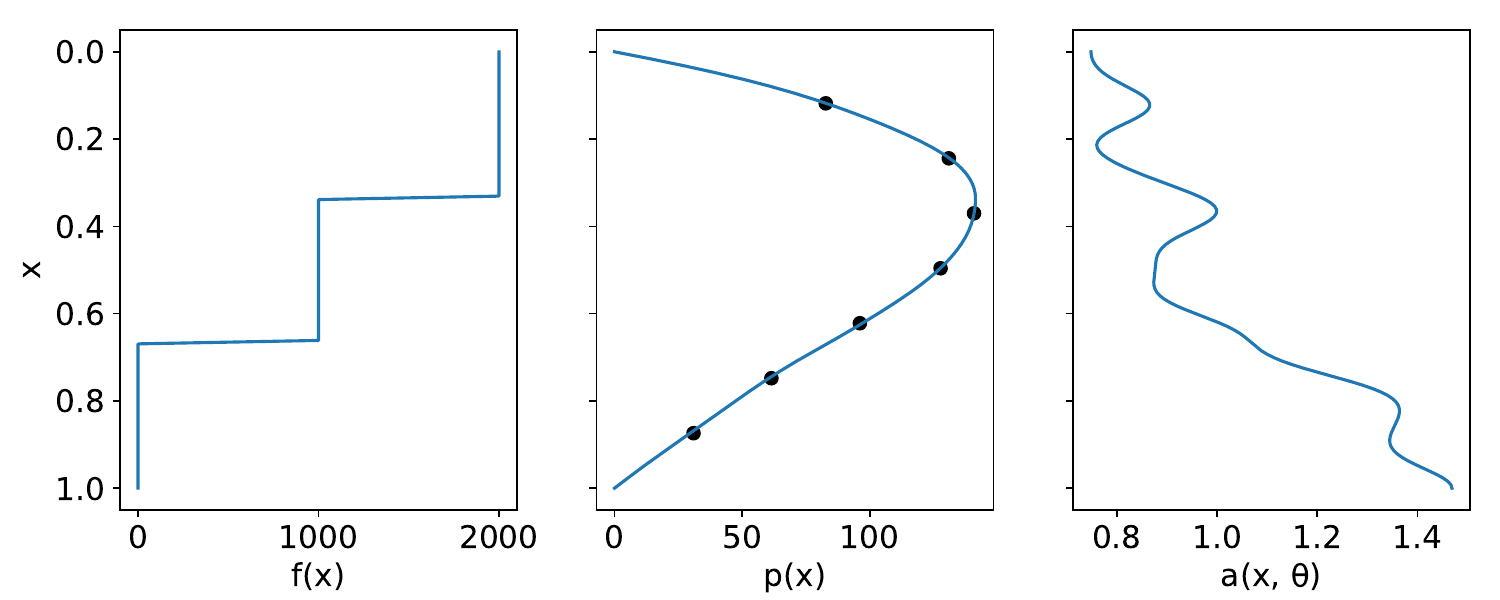}
    \caption{From left to right: source term $f(x)$, reference pressure field $p(x)$ with $7$ equidistant pointwise measurements, and reference permeability field $a(x,\theta)$ of the Darcy flow problem.}
    \label{fig:darcy-1d-ref}
\end{figure}

The reference, ground truth posterior distribution is obtained by using the preconditioned Crank–Nicolson algorithm (pCN)~\cite{cotter2013mcmc} (which is a specific function-space MCMC algorithm). We run pCN to obtain $2\times 10^8$ samples~(with $5\times10^7$ samples for the burn-in period) using step size $0.04$. The estimated covariance matrix is plotted in \Cref{fig:darcy-1d-cov}; this figure demonstrates that there is significant anisotropy. We run and compare all the aforementioned gradient flow approaches. We initialize them at $\N(0,\I)$ instead of the prior; this initialization allows for larger stable time step sizes. Details of particle based approaches, including Wasserstein GF, affine invariant Wasserstein GF, Stein GF, and affine invariant Stein GF have been presented in \Cref{sec-affine-invariance-experiments}. The scaling constants in the definition of kernels $\kappa$ in \cref{eq:Stein-kernel,eq:affine-invariant-Stein-kernel} are dropped, since they only affect the time step size. We use interacting particle systems with $J = 100$ particles.
Details of Gaussian approximated approaches, including Gaussian approximated Fisher-Rao GF, Gaussian approximated 
Waserstein GF, and Gaussian approximated GF have been presented in \Cref{sec-gaussian-numerics}. The expectations in the evolution equations are calculated by the unscented transform with $J = 2N_\theta + 1 = 33$ quadrature points.

\begin{figure}[ht]
\centering
    \includegraphics[width=0.48\textwidth]{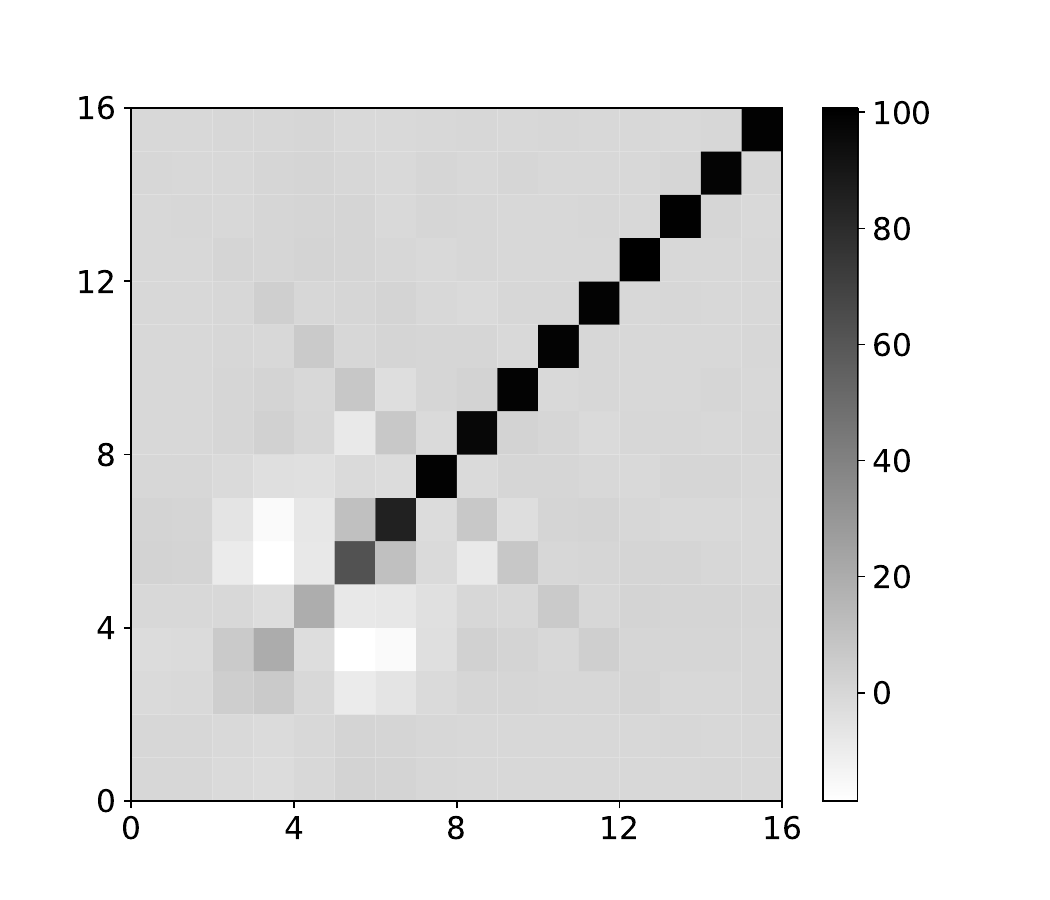}
    \caption{The estimated posterior covariance matrix through MCMC, in the Darcy flow problem.}
    \label{fig:darcy-1d-cov}
\end{figure}

The maximum stable time steps ($\Delta t$) for different approaches are outlined below: 0.030 (Wasserstein GF), 0.030 (Wasserstein GF), 0.990 (Stein GF), 0.090 (Affine Invariant Stein GF), 0.162 (Gaussian Approximated Fisher-Rao GF), 0.002 (Gaussian Approximated Fisher-Rao GF), and 0.018 (Gaussian Approximated Wasserstein GF). We determine these maximum stable time steps through iterative looping, incrementally stepping by $10^{-3}$.

The convergence curves of these approaches (in terms of relative mean errors and relative covariance errors) are presented in \Cref{fig:Darcy-1D-error}. The Gaussian approximate Fisher-Rao GF exhibits notably faster convergence, as a consequence of its relatively large time step size and its affine invariance property. Affine invariant modifications of the Wasserstein GF and Stein GF also show improved convergence rates compared to their non-affine invariant counterparts. Both the Wasserstein GF and affine invariant Wasserstein GF oscillate slightly due to the added noise.

\Cref{fig:darcy-1D-theta} shows properties of the
approximated posterior distributions by different methods, after the $5000^{\rm th}$ iteration. We compare them
with the result by MCMC and with the true parameters of the permeability field (referred to as ``Reference''). 
Notably, the true parameters fall within the 3-$\sigma$ confidence intervals determined by MCMC consistently. We observe that most approaches can reproduce the posterior mean and confidence intervals computed by MCMC accurately after $5000$ iterations, except for the Gaussian approximate GF and Stein GF which exhibit very slow convergence.
\begin{figure}[ht]
\centering
    \includegraphics[width=0.9\textwidth]{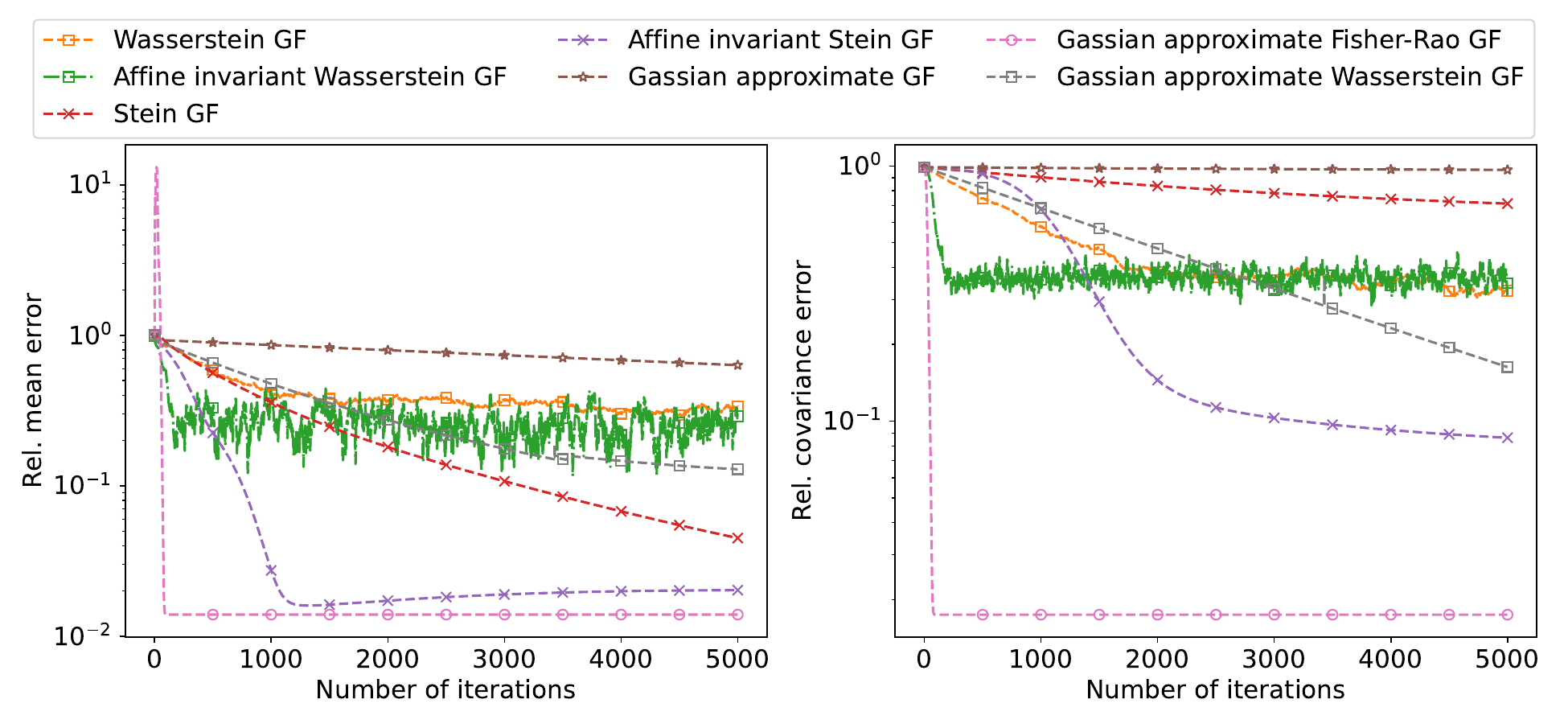}
    \caption{
     The relative mean errors and relative covariance errors of $\theta$ obtained by various gradient flow approaches, in the Darcy flow problem.}
    \label{fig:Darcy-1D-error}
\end{figure}

\begin{figure}[ht]
\centering
    \includegraphics[width=0.9\textwidth]{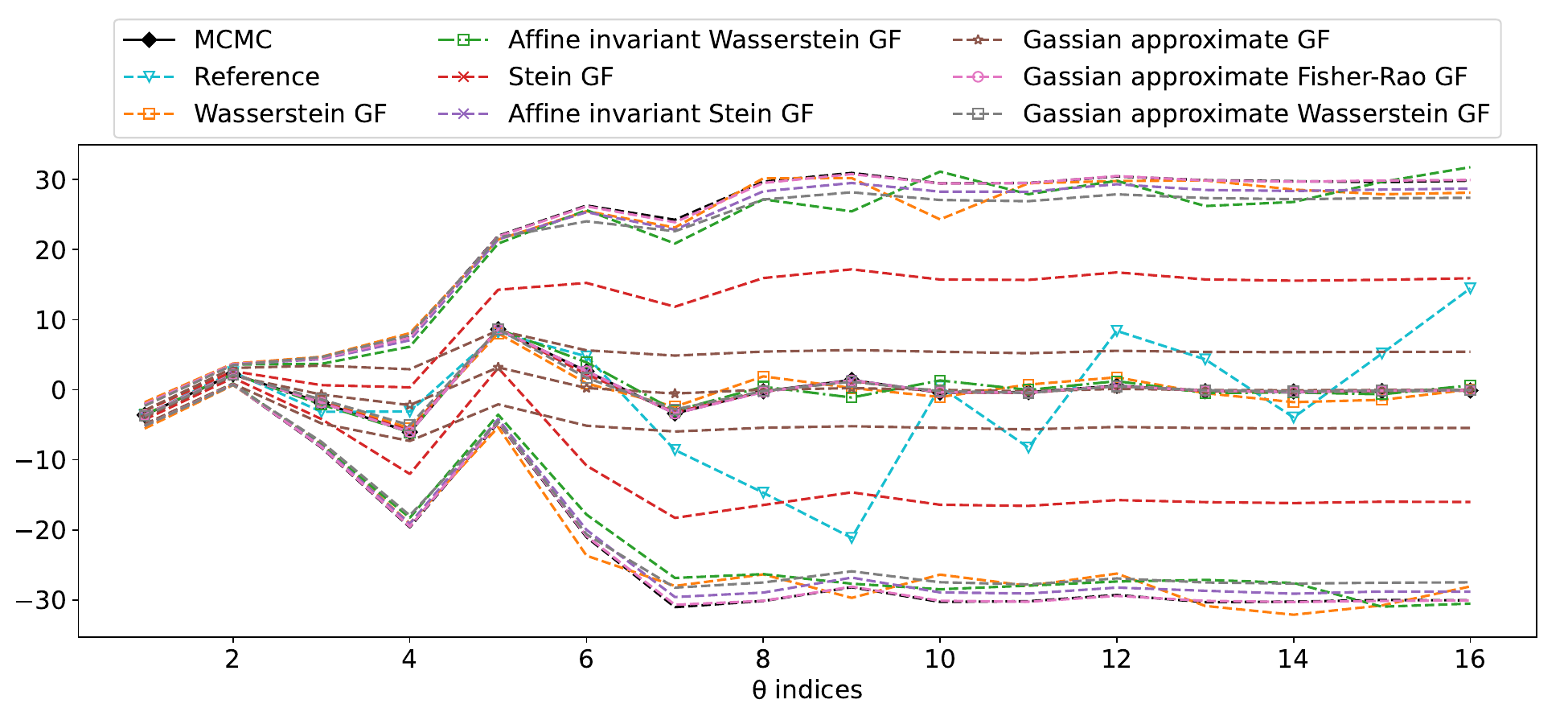}
    \caption{The estimated parameters $\theta_{(i)}$ (conditional mean) and the associated 3-$\sigma$ confidence intervals (dashed lines) obtained by various gradient flow approaches and MCMC, in the Darcy flow problem.}
    \label{fig:darcy-1D-theta}
\end{figure}
These experiments on Darcy flow inversion illustrate the successful applications of gradient flow approaches, designed using the KL energy functional with various metrics and numerically approximated by either particles or Gaussians, in tackling inverse problems in scientific computing. Particularly noteworthy is the efficient, robust performance of the Gaussian approximate Fisher-Rao gradient flow, an observation closely aligns with our theoretical expectations. 

For inverse problems in higher dimensions and in many natural science and engineering applications, computing derivatives of the forward map is costly and may not even be feasible \cite{huang2022iterated}. In such scenario, one can further apply derivative-free approximations to the Gaussian approximate and particle-approximate gradient flows. Specifically for Fisher-Rao gradient flows, the resulting algorithms connect to ensemble Kalman filters and unscented Kalman filters. In fact, the Kalman inversion algorithm in \cite{huang2022efficient} can be interpreted as a derivative-free approximation of Fisher-Rao gradient flows, which are shown to be very efficient in several large-scale applications. From this standpoint, the paper provides a theoretical foundation for designing various gradient flows to tackle these challenging inverse problems.

\section{Conclusions}
\label{sec:conclusion}

In this work, we have studied the design ingredients of continuous-time gradient flows for sampling distributions with unknown normalization constants, focusing on the energy functional and the metric tensor. Regarding the energy functional, we show  that
the KL divergence has the \textit{unique} property (among all $f$-divergences) that gradient flows resulting from this energy functional do not depend on the normalization constant of the target distribution. This makes the KL divergence advantageous in terms of numerical implementations. {Regarding the metric tensor, we highlight the importance of invariance properties and, in particular, their influence on the convergence rates of the gradient flow. The unique diffemorphism invariance property of the Fisher-Rao gradient flow allows it to achieve a uniform exponential convergence rate under general conditions;} however the particle implementation of the Fisher-Rao gradient flow is highly non-trivial. We introduce a relaxed, affine invariance property for the gradient flows; in particular, we construct various affine invariant Wasserstein and Stein gradient flows. These affine invariant gradient flows are more convenient to approximate numerically via particle methods than the diffeomorphism invariant Fisher-Rao gradient flow, and they behave more favorably compared to their non-affine-invariant versions when sampling highly anisotropic distributions.

In addition, we study Gaussian approximations of the flow that lead to efficient implementable algorithms as alternatives to particle methods. In particular, Gaussian approximations can be readily applied for the diffeomorphism invariant Fisher-Rao gradient flow. Our theory and numerical experiments demonstrate the strengths and potential limitations of the Gaussian approximate Fisher-Rao gradient flow, which is affine invariant, for a wide range of target distributions.

Our study shows that for general posterior 
distributions beyond the Gaussian and logconcave class, the consideration of affine invariance and Gaussian approximation may not be enough for designing an accurate and efficient sampler. Examples include posteriors that concentrate on manifolds with significant curvatures such as the
Rosenbrock function and also multimodal posteriors. In our future work, we will explore more sophisticated approximations such as Gaussian mixtures \cite{lin2019fast, daudel2021mixture,daudel2021infinite} and other invariance properties to design samplers for these challenging scenarios.

\vspace{0.5in}
\noindent {\bf Data and Code Availability.} All codes used to produce the
numerical results and figures in this paper are available at
\url{https://github.com/Zhengyu-Huang/InverseProblems.jl/tree/master/Gradient-Flow}. 

\vspace{0.5in}
\noindent {\bf Acknowledgments.} YC acknowledges the support from the Air Force Office of Scientific Research under MURI award number FA9550-20-1-0358 (Machine Learning and Physics-Based Modeling and Simulation). DZH and AMS are supported by NSF award AGS1835860 and by the generosity of Eric and Wendy Schmidt by recommendation of the Schmidt Futures program; 
DZH is also supported by High-performance Computing Platform of Peking University;
AMS is also
supported by the Office of Naval Research (ONR) through grant N00014-17-1-2079
and by a Department of Defense Vannevar Bush Faculty Fellowship. SR is supported by Deutsche Forschungsgemeinschaft (DFG) - Project-ID 318763901 - SFB1294.
JH is supported by NSF grant DMS-2054835.

\bibliographystyle{plain}

\newpage
\appendix

\section{Unique Property of the KL Divergence Energy}
\subsection{Proof of \Cref{thm: KL unique f divergence}}
\label{sec: Proofs of prop: KL unique f divergence}
\begin{proof}
We first note that the KL divergence satisfies the desired property:
for any $c \in (0,\infty)$ and $\rho_{\rm post} \in \PP$ it holds that
$$\mathrm{KL}[\rho \Vert  c\rho_{\rm post}]-\mathrm{KL}[\rho \Vert  \rho_{\rm post}] = -\log c.$$ 
Now we establish uniqueness. For any $f-$divergence with property that
$D_f[\rho \Vert c\rho_{\rm post}]-D_f[\rho \Vert \rho_{\rm post}]$ is independent of $\rho$, we have
\begin{equation}
    \lim_{t \to 0} \frac{(D_f[\rho + t\sigma \Vert c\rho_{\rm post}]-D_f[\rho + t\sigma \Vert \rho_{\rm post}]) - (D_f[\rho \Vert c\rho_{\rm post}]-D_f[\rho \Vert \rho_{\rm post}]) }{t} = 0,
\end{equation}
for any bounded, smooth function $\sigma$ supported in $B^d(0,R)$ that integrates to zero. Here $R > 0$ is a finite parameter that we will choose later, and $d=N_{\theta}$ is the dimension of $\theta$. In the above formula we have used the fact that for sufficiently small $t$, one has $\rho+t\sigma \in \PP$ since $R$ is finite, so the formula is well defined. By direct calculations, we get
\begin{equation}
    \int_{B^d(0,R)} \left(f'(\frac{\rho}{c\rho_{\rm post}}) -  f'(\frac{\rho}{\rho_{\rm post}})\right)\sigma {\rm d}\theta= 0.
\end{equation}
Since $\sigma$ is arbitrary, $f'(\frac{\rho}{c\rho_{\rm post}}) -  f'(\frac{\rho}{\rho_{\rm post}})$ must be a constant function in $B^d(0,R)$. 



Because $\rho$ and $\rho_{\rm post}$ integrate to $1$ and they are continuous, there exists $\theta^\dagger$ such that $\rho (\theta^\dagger)/\rho_{\rm post}(\theta^\dagger) = 1$. Choose $R$ sufficiently large such that $\theta^\dagger \in B^d(0,R)$. Then, we obtain
\begin{equation}
    \label{eq:A1}
f'\Bigl(\frac{\rho(\theta)}{\rho_{\rm post}(\theta)}\Bigr) - 
f'\Bigl(\frac{\rho(\theta)}{c\rho_{\rm post}(\theta)}\Bigr)
=f'\Bigl(\frac{\rho(\theta^\dagger)}{\rho_{\rm post}(\theta^\dagger)}\Bigr) - 
f'\Bigl(\frac{\rho(\theta^\dagger)}{c\rho_{\rm post}(\theta^\dagger)}\Bigr) = f'(1) - f'(1/c),
\end{equation}
for any $\theta \in B^d(0,R)$. As $R$ can be arbitrarily large, the above identity holds for all $\theta \in \R^d$. Let $g(1/c):=f'(1) - f'(1/c)$. We have obtained
\begin{equation}
\label{eqn A1}
    f'\Bigl(\frac{\rho(\theta)}{\rho_{\rm post}(\theta)}\Bigr) - f'\Bigl(\frac{\rho(\theta)}{c\rho_{\rm post}(\theta)}\Bigr) = g(1/c),
\end{equation}
where $c$ is an arbitrary positive number. Note, furthermore, that $g(\cdot)$ 
is continuous since $f$ is continuously differentiable. 
Since $\rho$ and $\rho_{\rm post}$ are arbitrary, 
we can write \eqref{eqn A1} equivalently as
\begin{equation}
\label{eqn: KL unique f divergence, function equality}
    f'(y)-f'(cy) = g(c),
\end{equation}
for any $y, c \in \mathbb{R}_{+}$. Let $h: \mathbb{R} \to \mathbb{R}$ such that $h(z) = f'(\exp(z))$. Then, we can equivalently formulate \eqref{eqn: KL unique f divergence, function equality} as
\begin{equation}
\label{eq:a3}
    h(z_1) - h(z_2) = r(z_1 - z_2),
\end{equation}
for any $z_1, z_2\in \mathbb{R}$ and $r:\mathbb{R} \to \mathbb{R}$ such that $r(t) = g(\exp(-t))$. 

We can show $r$ is linear function. Setting $z_1=z_2$ in \eqref{eq:a3}
shows that $r(0)=0.$ Note also that, again by \eqref{eq:a3},
$$r(z_1-z_2)+r(z_2-z_3)=h(z_1)-h(z_3)=r(z_1-z_3).$$
Hence, since $z_1,z_2$ and $z_3$ are arbitrary, we deduce that
for any $x, y \in \mathbb{R}$, it holds that
\begin{equation}
\label{eqn A4}
    r(x) + r(y) = r(x+y).
\end{equation}
Furthermore $r$ is continuous since $f$ is assumed continuously
differentiable. With the above conditions, it is a standard result in functional equations that $r(x)$ is linear. Indeed, as a sketch of proof, by \eqref{eqn A4} we can first deduce $r(n) = n r(1)$ for $n \in \mathbb{Z}$. Then, by setting $x,y$ to be dyadic rationals, we can deduce $r(\frac{i}{2^k}) = \frac{i}{2^k} r(1)$ for any $i, k \in \mathbb{Z}$. Finally using the continuity of the function $r$, we get $r(x) = xr(1)$ for any $x \in \mathbb{R}$. For more details see \cite{friedman1962functional}.

Using the fact that $r$ is linear and the equation \eqref{eq:a3}, we know that $h$ is an affine function and thus $f'(\exp(z)) = az+b$ for some $a, b \in \mathbb{R}$. Equivalently, $f'(y) = a \log(y) + b$. Using the condition $f(1) = 0$, we get $f(y) = a y \log(y) + (b-a)(y-1)$. Plugging this $f$ into the formula for $D_f$, we get
\[D_f[\rho||\rho_{\rm post}] = a \text{KL}[\rho||\rho_{\rm post}],\]
noting that the affine term in $f(y)$ has zero contributions in the formula for $D_f$.
The proof is complete.
\end{proof}

\section{Proof for the Convergence of Fisher-Rao Gradient Flows}
\subsection{Proof of \Cref{thm:FR-convergence}}
\label{proof:FR-convergence}
\begin{proof}
The Fisher-Rao gradient flow of the KL divergence~\cref{eq:mean-field-Fisher-Rao} 
can be solved analytically using the variation of constants formula as follows.
First note that
\begin{align*}
&\frac{\partial \log \rho_t(\theta)}{\partial t} 
= \log \rho_{\rm post}(\theta) - \log \rho_t(\theta) - \E_{\rho_t}[\log \rho_{\rm post}(\theta) - \log \rho_t(\theta)],
\end{align*}
so that
\begin{align*}
&\frac{\partial e^t \log \rho_t(\theta)}{\partial t} 
= e^t\log \rho_{\rm post}(\theta) - e^t\E_{\rho_t}[\log \rho_{\rm post}(\theta) - \log \rho_t(\theta)].
\end{align*}
Thus
\begin{equation*}\log \rho_t(\theta)
= (1-e^{-t})\log \rho_{\rm post}(\theta) + e^{-t}\log\rho_0(\theta) -  \int_0^{t} e^{\tau-t}\E_{\rho_\tau}[\log \rho_{\rm post}(\theta) - \log \rho_\tau(\theta)] \dd \tau.
\end{equation*}
It follows that
\begin{align}\label{e:rhot}
&\rho_t(\theta) =\frac{1}{Z_t} \rho_0(\theta)^{ e^{-t}}\rho_{\rm post}(\theta)^{1 - e^{-t}},\quad \frac{\rho_t(\theta)}{\rho_{\rm post}(\theta)}=\frac{1}{Z_t}\left(\frac{\rho_0(\theta)}{\rho_{\rm post}(\theta)}\right)^{e^{-t}},
\end{align}
where $Z_t$ is a normalization constant. Indeed, we have the formula
\[Z_t=\int \left(\frac{\rho_0(\theta)}{\rho_{\rm post}(\theta)}\right)^{ e^{-t}}\rho_{\rm post}(\theta) \dd \theta\]
by using the condition $\int \rho_t(\theta) {\rm d}\theta = 1$.

In the following, we first obtain the following lower bound on $Z_t$: 
\begin{align}\label{e:Ztbound}
    Z_t\geq e^{-Ke^{-t}(1+B)},
\end{align}
where the constants $K,B$ are from \cref{e:asup1} and \cref{e:asup2}. 
In fact, using our assumptions \cref{e:asup1} and \cref{e:asup2}, we have
\begin{align*}
    Z_t&=\int \left(\frac{\rho_0(\theta)}{\rho_{\rm post}(\theta)}\right)^{ e^{-t}}\rho_{\rm post}(\theta) \dd \theta
    \geq \int \left(e^{-K(1+|\theta|^2)}\right)^{ e^{-t}}\rho_{\rm post}(\theta) \dd \theta\\
    &=\int e^{-Ke^{-t}(1+|\theta|^2)}\rho_{\rm post}(\theta) \dd \theta
    \geq  e^{\int-Ke^{-t}(1+|\theta|^2)\rho_{\rm post}(\theta)\dd \theta} \geq e^{-Ke^{-t}(1+B)},
\end{align*}
where in the second to last inequalities, we used Jensen's inequality and the fact that $e^x$ is convex. 

By plugging \cref{e:asup2} and \cref{e:Ztbound} into \cref{e:rhot}, we get
\begin{align}\label{e:density_ratio}
    \frac{\rho_t(\theta)}{\rho_{\rm post}(\theta)}
    =\frac{1}{Z_t}\left(\frac{\rho_0(\theta)}{\rho_{\rm post}(\theta)}\right)^{e^{-t}}
    \leq e^{Ke^{-t}(1+B)} e^{Ke^{-t}(1+|\theta|^2)}
    =e^{Ke^{-t}(2+B+|\theta|^2)}.
\end{align}
Using \cref{e:density_ratio}, we get the following upper bound on the KL divergence:
\begin{align}\begin{split}\label{e:KLub1}
    & {\rm KL}\Bigl[\rho_{t} \Big\Vert  \rho_{\rm post}\Bigr]\\
    &\quad=\int \rho_t(\theta) \log \frac{\rho_t(\theta)}{\rho_{\rm post}(\theta)}{\rm d} \theta
    \leq 
    \int \rho_t(\theta) \log(e^{Ke^{-t}(2+B+|\theta|^2)}){\rm d} \theta\\
    &\quad=\int \rho_t(\theta) Ke^{-t}(2+B+|\theta|^2){\rm d} \theta
    =Ke^{-t}\left((2+B)+\int  |\theta|^2 \rho_t(\theta)  {\rm d} \theta\right).
\end{split}\end{align}
For the last integral in \cref{e:KLub1}, using \cref{e:rhot} and the
H\"older inequality, we can rewrite it as
\begin{align}\begin{split}\label{e:KLub2}
    &\phantom{{}={}}\frac{1}{Z_t}\int  |\theta|^2 \rho_0(\theta)^{e^{-t}}\rho_{\rm post}(\theta)^{1-e^{-t}}  {\rm d} \theta
    =\frac{1}{Z_t}\int   (|\theta|^2\rho_0(\theta))^{e^{-t}}(|\theta|^2\rho_{\rm post}(\theta))^{1-e^{-t}}  {\rm d}\theta\\
    &\leq \frac{1}{Z_t}\left(\int   |\theta|^2\rho_0(\theta)\dd \theta\right)^{e^{-t}}\left(\int (|\theta|^2\rho_{\rm post}(\theta))  {\rm d} \theta\right)^{1-e^{-t}}\leq Be^{K e^{-t}(1+B)},
\end{split}\end{align}
where for the last inequality we used \cref{e:Ztbound} and our assumption \cref{e:asup2}.

Combining \cref{e:KLub1} and \cref{e:KLub2} together, for $t\geq \log((1+B)K)$, we have
\begin{align}
    {\rm KL}\Bigl[\rho_{t} \Big\Vert  \rho_{\rm post}\Bigr]\leq Ke^{-t}(2+B + Be^{K e^{-t}(1+B)})
    \leq (2+B + eB)Ke^{-t}.
\end{align}
This completes the proof of \Cref{thm:FR-convergence}.
\end{proof}
\section{Intuitions Regarding the Definitions of Various Metric Tensors}
\vspace{1em}
\subsection{The Fisher-Rao Riemannian Metric}
\label{appendix: intuition Fisher-Rao Riemannian Metric}
Writing tangent vectors on a multiplicative scale, by setting $\sigma=\rho \psi_\sigma$, we
see that this metric may be written as $g_{\rho}^{\mathrm{FR}}(\sigma_1,\sigma_2) = \int \psi_{\sigma_1} \psi_{\sigma_2} \rho \mathrm{d}\theta,$
and hence that in the $\psi_\sigma$ variable 
the metric is described via the $L_\rho^2$ inner-product. That is, the Fisher-Rao Riemannian metric measures the multiplicative factor via the $L_\rho^2$ energy. In \Cref{appendix: intuition Wasserstein Riemannian Metric}, we will see that another important metric, the Wasserstein Riemannian metric, may also be understood as a $L_{\rho}^2$ measurement, but of the velocity field instead.
\subsection{The Wasserstein Riemannian Metric}
\label{appendix: intuition Wasserstein Riemannian Metric}
   The Wasserstein Riemannian metric has a transport interpretation. To understand
   this fix $\sigma \in T_\rho \PP$ and consider the family of \emph{velocity
    fields} $v$ related to $\sigma$ via the constraint $\sigma = -\nabla_\theta \cdot (\rho v)$. Then define $v_{\sigma} = \argmin_v \int \rho |v|^2$ in which
   the minimization is over all $v$ satisfying the constraint. A formal Lagrange multiplier argument can be used to deduce that $v_{\sigma} = \nabla_\theta \psi_{\sigma}$ for some $\psi_{\sigma}$. This motivates the relationship
   appearing in \cref{eq:liouville} as well as the form of the
Wasserstein Riemannian metric appearing in \cref{eqn-Wasserstein-Riemannian-metric}
which may then be viewed as measuring the kinetic energy $\int \rho |v_{\sigma}|^2 \mathrm{d}\theta$. We emphasize that, for ease of understanding, our discussion on the Riemannian structure of the Wasserstein metric is purely formal; for rigorous treatments, the reader can consult \cite{ambrosio2005gradient}.
\subsection{The Stein Riemannian Metric}
\label{appendix: intuition Stein Riemannian Metric}
As in the Wasserstein setting, the Stein Riemannian metric~\cite{liu2017stein} also has a transport 
interpretation. The Stein Riemannian metric identifies, for each $\sigma \in T_\rho \PP$, the set of velocity fields $v$ satisfying the constraint $\sigma = -\nabla_\theta \cdot (\rho v)$. Then $v_{\sigma} = \argmin_v  \|v\|_{\mathcal{H}_\kappa}^2$, with minimization over all $v$ satisfying the constraint, and where $\mathcal{H}_\kappa$ is a Reproducing Kernel Hilbert Space (RKHS) with kernel $\kappa$. A formal Lagrangian multiplier argument shows that 
$$v_{\sigma} = \int \kappa(\theta,\theta',\rho) \rho(\theta')\nabla_{\theta'} \psi_{\sigma}(\theta') \mathrm{d}\theta'$$ for some $\psi_{\sigma}.$ The Stein metric measures this transport change via the RKHS norm $\|v_{\sigma}\|_{\mathcal{H}_\kappa}^2$, leading to
the interpretation that the Stein Riemannian metric can be written in the form
\[g_{\rho}^{\mathrm{S}}(\sigma_1,\sigma_2) = \langle v_{\sigma_1}, v_{\sigma_2}\rangle_{\mathcal{H}_\kappa}. \]
\section{Proofs for Affine Invariance of Gradient Flows}
\label{appendix:affine-invariant}

\subsection{Preliminaries}
In this section, we present a lemma concerning the change-of-variable formula under the gradient and divergence operators, which is useful for our proofs in later subsections.
\begin{lemma}
\label{lemma:change_variable}
Consider any invertible affine mapping from $\theta \in \R^{N_{\theta}}$ to  $\tilde\theta \in \R^{N_{\theta}}$ defined by $\tilde\theta =\varphi(\theta)= A \theta + b$. 
For any differentiable scalar field $f: \R^{N_{\theta}} \rightarrow \R$ and 
vector field $g: \R^{N_{\theta}} \rightarrow \R^{N_{\theta}}$, we have
$$\nabla_{\theta} f(\theta) = A^{T} \nabla_{\tilde{\theta}} \tilde{f}(\tilde\theta) \quad \text{and} \quad \nabla_{\theta} \cdot g(\theta) =  \nabla_{\tilde{\theta}} \cdot ( A \tilde{g} (\tilde\theta) ), $$
where $\tilde{f}(\tilde{\theta}) := f(A^{-1}(\tilde{\theta} - b))$ and $\tilde{g}(\tilde{\theta}) := g(A^{-1}(\tilde{\theta} - b))$.
\end{lemma}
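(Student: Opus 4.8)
The plan is to prove both identities by direct application of the multivariate chain rule, carefully tracking the constant Jacobian $A$ of the affine map $\varphi$ and its inverse $A^{-1}$.

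First I would establish the gradient identity. Since $\tilde\theta = A\theta + b$ is invertible with $\theta = A^{-1}(\tilde\theta - b)$, the map $\tilde\theta \mapsto \theta$ has constant Jacobian $A^{-1}$. Writing $\tilde f(\tilde\theta) = f\big(A^{-1}(\tilde\theta - b)\big)$ and differentiating componentwise via the chain rule gives $\partial_{\tilde\theta_i}\tilde f(\tilde\theta) = \sum_k (A^{-1})_{ki}\,\partial_{\theta_k}f(\theta)$, that is, $\nabla_{\tilde\theta}\tilde f(\tilde\theta) = A^{-T}\nabla_\theta f(\theta)$. Multiplying both sides on the left by $A^T$ yields the claimed formula $\nabla_\theta f(\theta) = A^T\nabla_{\tilde\theta}\tilde f(\tilde\theta)$.

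Next I would treat the divergence identity in the same spirit. Expanding $\nabla_{\tilde\theta}\cdot(A\tilde g(\tilde\theta)) = \sum_i\partial_{\tilde\theta_i}\big(\sum_j A_{ij}\,\tilde g_j(\tilde\theta)\big)$ and again using $\partial_{\tilde\theta_i}\tilde g_j(\tilde\theta) = \sum_k (A^{-1})_{ki}\,\partial_{\theta_k}g_j(\theta)$, I would rearrange and carry out the sum over $i$ to recognize $\sum_i A_{ij}(A^{-1})_{ki} = (A^{-1}A)_{kj} = \delta_{kj}$. This collapses the triple sum to $\sum_j \partial_{\theta_j}g_j(\theta) = \nabla_\theta\cdot g(\theta)$, which is exactly the asserted identity.

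There is no substantive obstacle here: both statements are elementary consequences of the chain rule together with the cancellation $A^{-1}A = I$. The only point requiring a little care is the bookkeeping of which Jacobian ($A$ versus $A^{-1}$, transposed or not) appears where, since the composition defining $\tilde f$ and $\tilde g$ is with $\varphi^{-1}$ rather than $\varphi$; writing everything out in index notation as above removes any ambiguity, and I would present the argument in that form.
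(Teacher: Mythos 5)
Your proof is correct and takes essentially the same approach as the paper's: both are direct chain-rule computations in index notation. The only (cosmetic) difference is direction — the paper expands $\partial/\partial\theta_i$ in terms of $\partial/\partial\tilde\theta_j$ via $\partial\tilde\theta_j/\partial\theta_i=A_{ji}$, so no inverse ever appears, whereas you differentiate $\tilde f,\tilde g$ with respect to $\tilde\theta$ using $A^{-1}$ and then cancel with $A$.
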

\begin{proof} Note that $\tilde{f}(\tilde{\theta}) = f(\theta)$ and $\tilde{g}(\tilde{\theta}) = g(\theta)$. By direct calculations, we get
\begin{align*}
&[\nabla_{\theta} f(\theta)]_i = \frac{\partial f(\theta)}{\partial \theta_i} 
= 
\sum_j\frac{\partial f(\theta)}{\partial \tilde{\theta}_j}\frac{\partial \tilde{\theta}_j }{\partial \theta_i} 
= 
\sum_j\frac{\partial \tilde{f}(\tilde{\theta})}{\partial \tilde{\theta}_j}A_{ji} = [A^T\nabla_{\tilde{\theta}} \tilde{f}(\tilde{\theta})]_i,
\\
    &\nabla_{\theta} \cdot g(\theta) = \sum_i \frac{\partial g_i(\theta)}{\partial\theta_i} = \sum_i \sum_j \frac{\partial g_i(\theta)}{\partial\tilde{\theta}_j}\frac{\partial \tilde{\theta}_j}{\partial \theta_i} =  \sum_i \sum_j \frac{\partial \tilde{g}_i(\tilde{\theta})}{\partial\tilde{\theta}_j}A_{ji} = \nabla_{\tilde{\theta}} \cdot ( A \tilde{g} (\tilde\theta) ).
\end{align*}
This completes the proof.
\end{proof}

\begin{newremark}
\label{remark: change of variable grad and div}
    Since $\tilde{f}(\tilde{\theta}) = f(\theta)$ and $\tilde{g}(\tilde{\theta}) = g(\theta)$, we can also summarize the result in \cref{lemma:change_variable} as $\nabla_{\theta} f = A^T\nabla_{\tilde{\theta}}f$ and $\nabla_{\theta}\cdot g = \nabla_{\tilde{\theta}}\cdot (Ag)$.
\end{newremark}

\subsection{Proof of \Cref{thm:Wasserstein-affine-invariant}}
\label{proof:Wasserstein-affine-invariant}
\begin{proof}
We write down the form of the affine invariant Wasserstein gradient flow as follows:
\begin{align}
\label{appendix eqn: non-transformed Wasserstein GF}
\frac{\partial \rho_t(\theta)}{\partial t} = 
\nabla_{\theta} \cdot \Bigl(\rho_t \Prec(\theta, \rho_t) (\nabla_{\theta}\log \rho_t - \nabla_{\theta} \log \rho_{\rm post})
\Bigr).
\end{align}
Consider $\tilde{\theta} = \varphi(\theta)=A\theta + b$ and $\tilde{\rho}_t = \varphi \# \rho_t$ for an invertible affine transformation $\varphi$. Then, it suffices to show that under the assumption $\Prec(\tilde\theta, \tilde\rho) = A \Prec(\theta, \rho) A^T$, one has
\begin{align}
\label{appendix eqn: transformed Wasserstein GF}
\frac{\partial \tilde{\rho}_t(\tilde{\theta})}{\partial t} 
=\nabla_{\tilde{\theta}} \cdot \Bigl(\tilde{\rho}_t \Prec(\tilde{\theta}, \tilde{\rho}_t) (\nabla_{\tilde\theta}\log \tilde\rho_t - \nabla_{\tilde\theta} \log \tilde\rho_{\rm post})
\Bigr).
\end{align}

First, for the left hand side, by definition, we have 
\begin{equation}
\label{appendix eqn: transform of rho}
    \tilde{\rho}_t(\tilde{\theta}) = \rho_t(\varphi^{-1}(\tilde{\theta}))|\nabla_{\tilde{\theta}} \varphi^{-1}(\tilde{\theta})| = \rho_t(\theta)|A^{-1}|.
\end{equation}
For the right hand side, the chain rule leads to that 
\begin{equation}
\label{appendix eqn: change of variable grad of first variation}
    \nabla_{\tilde\theta}\log \tilde\rho_t = A^{-T}  \nabla_{\theta}\log \rho_t   \qquad
    \nabla_{\tilde\theta} \log \tilde\rho_{\rm post}  = A^{-T}  \nabla_{\theta}\log \rho_{\rm post}. 
\end{equation}
Therefore, we can write the right hand side of \cref{appendix eqn: transformed Wasserstein GF} as
\begin{equation}
\label{appendix: change of variables for rhs of Wasserstein gf}
\begin{aligned}
&\nabla_{\tilde{\theta}} \cdot \Bigl(\tilde{\rho}_t \Prec(\tilde{\theta}, \tilde{\rho}_t) (\nabla_{\tilde\theta}\log \tilde\rho_t - \nabla_{\tilde\theta} \log \tilde\rho_{\rm post})
\Bigr) \\
= &\nabla_{\tilde{\theta}} \cdot \Bigl(\tilde{\rho}_t \Prec(\tilde{\theta}, \tilde{\rho}_t) A^{-T}(\nabla_{\theta}\log \rho_t - \nabla_{\theta} \log \rho_{\rm post})
\Bigr)\\
= &\nabla_{\theta} \cdot \Bigl(\tilde{\rho}_t A^{-1}\Prec(\tilde{\theta}, \tilde{\rho}_t) A^{-T}(\nabla_{\theta}\log \rho_t - \nabla_{\theta} \log \rho_{\rm post})
\Bigr)\\
= &\nabla_{\theta} \cdot \Bigl(\rho_t A^{-1}\Prec(\tilde{\theta}, \tilde{\rho}_t) A^{-T}(\nabla_{\theta}\log \rho_t - \nabla_{\theta} \log \rho_{\rm post}) 
\Bigr) \cdot |A^{-1}|,
\end{aligned}
\end{equation}
where in the second equality, we used \cref{appendix eqn: change of variable grad of first variation}, and in the third equality, we used \cref{lemma:change_variable}.
Based on \cref{appendix eqn: transform of rho}, \cref{appendix: change of variables for rhs of Wasserstein gf} and \cref{appendix eqn: non-transformed Wasserstein GF}, a sufficient condition for \cref{appendix eqn: transformed Wasserstein GF} to hold is $A^{-1}\Prec(\tilde{\theta}, \tilde{\rho}) A^{-T} = P(\theta, \rho)$, or equivalently, $\Prec(\tilde\theta, \tilde\rho) = A \Prec(\theta, \rho) A^T$. This completes the proof.
\end{proof}

\subsection{Proof of \Cref{lem:AI-Wasserstein-MD}}
\label{proof:AI-Wasserstein-MD}


\begin{proof}
Consider the invertible affine transformation $\tilde\theta = \varphi(\theta) = A \theta + b$ and correspondingly $\tilde{\rho} = \varphi \# \rho$ and $\tilde{\rho}_{\rm post} = \varphi \# \rho_{\rm post}$. Using \cref{eq:aMFD-AI-sigma}, we get
\begin{align*}
    D(\tilde{\theta},\tilde\rho) = \frac{1}{2}h(\tilde{\theta},\tilde\rho)h(\tilde{\theta},\tilde\rho)^T = \frac{1}{2}Ah(\theta,\rho)h(\theta,\rho)^TA^T = AD(\theta, \rho)A^T.
\end{align*}
Similarly, it holds that $d(\tilde{\theta},\tilde\rho) = A d(\theta,\rho)$.
Based on these relations, we can calculate as follows:
\begin{equation}
\begin{aligned}    
&Af(\theta, \rho, \rho_{\rm post})\\
&\quad = A\Prec(\theta, \rho)\nabla_{\theta}\log \rho_{\rm post}(\theta) + A(D(\theta,\rho)-P(\theta,\rho))\nabla_{\theta}\log \rho(\theta) - Ad(\theta,\rho)\\
&\quad= 
A\Prec(\theta, \rho)A^T\nabla_{\tilde{\theta}} \log \tilde{\rho}_{\rm post}(\tilde{\theta})  + \bigl(A D(\theta,\rho) - A\Prec(\theta, \rho) \bigr)A^T\nabla_{ \tilde{\theta}} \log \tilde{\rho}(\tilde{\theta}) - Ad(\theta,\rho)  
\\
&\quad= 
\Prec(\tilde{\theta}, \tilde{\rho})\nabla_{\tilde{\theta}} \log \tilde{\rho}_{\rm post}(\tilde{\theta})  + \bigl( D(\tilde{\theta},\tilde\rho) - \Prec(\tilde{\theta}, \tilde{\rho}) \bigr)\nabla_{ \tilde{\theta}} \log \tilde{\rho}(\tilde{\theta}) - d(\tilde\theta,\tilde\rho)  
\\
&\quad=f(\tilde\theta, \tilde\rho, \tilde\rho_{\rm post}).
\end{aligned}
\end{equation}
The first equality is by definition. In the second equality, we used $A^T\nabla_{\tilde{\theta}} \log \tilde{\rho}(\tilde{\theta}) = \nabla_{{\theta}} \log {\rho}({\theta})$. In the third equality, we used the condition in \cref{thm:Wasserstein-affine-invariant}. With this result, the mean-field equation is affine invariant (\cref{def: affine invariant mean-field equation}). The proof is complete.
\end{proof}

\subsection{Proof of \Cref{proposition:Stein-affine-invariant}}
\label{proof:Stein-affine-invariant}

\begin{proof}
The proof is similar to that in \Cref{proof:Wasserstein-affine-invariant}. The affine invariant Stein gradient flow has the form
\begin{align}
\label{appendix eqn: nontransformed stein GF}
\frac{\partial \rho_t(\theta)}{\partial t} = \nabla_{\theta} \cdot \Bigl[\rho_t(\theta)\int \kappa(\theta,\theta',\rho_t)\rho_t(\theta')\Prec( \theta, \theta',\rho_t)\nabla_{\theta'}\bigl(\log\rho_t(\theta') - \log \rho_{\rm post}(\theta') \bigr)\dd \theta' \Bigr]. 
\end{align}
Consider $\tilde{\theta} = \varphi(\theta)=A\theta +b$ and $\tilde{\rho}_t = \varphi \# \rho_t$ for an invertible affine transformation $\varphi$. Then, it suffices to show that under the assumed condition, one has
\begin{align}
\label{appendix eqn: transformed stein GF}
\frac{\partial \tilde{\rho}_t(\tilde{\theta})}{\partial t} 
&= \nabla_{\tilde\theta} \cdot \Bigl[\tilde\rho_t(\tilde\theta)\int \kappa(\tilde{\theta},\tilde{\theta}',\tilde\rho_t)\tilde{\rho}_t(\tilde\theta')\Prec( \tilde{\theta}, \tilde{\theta}',\tilde{\rho}_t) \nabla_{\tilde\theta'}\bigl(\log\tilde\rho_t(\tilde\theta') - \log \tilde\rho_{\rm post}(\tilde\theta') \bigr)\dd \tilde\theta' \Bigr].
\end{align}
For the right hand side of \cref{appendix eqn: transformed stein GF}, we have
\begin{equation}
\label{appendix eqn: rhs of transformed stein flow}
    \begin{aligned}
        &\nabla_{\tilde\theta} \cdot \Bigl[\tilde\rho_t(\tilde\theta)\int \kappa(\tilde{\theta},\tilde{\theta}',\tilde\rho_t)\tilde{\rho}_t(\tilde\theta')\Prec( \tilde{\theta}, \tilde{\theta}',\tilde{\rho}_t) \nabla_{\tilde{\theta}'}\bigl(\log\tilde\rho_t(\tilde\theta') - \log \tilde\rho_{\rm post}(\tilde\theta') \bigr)\dd \tilde\theta' \Bigr]\\
        =&\nabla_{\tilde\theta} \cdot \Bigl[\tilde\rho_t(\tilde\theta)\int \kappa(\tilde{\theta},\tilde{\theta}',\tilde\rho_t)\tilde{\rho}_t(\tilde\theta')\Prec( \tilde{\theta}, \tilde{\theta}',\tilde{\rho}_t) A^{-T}\nabla_{\theta'}\bigl(\log\rho_t(\theta') - \log\rho_{\rm post}(\theta') \bigr)\dd \tilde\theta' \Bigr]\\
        =&\nabla_{\tilde\theta} \cdot \Bigl[\tilde\rho_t(\tilde\theta)\int \kappa(\tilde{\theta},\varphi(\theta'),\tilde\rho_t){\rho}_t(\theta')\Prec( \tilde{\theta}, \varphi(\theta'),\tilde{\rho}_t) A^{-T}\nabla_{\theta'}\bigl(\log\rho_t(\theta') - \log\rho_{\rm post}(\theta') \bigr)\dd \theta' \Bigr]\\
        =&\nabla_{\theta} \cdot \Bigl[\tilde\rho_t(\tilde\theta)\int \kappa(\tilde{\theta},\varphi(\theta'),\tilde\rho_t){\rho}_t(\theta')A^{-1}\Prec( \tilde{\theta}, \varphi(\theta'),\tilde{\rho}_t) A^{-T}\nabla_{\theta'}\bigl(\log\rho_t(\theta') - \log\rho_{\rm post}(\theta') \bigr)\dd \theta' \Bigr]\\
        =&(\nabla_{\theta} \cdot \mathsf{f}) \cdot |A^{-1}|\\
        &\mathsf{f}=\rho_t(\theta)\int \kappa(\tilde{\theta},\varphi(\theta'),\tilde\rho_t){\rho}_t(\theta')A^{-1}\Prec( \tilde{\theta}, \varphi(\theta'),\tilde{\rho}_t) A^{-T} \nabla_{\theta'}\bigl(\log\rho_t(\theta') - \log\rho_{\rm post}(\theta') \bigr)\dd \theta',
    \end{aligned}
\end{equation}
where in the first equality, we used \cref{appendix eqn: change of variable grad of first variation}; in the second equality, we changed of coordinates in the integral $\tilde{\theta}' = \varphi(\theta')$; in the third equality, we used~\cref{lemma:change_variable} about $g$, and in the last equality, we used \cref{appendix eqn: transform of rho}.

By \cref{appendix eqn: transform of rho}, \cref{appendix eqn: rhs of transformed stein flow} and \cref{appendix eqn: nontransformed stein GF}, a sufficient condition for \cref{appendix eqn: transformed stein GF} to hold is
\[\kappa(\tilde{\theta},\varphi(\theta'),\tilde\rho)A^{-1}\Prec( \tilde{\theta}, \varphi(\theta'),\tilde{\rho}) A^{-T} = \kappa(\theta,\theta',\rho) \Prec(\theta, \theta', \rho), \]
or equivalently,
$$ \kappa(\tilde\theta, \tilde\theta',\tilde\rho)\Prec(\tilde \theta, \tilde \theta',  \tilde\rho) 
= \kappa(\theta, \theta', \rho) A \Prec(\theta, \theta', \rho) A^T,$$
where $\tilde{\theta}' = \varphi(\theta')$.
This completes the proof.
\end{proof}

\subsection{Proof of \Cref{lem:AI-Stein-MD}}
\label{proof:AI-Stein-MD}


\begin{proof}
Consider the invertible affine transformation $\tilde\theta = \varphi(\theta) = A \theta + b$ and correspondingly $\tilde{\rho} = \varphi \# \rho, \tilde{\rho}_{\rm post} = \varphi \# \rho_{\rm post}$.
By direct calculations, we get
\begin{equation}
\begin{aligned}    
& Af(\theta, \rho, \rho_{\rm post})\\
&\quad =\int \kappa(\theta,\theta',\rho)A\Prec( \theta, \theta',\rho)\nabla_{\theta'} \bigl(\log\rho(\theta') - \log \rho_{\rm post}(\theta') \bigr)\rho(\theta')\dd \theta'\\
&\quad=\int \kappa(\theta,\theta',\rho) A\Prec( \theta, \theta', \rho ) A^T
\bigl( \nabla_{\tilde\theta} \log \tilde{\rho}_{\rm post}(\tilde{\theta}') -  \nabla_{\tilde\theta} \log \tilde{\rho}(\tilde{\theta}')
\bigr) \rho(\theta') \dd \theta'
\\
&\quad =\int \kappa(\tilde\theta,\tilde\theta', \tilde\rho) \Prec( \tilde\theta, \tilde\theta', \tilde\rho )
\bigl( \nabla_{\tilde\theta} \log \tilde{\rho}_{\rm post}(\tilde{\theta}') -  \nabla_{\tilde\theta} \log \tilde{\rho}(\tilde{\theta}')
\bigr) \tilde\rho(\tilde\theta') \dd \tilde\theta'
\\
&\quad =f(\tilde\theta, \tilde\rho, \tilde\rho_{\rm post}).
\end{aligned}
\end{equation}
The first equality is by definition. In the second equality, we used $A^T\nabla_{\tilde{\theta}} \log \tilde{\rho}(\tilde{\theta}) = \nabla_{{\theta}} \log {\rho}({\theta})$. In the third equality, we used the the relation $\rho(\theta') = \tilde{\rho}(\tilde{\theta}')|A|$ due to \cref{appendix eqn: transform of rho}, and ${\rm d}\tilde{\theta}' = |A|{\rm d}\theta'$; we also used the condition in \cref{proposition:Stein-affine-invariant}. With this result, the mean-field equation is affine invariant (\cref{def: affine invariant mean-field equation}). The proof is complete.

\end{proof}


\section{Proofs for Gaussian Approximate Gradient Flows}
\subsection{Preliminaries}
We start with the following Stein's identities concerning
the Gaussian density function $\rho_a.$

\begin{lemma}
\label{lemma: stein equality}
Assume $\theta \sim \N(m, C)$ with density $\rho_a(\theta)=\rho_a(\theta;m,C)$, 
we have
\begin{align}\label{e:derrho}
    \nabla_m \rho_a(\theta)=-\nabla_\theta \rho_a(\theta)\quad \text{and}\quad \nabla_C \rho_a(\theta)=\frac{1}{2}\Hess \rho_a(\theta).
\end{align}
Furthermore, for any scalar field $f:\mathbb{R}^{N_{\theta}} \to \mathbb{R}$ 
and vector field $g:\mathbb{R}^{N_{\theta}} \to \mathbb{R}^{N_{\theta}}$, we have
\begin{equation}
\begin{aligned} 
&\E_{\rho_a}[\nabla_{\theta} g(\theta)] = \nabla_{m} \E_{\rho_a}[g(\theta)] = \Cov[g(\theta), \theta] C^{-1}, \\
&\E_{\rho_a}[\Hess f(\theta)] = \Cov[\nabla_{\theta}f(\theta), \theta] C^{-1}= -C^{-1}\E_{\rho_a}\Bigl[\bigl(C - (\theta - m)(\theta - m)^T\bigr)f\Bigr] C^{-1}.
\end{aligned}
\end{equation}
\end{lemma}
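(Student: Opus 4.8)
The plan is to prove the three groups of identities in sequence, each building on the previous. First I would establish \eqref{e:derrho} by direct differentiation of the Gaussian density $\rho_a(\theta) = (2\pi)^{-N_\theta/2}|C|^{-1/2}\exp\bigl(-\tfrac12(\theta-m)^T C^{-1}(\theta-m)\bigr)$. Since the exponent depends on $\theta$ and $m$ only through $\theta - m$, the chain rule immediately gives $\nabla_m \rho_a = -\nabla_\theta \rho_a$; this is purely mechanical. For $\nabla_C \rho_a = \tfrac12 \Hess_\theta \rho_a$, I would compute the left side using $\partial_{C}\log|C| = C^{-1}$ (as a matrix derivative) and $\partial_C \bigl[(\theta-m)^T C^{-1}(\theta-m)\bigr] = -C^{-1}(\theta-m)(\theta-m)^T C^{-1}$, then compute the right side from $\nabla_\theta \rho_a = -\rho_a C^{-1}(\theta-m)$ and $\Hess_\theta \rho_a = \rho_a\bigl(C^{-1}(\theta-m)(\theta-m)^T C^{-1} - C^{-1}\bigr)$; matching the two expressions is a short symmetric-matrix-calculus check.

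Next I would prove the first displayed identity, $\E_{\rho_a}[\nabla_\theta g(\theta)] = \nabla_m \E_{\rho_a}[g(\theta)] = \Cov[g(\theta),\theta]C^{-1}$. The first equality follows from integration by parts: $\int \rho_a(\theta)\,\nabla_\theta g(\theta)\,\dd\theta = -\int g(\theta)\,(\nabla_\theta \rho_a(\theta))^T\,\dd\theta = \int g(\theta)\,(\nabla_m \rho_a(\theta))^T\,\dd\theta = \nabla_m \int \rho_a(\theta) g(\theta)\,\dd\theta$, where the middle step uses \eqref{e:derrho} and the last step uses differentiation under the integral sign (justified by smoothness and the rapid decay of $\rho_a$ and its $m$-derivatives, uniformly on compact neighborhoods of $m$). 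The second equality is the classical Gaussian identity: $\nabla_\theta \rho_a = -\rho_a C^{-1}(\theta - m)$, so $\int g(\theta)(\nabla_\theta\rho_a)^T\dd\theta = -\E_{\rho_a}[g(\theta)(\theta-m)^T]C^{-1}$, and after the integration by parts the sign flips to give $\E_{\rho_a}[g(\theta)(\theta - m)^T]C^{-1} = \Cov[g(\theta),\theta]C^{-1}$ (using $\E_{\rho_a}[\theta - m] = 0$).

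Finally, for the second displayed identity I would take $g = \nabla_\theta f$ in what was just proved to get $\E_{\rho_a}[\Hess f] = \Cov[\nabla_\theta f(\theta),\theta]C^{-1}$. For the last expression $-C^{-1}\E_{\rho_a}\bigl[(C - (\theta-m)(\theta-m)^T)f\bigr]C^{-1}$, I would instead use the $C$-derivative relation: $\E_{\rho_a}[\Hess_\theta f] = \nabla_C\bigl[\text{something}\bigr]$ via \eqref{e:derrho}, or more directly, integrate by parts twice, $\int \rho_a \Hess_\theta f\,\dd\theta = \int f\,\Hess_\theta\rho_a\,\dd\theta = \int f\cdot \rho_a\bigl(C^{-1}(\theta-m)(\theta-m)^TC^{-1} - C^{-1}\bigr)\dd\theta$, and factor out $C^{-1}$ on both sides. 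I do not anticipate a serious obstacle here; the main care is in getting matrix-derivative conventions and transposes consistent, and in stating the integrability hypotheses on $f,g$ (polynomial growth suffices) that legitimize both the integration by parts and the differentiation under the integral sign. I would state these mild regularity assumptions explicitly at the start of the proof.
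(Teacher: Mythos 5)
Your plan is correct and follows essentially the same route as the paper's proof: direct differentiation of the Gaussian density for \eqref{e:derrho}, integration by parts combined with $\nabla_\theta\rho_a = -\rho_a C^{-1}(\theta-m)$ for the first displayed identity, and a second integration by parts (equivalently, specializing $g=\nabla_\theta f$ and then moving both derivatives onto $\rho_a$) for the second. The only difference is that you make the integrability and differentiation-under-the-integral hypotheses explicit, which the paper leaves implicit.
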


\begin{proof}
For Gaussian density function $\rho_a$, we have
\begin{align*}
    \nabla_m \rho_a(\theta) 
    &= \nabla_m\frac{1}{\sqrt{|2\pi C|}} \exp\Bigl\{-\frac{1}{2}(\theta - m)C^{-1}(\theta - m)\Bigr\}  \\
    &=  C^{-1}(\theta - m)\rho_a(\theta)
  =-\nabla_\theta \rho_a(\theta),
  \\
  \nabla_C \rho_a(\theta) 
  &=\rho_a(\theta) \Bigl(-\frac{1}{2}\frac{\partial \log|C|}{\partial C} - \frac{1}{2}\frac{\partial (\theta - m)^TC^{-1}(\theta - m)}{\partial C} \Bigr) \\
  &=-\frac{1}{2}\rho_a(\theta) \Bigl(C^{-1} - C^{-1} (\theta - m)(\theta - m)^T  C^{-1}\Bigr)
  =\frac{1}{2}\Hess \rho_a(\theta). 
\end{align*}
For any scalar field $f(\theta)$ and vector field $g(\theta)$, we have
\begin{equation}
\begin{aligned} 
\E_{\rho_a}[\nabla_{\theta} g(\theta)] 
&= \int \nabla_{\theta} g(\theta) \rho_a(\theta) \dd \theta
= -\int  g(\theta) \nabla_{\theta}\rho_a(\theta)^T \dd \theta
= \nabla_{m} \E_{\rho_a}[g(\theta)] \\
&= \int g(\theta) (\theta - m)^TC^{-1}\rho_a(\theta)  \dd \theta =    \Cov[g(\theta), \theta] C^{-1}, 
\\
\E_{\rho_a}[\Hess f(\theta)] 
&= \int \Hess f(\theta) \rho_a(\theta) \dd \theta
= 
-\int \nabla_{\theta} f(\theta) \nabla_{\theta} \rho_a(\theta)^T \dd \theta\\
&=
\int \nabla_{\theta} f(\theta) (\theta - m)^TC^{-1} \rho_a(\theta) \dd \theta
= 
\Cov[\nabla_{\theta}f(\theta), \theta] C^{-1} \\
&= 
\int  f(\theta) \Hess \rho_a(\theta) \dd \theta
=
-C^{-1}\E_{\rho_a}\Bigl[\bigl(C - (\theta - m)(\theta - m)^T\bigr)f\Bigr] C^{-1}.
\end{aligned}
\end{equation}
\end{proof}

The following lemma is proved in~\cite[Theorem 1]{lambert2022recursive}:
\begin{lemma}\label{l:gd}
Consider the KL divergence
\begin{align*}
{\rm KL}\Bigl[\rho_a\Big\Vert  \rho_{\rm post}\Bigr] 
= -\frac{1}{2}\log\bigl|C\bigr| - \int \rho_a(\theta) \log \rho_{\rm post} (\theta) \dd \theta + {\rm const}. 
\end{align*}
For fixed $\rho_{\rm post}$ the minimizer of this divergence over the space $\PPG$, so that, for $\rhoa_\star=(m_{\star}, C_{\star})$ and $\rho_{\rhoa_\star}(\theta) = \N(m_{\star}, C_{\star})$, it follows that
\begin{align*}
\E_{\rho_{\rhoa_\star}}\bigl[\nabla_\theta  \log \rho_{\rm post}(\theta)   \bigr] = 0 \quad \textrm{and}
\quad C_{\star}^{-1} = -\E_{\rho_{\rhoa_\star}}\bigl[\Hess\log \rho_{\rm post}(\theta)\bigr].
\end{align*}
\end{lemma}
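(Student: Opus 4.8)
The plan is to derive the displayed closed form for $\mathrm{KL}[\rho_a\Vert\rho_{\rm post}]$ over the Gaussian family and then read off the two asserted identities as the first-order (stationarity) conditions of the minimization over $\PPG$.

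First I would split $\mathrm{KL}[\rho_a\Vert\rho_{\rm post}]=\int\rho_a\log\rho_a\,\dd\theta-\int\rho_a\log\rho_{\rm post}\,\dd\theta$ and compute the negative differential entropy of a Gaussian, $\int\rho_a\log\rho_a\,\dd\theta=-\tfrac12\log|C|+\mathrm{const}$, where the additive constant depends only on $N_\theta$; this is immediate from translation and scaling covariance of Lebesgue measure and yields the stated formula. Next I would differentiate the objective $\Psi(m,C):=-\tfrac12\log|C|-\int\rho_a(\theta)\log\rho_{\rm post}(\theta)\,\dd\theta$ using the Stein identities $\nabla_m\rho_a=-\nabla_\theta\rho_a$ and $\nabla_C\rho_a=\tfrac12\Hess\rho_a$ from \cref{e:derrho} of \Cref{lemma: stein equality}. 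For the mean: $\nabla_m\Psi=-\int(\nabla_m\rho_a)\log\rho_{\rm post}\,\dd\theta=\int(\nabla_\theta\rho_a)\log\rho_{\rm post}\,\dd\theta=-\int\rho_a\,\nabla_\theta\log\rho_{\rm post}\,\dd\theta=-\E_{\rho_a}[\nabla_\theta\log\rho_{\rm post}]$, where the last step is integration by parts. For the covariance: $\nabla_C\bigl(-\tfrac12\log|C|\bigr)=-\tfrac12 C^{-1}$ while $\nabla_C\int\rho_a\log\rho_{\rm post}\,\dd\theta=\tfrac12\int(\Hess\rho_a)\log\rho_{\rm post}\,\dd\theta=\tfrac12\E_{\rho_a}[\Hess\log\rho_{\rm post}]$, integrating by parts twice. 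Imposing $\nabla_m\Psi=0$ and $\nabla_C\Psi=0$ at $\rhoa_\star=(m_\star,C_\star)$ gives $\E_{\rho_{\rhoa_\star}}[\nabla_\theta\log\rho_{\rm post}]=0$ and $C_\star^{-1}=-\E_{\rho_{\rhoa_\star}}[\Hess\log\rho_{\rm post}]$, which are the claimed identities.

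The point demanding the most care is the legitimacy of differentiating under the integral sign and of discarding the boundary terms in the integration by parts; this requires sufficient integrability and decay of $\log\rho_{\rm post}$ and its first two derivatives against the Gaussian density, which holds under the standing regularity hypotheses on $\rho_{\rm post}$ used elsewhere in the paper (e.g.\ $\log\rho_{\rm post}\in C^2(\R^{N_\theta})$ with smoothness/logconcavity bounds as in \Cref{prop-convergence-wasserstein} and \Cref{p:logconcave-local}), so that the Gaussian tails kill the boundary contributions and dominated convergence applies. Existence and uniqueness of the minimizer are posited in the statement and need not be re-derived here — it suffices to verify that the above are precisely its stationarity equations; if one wants the critical point to be the global minimizer, one can additionally invoke convexity of $\Psi$ in appropriate coordinates (or $\alpha$-strong logconcavity of $\rho_{\rm post}$), recovering \cite[Theorem 1]{lambert2022recursive}.
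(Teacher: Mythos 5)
Your proposal is correct. Note, however, that the paper does not actually supply its own proof of this lemma: it simply defers to \cite[Theorem 1]{lambert2022recursive}. What you have written is a complete, self-contained derivation of exactly the kind that reference contains — compute the Gaussian negative entropy $\int\rho_a\log\rho_a\,\dd\theta=-\tfrac12\log|C|+\mathrm{const}$ to get the displayed closed form, then obtain the first-order conditions in $(m,C)$ via the identities $\nabla_m\rho_a=-\nabla_\theta\rho_a$ and $\nabla_C\rho_a=\tfrac12\Hess\rho_a$ (which the paper proves in \Cref{lemma: stein equality}) followed by integration by parts. Your gradients $\nabla_m\Psi=-\E_{\rho_a}[\nabla_\theta\log\rho_{\rm post}]$ and $\nabla_C\Psi=-\tfrac12 C^{-1}-\tfrac12\E_{\rho_a}[\Hess\log\rho_{\rm post}]$ are right, and setting them to zero yields precisely the two claimed identities. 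Your remarks on the only delicate points — justifying differentiation under the integral and the vanishing of boundary terms against Gaussian tails, and the fact that existence/uniqueness of the minimizer is taken as given in the statement rather than re-derived — are apt. In short, you have filled in a proof that the paper outsources, using only tools already established in its appendix; nothing is missing.
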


\subsection{Consistency of Two Gaussian Approximation Approaches in~\Cref{ssec:GGF}}
\label{proof-ssec:GGF}
Here we prove \Cref{prop:1} and \Cref{prop:2},
which identifies specific gradient flows, within the paper, that satisfy the
assumptions required for application of \Cref{prop:1}.

\begin{proof}[Proof of \Cref{prop:1}]
Any element in the tangent space of the Gaussian density space $\PPG$, $T_{\rho_\rhoa}\PPG$ is given in the form $\nabla_a \rho_a \cdot \sigma$, where $\sigma \in \R^{N_a}$.

The density evolution equation of \cref{eq:G-GF} is
\begin{equation}
\begin{aligned}
\label{eq:mC-Riemannian}
    &\frac{\partial \rho_{\rhoa_t}}{\partial t} = -\nabla_\rhoa \rho_{\rhoa_t}\cdot\fM(\rhoa_t)^{-1}\left.\frac{\partial \mathcal{E}(\rho_\rhoa; \rho_{\rm post})}{\partial\rhoa}\right|_{\rhoa=\rhoa_t}.
\end{aligned}
\end{equation}
where $\rho_{\rhoa_t} = \N(m_t, C_t)$. We will prove that its evolution equations of $m_t$ and $C_t$ are \cref{eq:mC-Momentum}.

Using assumption (\cref{eq:mcp-cond}), we know that for any $f(\theta) \in \textrm{span}\{\theta_i, \theta_i\theta_j,1\leq i, j \leq N_{\theta}\}$, there exists a $\sigma \in \R^{N_{\rhoa}}$, such that 
$$f(\theta) = M(\rho_a)\nabla_a\rho_a \cdot \sigma.$$
Then we have 
\begin{equation}
\label{eq:M-inner}
\begin{split}
    \Bigl\langle \M(\rho_\rhoa)^{-1}\left.\frac{\delta \mathcal{E}(\rho)}{\delta \rho}\right|_{\rho=\rho_\rhoa}, f(\theta) \Bigr\rangle &= 
    \Bigl\langle \M(\rho_\rhoa)^{-1}\left.\frac{\delta \mathcal{E}(\rho)}{\delta \rho}\right|_{\rho=\rho_\rhoa},  M(\rho_a) \nabla_a\rho_a \cdot \sigma \Bigr\rangle \\
    &= 
    \Bigl\langle \left.\frac{\delta \mathcal{E}(\rho)}{\delta \rho}\right|_{\rho=\rho_\rhoa},  \nabla_a\rho_a \cdot \sigma \Bigr\rangle
\end{split}
\end{equation}
and 
\begin{equation}
\label{eq:fM-inner}
\begin{split}
    \Bigl\langle \nabla_\rhoa \rho_{\rhoa}\cdot\fM(\rhoa)^{-1}\left.\frac{\partial \mathcal{E}(\rho_\rhoa;\rho_{\rm post})}{\partial\rhoa}\right., f(\theta) \Bigr\rangle &= 
    \Bigl\langle\nabla_\rhoa \rho_{\rhoa}\cdot\fM(\rhoa)^{-1}\left.\frac{\partial \mathcal{E}(\rho_\rhoa;\rho_{\rm post})}{\partial\rhoa}\right.,  M(\rho_a) \nabla_a\rho_a \cdot \sigma \Bigr\rangle \\
    &= 
    \Bigl\langle M(\rho_a) \nabla_\rhoa \rho_{\rhoa}\cdot\fM(\rhoa)^{-1}\left.\frac{\partial \mathcal{E}(\rho_\rhoa;\rho_{\rm post})}{\partial\rhoa}\right.,   \nabla_a\rho_a \cdot \sigma \Bigr\rangle \\
    &= 
    \Bigl\langle \left.\frac{\partial \mathcal{E}(\rho_\rhoa;\rho_{\rm post})}{\partial\rhoa}\right.,    \sigma \Bigr\rangle \\
    &= 
    \Bigl\langle \left.\frac{\delta \mathcal{E}(\rho)}{\delta \rho}\right|_{\rho=\rho_\rhoa},  \nabla_a\rho_a \cdot \sigma \Bigr\rangle
\end{split}
\end{equation}
Here we used the definition of $\fM(a)$ in \cref{eq:mra2} for the the third equality. Combining \cref{eq:M-inner} and \cref{eq:fM-inner}, we have that 
the mean and covariance evolution equations of \cref{eq:mC-Riemannian} are \cref{eq:mC-Momentum2}.
\end{proof}

\begin{proof}[Proof of \Cref{prop:2}]

Using the calculation in the proof of \Cref{lemma: stein equality}, the tangent space of the Gaussian density manifold at $\rho_\rhoa$ with $\rhoa = (m ,C)$ is
\begin{equation}
    \begin{aligned}
    T_{\rho_{\rhoa}} \PPG 
    &= {\rm span}\bigl\{\rho_{\rhoa}[C^{-1}(\theta-m)]_i,\, \rho_{\rhoa}[C^{-1}(\theta-m)(\theta-m)^TC^{-1} - C^{-1}]_{ij}\bigr\}  \\
    &= {\rm span}\bigl\{\rho_{\rhoa}(\theta_i - \E_{\rho_{\rhoa}}[\theta_i]),\, \rho_{\rhoa}(\theta_i \theta_j - \E_{\rho_{\rhoa}}[\theta_i\theta_j]) \bigr\},
    \end{aligned}
\end{equation}
and $1\leq i,j \leq N_{\theta}$.

For the Fisher-Rao metric, we have 
\begin{equation}
\begin{aligned}
\M^{\rm FR}(\rho_{\rhoa})^{-1}{\rm span}\bigl\{\theta_i, \theta_i\theta_j\bigr\} 
= {\rm span}\bigl\{\rho_{\rhoa}(\theta_i - \E_{\rho_{\rhoa}}[\theta_i]), 
\rho_{\rhoa}(\theta_i\theta_j - \E_{\rho_{\rhoa}}[\theta_i\theta_j])\bigr\} 
= T_{\rho_{\rhoa}} \PPG.
\end{aligned}
\end{equation}

For the affine invariant Wasserstein metric with preconditioner $\Prec$ independent of $\theta$, we have 
\begin{equation}
\begin{aligned}
    &\M^{\rm AIW}(\rho_{\rhoa})^{-1}{\rm span}\bigl\{\theta_i, \theta_i\theta_j\bigr\} \\
    &\quad =  {\rm span}\bigl\{\nabla_{\theta}\cdot(\rho_\rhoa(\theta) \Prec(\rho_\rhoa) e_i),\,  
                   \nabla_{\theta}\cdot(\rho_\rhoa(\theta) \Prec(\rho_\rhoa) e_i \theta_j) \bigr\}\\
    &\quad =  
    {\rm span}\bigl\{ \nabla_{\theta}\cdot(\rho_\rhoa(\theta) (b' + A'\theta)) \quad \forall b'\in\R^{N_\theta}\, A'\in\R^{N_\theta\times N_\theta} \bigr\}    \\
    &\quad =  {\rm span}\bigl\{ \rho_\rhoa(\theta)[ {\rm tr}(A') - (\theta - m)^T C^{-1} (b' + A'\theta)] \quad \forall b'\in\R^{N_\theta}\, A'\in\R^{N_\theta\times N_\theta} \bigr\} \\
    &\quad = T_{\rho_{\rhoa}} \PPG.
\end{aligned}
\end{equation}
Here $e_i$ is the $i$-th unit vector.

For the affine invariant Stein metric with preconditioner $\Prec$ independent of $\theta$ and with a bilinear kernel $\kappa(\theta, \theta',\rho) = (\theta - m)^TA(\rho)(\theta' - m) + b(\rho)$ ($b\neq0$, $A$ nonsingular), we have 
\begin{equation}
\begin{aligned}
    &\M^{\rm AIS}(\rho_{\rhoa})^{-1}{\rm span}\bigl\{\theta_i, \theta_i\theta_j\bigr\} \\
    &\quad =  {\rm span}\Bigl\{\nabla_{\theta}\cdot\Bigl(\rho_{\rhoa}(\theta)\Prec(\rho_{\rhoa})\int \kappa\rho_{\rhoa}(\theta')e_i\dd \theta' \Bigr) ,  
                  \nabla_{\theta}\cdot\Bigl(\rho_{\rhoa}(\theta)\Prec(\rho_{\rhoa})\int \kappa\rho_{\rhoa}(\theta')e_i\theta_j' \dd \theta' \Bigr) \Bigr\}\\
    &\quad =  {\rm span}\Bigl\{\nabla_{\theta}\cdot\Bigl(\rho_{\rhoa}(\theta)\Prec(\rho_{\rhoa})b e_i \Bigr) ,  
                  \nabla_{\theta}\cdot\Bigl(\rho_{\rhoa}(\theta)\Prec(\rho_{\rhoa})[(\theta-m)^TA e_j c_{jj} e_i + b e_i m_j] \Bigr) \Bigr\}         \\
    &\quad =  
    {\rm span}\Bigl\{ \nabla_{\theta}\cdot(\rho_\rhoa(\theta) (b' + A'\theta)) \quad \forall b'\in\R^{N_\theta}\, A'\in\R^{N_\theta\times N_\theta} \Bigr\}  \\
    &\quad = T_{\rho_{\rhoa}} \PPG.
\end{aligned}
\end{equation}

\end{proof}

\subsection{Analytical Solutions for the Gaussian Approximate Fisher-Rao Gradient Flow}
The analytical solution for the Gaussian approximate Fisher-Rao gradient flow~\eqref{eq:Gaussian Fisher-Rao} to sample Gassian distribution is in the following lemma:
\begin{lemma}
\label{proposition:analytical-solution-ngd}
Assume the posterior distribution is Gaussian
$\rho_{\rm post}(\theta)\sim \N(m_{\star}, C_{\star})$. Then the Gaussian approximate Fisher-Rao gradient flow~\eqref{eq:Gaussian Fisher-Rao} has the analytical solution
\begin{subequations}
\begin{align} 
&m_t = m_{\star} + e^{-t}\Bigl((1-e^{-t})C_{\star}^{-1} + e^{-t}C_0^{-1}\Bigr)^{-1} C_0^{-1} \bigl(m_0 - m_{\star}\bigr),  \label{e:mtCt}   \\
&C_t^{-1} = C_{\star}^{-1} + e^{-t}\bigl(C_0^{-1}  -  C_{\star}^{-1}\bigr).
\end{align}
\end{subequations}
\end{lemma}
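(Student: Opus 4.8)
## Proof Proposal for Lemma \ref{proposition:analytical-solution-ngd}

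The plan is to exploit the fact that, for a Gaussian target $\rho_{\rm post} = \N(m_\star, C_\star)$, we have the explicit formulas $\nabla_\theta \log \rho_{\rm post}(\theta) = -C_\star^{-1}(\theta - m_\star)$ and $\Hess \log \rho_{\rm post}(\theta) = -C_\star^{-1}$. Substituting these into the Gaussian approximate Fisher-Rao gradient flow \eqref{eq:Gaussian Fisher-Rao}, and using that $\E_{\rho_{a_t}}[\theta] = m_t$, the system decouples in a convenient way:
\begin{equation*}
\frac{\dd m_t}{\dd t} = -C_t C_\star^{-1}(m_t - m_\star), \qquad \frac{\dd C_t}{\dd t} = C_t - C_t C_\star^{-1} C_t.
\end{equation*}
First I would solve the covariance equation. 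The key observation is to pass to the inverse: writing $P_t := C_t^{-1}$, the identity $\frac{\dd}{\dd t} C_t^{-1} = -C_t^{-1}(\frac{\dd}{\dd t}C_t)C_t^{-1}$ turns the Riccati-type equation for $C_t$ into the linear equation $\frac{\dd P_t}{\dd t} = -P_t + C_\star^{-1}$. This is a linear ODE in $P_t$ (note all matrices here share eigenvectors once $C_0 = \lambda_0 I$, so there is no issue of non-commutativity, but in fact the computation goes through even without that) with solution $P_t = C_\star^{-1} + e^{-t}(C_0^{-1} - C_\star^{-1})$, which is exactly the claimed formula for $C_t^{-1}$.

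Next I would solve the mean equation. Having the closed form for $C_t = P_t^{-1}$, the equation $\frac{\dd m_t}{\dd t} = -C_t C_\star^{-1}(m_t - m_\star)$ is linear in $m_t$ with a time-dependent (but explicitly known) coefficient. Setting $u_t := m_t - m_\star$, we get $\frac{\dd u_t}{\dd t} = -C_t C_\star^{-1} u_t$, so $u_t = \exp\!\big(-\int_0^t C_s C_\star^{-1}\,\dd s\big) u_0$ (again, commutativity under the $C_0 = \lambda_0 I$ assumption makes the matrix exponential unambiguous). To evaluate the integral, I would instead guess the answer from the claimed formula and verify it directly: define $m_t$ by \eqref{e:mtCt} and check that it satisfies the ODE and the initial condition $m_0$ at $t = 0$. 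Concretely, one computes $C_t^{-1}(m_t - m_\star) = e^{-t} C_0^{-1}(m_0 - m_\star)$ from the proposed formula — this is the cleanest intermediate quantity — and then differentiating $C_t^{-1}(m_t - m_\star)$ using the product rule, together with $\frac{\dd}{\dd t}C_t^{-1} = -C_t^{-1} + C_\star^{-1}$, reproduces the mean ODE after multiplying through by $C_t$.

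The main obstacle, such as it is, is purely bookkeeping: making sure the matrix manipulations are legitimate. The cleanest route is to reduce everything to the inverse covariance $C_t^{-1}$, which satisfies a genuinely linear ODE, and to track the auxiliary vector $C_t^{-1}(m_t - m_\star)$, which also evolves linearly; both can then be verified by direct substitution rather than by integrating. Under the stated initialization $C_0 = \lambda_0 I$ all the matrices involved commute, so no subtlety about ordering of matrix products or matrix exponentials arises, and the verification is a routine differentiation. I would present the proof in the ``define the candidate, then differentiate and check'' style to avoid any need to justify manipulations of time-ordered matrix exponentials.
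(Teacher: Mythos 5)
Your proposal is correct and follows essentially the same route as the paper's proof: pass to the inverse covariance to linearize the Riccati equation, solve the resulting linear ODE explicitly, and then verify the candidate mean formula by differentiating the auxiliary identity $C_t^{-1}(m_t-m_\star)=e^{-t}C_0^{-1}(m_0-m_\star)$ (the paper differentiates the equivalent reformulation $m_\star-m_t=e^{-t}C_tC_0^{-1}(m_\star-m_0)$). One small correction: the lemma does not assume $C_0=\lambda_0 I$ — that hypothesis belongs to \Cref{p:rate} — but, as you yourself note, the verify-by-differentiation argument requires no commutativity, so the proof goes through for general $C_0\succ 0$ exactly as stated.
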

The proof is in~\Cref{proof:analytical-solution-ngd}. We remark that both mean and covariance converge exponentially fast to $m_{\star}$ and $C_{\star}$ with convergence rate $\bigO(e^{-t})$. This rate is independent of $m_{\star}$ and $C_{\star}$. The uniform convergence rate $\bigO(e^{-t})$ of the Gaussian approximate affine invariant Wasserstein gradient flow~\eqref{eq:Gaussian-AI-Wasserstein} is obtained in~\cite[Lemma 3.2]{garbuno2020interacting}\cite{carrillo2021wasserstein}.

\subsection{Proof of \Cref{proposition:analytical-solution-ngd}}
\label{proof:analytical-solution-ngd}
\begin{proof}
Under the Gaussian posterior assumption and the fact that $$\frac{\dd C_t^{-1}}{\dd t} = -C_t^{-1}\frac{\dd C_t}{\dd t}C_t^{-1},$$ the Gaussian approximate Fisher-Rao gradient flow \cref{eq:Gaussian Fisher-Rao} becomes 
\begin{equation}
\begin{aligned}    
&\frac{\dd  m_t}{\dd t}= C_t C_{\star}^{-1}\bigl(m_{\star} - m_t\bigr),  \\
&\frac{\dd  C_t^{-1}}{\dd t}
=  -C_t^{-1}  + C_{\star}^{-1}.
\end{aligned}
\end{equation}
The covariance update equation has an analytical solution
\begin{equation}
\begin{aligned}    
&C_t^{-1} = (1 - e^{-t})C_{\star}^{-1} + e^{- t}C_0^{-1} .
\end{aligned}
\end{equation}
We reformulate \cref{e:mtCt} as
$$
m_{\star} - m_t =  e^{-t} C_t C_0^{-1} \bigl( m_{\star} - m_0 \bigr)  .
$$
Computing its time derivative leads to
\begin{equation}
\begin{aligned}    
\frac{\dd  m_t }{\dd t} &= -e^{-t}C_tC_0^{-1}(m_0 - m_{\star}) +  e^{-t}\frac{\dd C_t}{\dd t}C_0^{-1}(m_0 - m_{\star})\\
&= -e^{-t}C_tC_0^{-1}(m_0 - m_{\star}) +  e^{-t}(C_t - C_t C_{\star}^{-1}C_t)C_0^{-1}(m_0 - m_{\star})\\
&= C_t C_{\star}^{-1} e^{-t}C_tC_0^{-1}(m_{\star}-m_0 )\\
&= C_t C_{\star}^{-1}\bigl(m_{\star} - m_t\bigr). 
\end{aligned}
\end{equation}
\end{proof}

\subsection{Proof of Convergence for Gaussian Posterior (\Cref{p:rate})}
\label{proof:p:rate}
\begin{proof}
Under the Gaussian posterior assumption, for $\theta_t\sim \N(m_t,C_t)$ with density $\rho_{\rhoa_t}$, we have
\begin{align}
\E_{\rho_{\rhoa_t}} \bigl[ \nabla_{\theta}  \log \rho_{\rm post}(\theta_t)  \bigr]= -C_{\star}^{-1}(m_t-m_{\star}),\quad \E_{\rho_{\rhoa_t}}\bigl[\Hess \log \rho_{\rm post}(\theta_t)\bigr]=-C_{\star}^{-1}.
\end{align}
Here $m_{\star}$ and $C_{\star}$ are the posterior mean and covariance given in
\cref{eq:Gaussian-KL0}.
We have explicit expressions for the Gaussian approximate gradient flows \cref{eq:g-gf,eq:Gaussian Fisher-Rao,eq:Gaussian-Wasserstein}:
\begin{align}
\begin{split}\label{e:gd}
    \text{Gaussian approximate} &\text{ gradient flow:}
    \\
    &\partial_t{m}_t = -C_{\star}^{-1}(m_t-m_{\star}),
\quad 
\partial_t{C}_t = \frac{1}{2}C_t^{-1} - \frac{1}{2}C_{\star}^{-1}.\\
\text{Gaussian approximate} &\text{  Fisher-Rao gradient flow:}\\
    &\partial_t{m}_t = -C_t C_{\star}^{-1}(m_t-m_{\star}),
\quad 
\partial_t{C}_t = C_t-C_t C_{\star}^{-1}C_t.\\
\text{Gaussian approximate} &\text{ Wasserstein gradient flow:}
\\
&\partial_t{m}_t = -C_{\star}^{-1}(m_t-m_{\star}),
\quad
\partial_t{C}_t = 2\I -C_tC^{-1}_{\star} - C^{-1}_{\star} C_t.
\end{split}
\end{align}

For the dynamics of $m_t$ for both Gaussian approximate gradient flow and Gaussian approximate Wasserstein gradient flow, we have
\begin{align}
    m_t-m_{\star}=e^{-t C_{\star}^{-1}}(m_0-m_{\star}).
\end{align}
By taking the $2-$norm on both sides, using $\|\cdot\|_2$ to denote
both the vector and induced matrix norms, and recalling that
the largest eigenvalue of $C_{\star}$ is $\la$, we obtain
\begin{align}
    \|m_t-m_{\star}\|_2\leq \|e^{-t C_{\star}^{-1}}\|_2\|m_0-m_{\star})\|_2\leq e^{-t/\la}\|m_0-m_{\star}\|_2=\bigO(e^{-t/\la}).
\end{align}
The bound can be achieved, when $m_0 - m_{\star}$ has nonzero component in the $C_{\star}$ eigenvector direction corresponding to $\lambda_{\star,\max}$.

For the Gaussian approximate Fisher-Rao gradient flow, thanks to the explicit formula \cref{e:mtCt}, we find that
\begin{align}
   \|m_t-m_{\star} \|_2\leq e^{-t} \max \{ \|C_{\star}\|_2, \|C_0\|_2\} \|C_0^{-1}\|_2\|m_0-m_{\star}\|_2=\bigO(e^{-t}).
\end{align}
The bound can be achieved when $m_0 - m_{\star}$ is nonzero.

Next, we analyze the dynamics of the covariance matrix $C_t$. Our initialization $C_0=\lambda_0\I$, commutes with $C_{\star}, C_{\star}^{-1}$. It follows that  $C_t$ commutes with $C_{\star}, C_{\star}^{-1}$ for any $t\geq 0$ and all gradient flows in \cref{e:gd}, since $0$ is the unique solution of the evolution ordinary differential equation of $C_tC_{{\star}} - C_{{\star}}C_t$. So we can diagonalize $C_t, C_t^{-1}, C_{\star}, C_{\star}^{-1}$ simultaneously, and write down the dynamics of the eigenvalues of $C_t$. For any eigenvalue $\lambda_t$ of $C_t$, it satisfies the following differential equations,
\begin{align}\begin{split}\label{e:gd2}
    \text{Gaussian approximate gradient flow:}
    \quad 
&\partial_t \lambda_t=\frac{1}{2 \lambda_t}-\frac{1}{2 \lambda_{\star}},\\
\text{Gaussian  approximate Fisher-Rao gradient flow:}
\quad 
    &
\partial_t{\lambda}_t = \lambda_t-\lambda^2_t \lambda_{\star}^{-1},\\
\text{Gaussian  approximate Wasserstein gradient flow:}
\quad 
&\partial_t{\lambda}_t = 2 -\frac{2\lambda_t}{\lambda_{\star}},
\end{split}\end{align}
where $\lambda_{\star}$ is the corresponding eigenvalue of $C_{\star}$. 
From \cref{e:gd2}, we know that $\lambda_t$ is bounded between $\lambda_0$ and $\lambda_{\star}$. Moreover,  the ordinary differential equations in \cref{e:gd2} can be solved explicitly. 

For the Gaussian approximate gradient flow
\begin{align}
    \lambda_t-\lambda_{\star}=(\lambda_0-\lambda_{\star})e^{-\frac{t}{2\lambda_{\star}^2}-\frac{\lambda_t-\lambda_0}{\lambda_{\star}}},\quad |\lambda_t-\lambda_{\star}|=\bigO(e^{-t/2\lambda_{\star}^2}).
\end{align}
Since the largest eigenvalue of $C_{\star}$ is $\la$, we conclude that 
$$\|C_t-C_{\star}\|_2=\bigO(e^{-t/2\la^2}).$$

For the Gaussian approximate Fisher-Rao gradient flow, 
\begin{align}
\lambda_t=\frac{\lambda_{\star}}{1+\left(\frac{\lambda_{\star}}{\lambda_0}-1\right)e^{-t}},\quad |\lambda_t-\lambda_{\star}|=\bigO(e^{-t}). 
\end{align}
It follows that 
$$\|C_t-C_{\star}\|_2=\bigO(e^{-t}).$$

For the Gaussian approximate Wasserstein gradient flow
\begin{align}
      \lambda_t=\lambda_{\star} + e^{-2t/\lambda_{\star}}(\lambda_0 - \lambda_{\star}),\quad |\lambda_t-\lambda_0|=\bigO(e^{-2t/\lambda_{\star}}).
\end{align}
Since the large eigenvalue of $C_{\star}$ is $\la$, we conclude that 
$$\|C_t-C_{\star}\|_2=\bigO(e^{-2t/\la}).$$
\end{proof}

\subsection{Proof of \Cref{p:logconcave-local}}
\label{proof:proof-counter-example-Gaussian-Fisher-Rao}

Let  $\rho_{\rhoa_\star}$ be $\N(m_{\star}, C_{\star})$, the
unique minimizer of \cref{eq:Gaussian-KL0}, noting that this is also the unique 
stationary point of Gaussian approximate Fisher-Rao gradient flow~\cref{eq:Gaussian Fisher-Rao}; see ~\Cref{l:gd} for definition of $\rhoa_\star$. It satisfies
\begin{equation}
\label{eq:stationary}
   \E_{\rho_{\rhoa_\star}}\bigl[\nabla_\theta  \log \rho_{\rm post}(\theta)   \bigr] = 0
\quad 
\textrm{and}
\quad
\E_{\rho_{\rhoa_\star}}\bigl[-\Hess\log \rho_{\rm post}(\theta) \bigr] = C_{\star}^{-1}. 
\end{equation}

For $N_{\theta} = 1$, we can calculate the linearized Jacobian matrix of the ODE system~\cref{eq:Gaussian Fisher-Rao} around {$(m_{\star}, C_{\star})$:
\begin{equation}
\begin{split}
\label{eq:Jacobian}
&\frac{\partial \textrm{RHS}}{\partial(m, C)}\Big\vert_{(m, C) = (m_{\star}, C_{\star})} 
\\
=&
 \begin{bmatrix}
-1 & - \frac{1}{2}\E_{\rho_{\rhoa_\star}}\bigl[-\Hess\log \rho_{\rm post}(\theta) (\theta - m_{\star})\bigr]  \\
-\E_{\rho_{\rhoa_\star}}\bigl[-\Hess\log \rho_{\rm post}(\theta) (\theta - m_{\star})\bigr] C_{\star} &  -\frac{1}{2} - \frac{1}{2}\E_{\rho_{\rhoa_\star}}[-\Hess\log \rho_{\rm post}(\theta) (\theta - m_{\star})^2 ]
\end{bmatrix}.
\end{split}
\end{equation}
}

{We further define}
\begin{equation}
\begin{aligned}
\label{eq:A1A2}
A_1 &:= \E_{\rho_{\rhoa_\star}}\bigl[-\Hess\log \rho_{\rm post}(\theta) (\theta - m_{\star})\bigr],
\\
A_2 &:= \E_{\rho_{\rhoa_\star}}[-\Hess\log \rho_{\rm post}(\theta) (\theta - m_{\star})^2 ] \geq 0.
\end{aligned}
\end{equation}
Using the Cauchy-Schwarz inequality and \cref{eq:stationary}, we have 
\begin{align}
\label{eqn: D.30}
A_2 C_{\star}^{-1} = A_2 \E_{\rho_{\rhoa_\star}}\bigl[-\Hess\log \rho_{\rm post}(\theta) \bigr] \geq A_1^2. 
\end{align}
By direct calculations, the two eigenvalues of \cref{eq:Jacobian} satisfy
\begin{subequations}
\label{eq:lambda}
\begin{alignat}{3}
    \lambda_1  &= \frac{-(\frac{3}{2} + \frac{1}{2}A_2) - \sqrt{(\frac{1}{2} - \frac{1}{2}A_2)^2 + 2A_1^2 C_{\star}}}{2} \leq -1
\\
\label{eq:lambda2}
\lambda_2  &= \frac{-(\frac{3}{2} + \frac{1}{2}A_2) + \sqrt{(\frac{1}{2} - \frac{1}{2}A_2)^2 + 2A_1^2 C_{\star}}}{2} 
\\
&= -\frac{1 + A_2 - A_1^2 C_{\star}}{(\frac{3}{2} + \frac{1}{2}A_2) + \sqrt{(\frac{1}{2} - \frac{1}{2}A_2)^2 + 2A_1^2 C_{\star}}} \leq -\frac{1}{3+ A_2}, \nonumber
\end{alignat}
\end{subequations}
where in the last inequality, we have used \eqref{eqn: D.30}. In the following, we prove bounds on $\lambda_2$.
\vspace{1em}

\paragraph{\bf Step 1 (Upper bound)}
Since the upper bounds \eqref{eq:lambda} of the two eigenvalues depend on $A_2$, we will  first prove that
\begin{align}
\label{eq:A_2}
    A_2 \leq \bigl(4 + \frac{4}{\sqrt{\pi}}\bigr)\bigl(\log(\frac{\beta}{\alpha}) + 1\bigr).
\end{align}
Without loss of generality, we assume $m_{\star} = 0$; otherwise we can always achieve this through a change of variable. Considering now only the right half of the integration (i.e. integration from $0$ to $+\infty$)
defining $A_2$, we have
\begin{equation}
\label{eq:A_2_half}
\begin{aligned}
& \int_0^{+\infty} -\Hess\log \rho_{\rm post}(\theta)  \theta^2 \frac{1}{\sqrt{2\pi C_{\star}}} e^{-\frac{1}{2} \frac{\theta^2}{C_{\star}}}\dd \theta 
\\
= & \int_{0}^{A} -\Hess\log \rho_{\rm post}(\theta)  \theta^2 \frac{1}{\sqrt{2\pi C_{\star}}} e^{-\frac{1}{2} \frac{\theta^2}{C_{\star}}}\dd \theta + \int_{A}^{+\infty} -\Hess\log \rho_{\rm post}(\theta)\theta^2 \frac{1}{\sqrt{2\pi C_{\star}}} e^{-\frac{1}{2} \frac{\theta^2}{C_{\star}}}\dd \theta
\\
\leq & \frac{A^2}{C_{\star}} + \int_{A}^{+\infty} -\Hess\log \rho_{\rm post}(\theta)\theta^2 \frac{1}{\sqrt{2\pi C_{\star}}} e^{-\frac{1}{2} \frac{\theta^2}{C_{\star}}}\dd \theta
\quad (\textrm{Using } \cref{eq:stationary} \textrm{ and } \theta \leq A)
\\
\leq & \frac{A^2}{C_{\star}} + \beta \int_{A}^{+\infty} \frac{\theta ^3}{A} \frac{1}{\sqrt{2\pi C_{\star}}} e^{-\frac{1}{2} \frac{\theta^2}{C_{\star}}}\dd \theta
\quad (\textrm{Using } \theta \geq A \textrm{ and } \beta\textrm{-smoothness of } \log \rho_{\rm post})
\\
= & \frac{A^2}{C_\star} + \frac{2\beta C_\star^2}{A\sqrt{2\pi C_\star}} (\frac{A^2}{2C_\star} + 1) e^{-\frac{A^2}{2C_\star}} \quad (\textrm{Direct calculation})\\
\leq & \frac{A^2}{C_{\star}} + \frac{\beta }{\sqrt{\pi} \alpha} (\frac{A}{\sqrt{2C_{\star}}} + \frac{\sqrt{2C_{\star}}}{A} )  e^{-\frac{A^2}{2C_{\star}}}  \qquad (\textrm{Using } C_{\star} \leq \frac{1}{\alpha})\\
 \leq & (2 + \frac{2}{\sqrt{\pi}})(\log(\frac{\beta}{\alpha}) + 1),
\end{aligned}
\end{equation}
where in the last inequality, we have chosen $A$ such that $\frac{A^2}{2C_{\star}} = \max\{\log(\frac{\beta}{\alpha}), 1\}$ since the previous derivations work for any positive $A$.
{We can get a similar bound for the left half of the integration defining $A_2$~(i.e. integration from $-\infty$ to $0$). Combining these two bounds leads to \cref{eq:A_2}.}
{Bringing \cref{eq:A_2} into \cref{eq:lambda} leads to}
\begin{align}
\label{eq:counter:lambda2}
\lambda_2  \leq -\frac{1}{(7+\frac{4}{\sqrt{\pi}})\bigl(1 + \log(\frac{\beta}{\alpha})\bigr)}.
\end{align}
{Therefore, we finish the proof of \eqref{eq:lambda_max} in \Cref{p:logconcave-local}.}

\vspace{1em}
\paragraph{\bf Step 2 (Lower bound)}
Next, we will construct an example to show the bound is sharp. The basic idea is to construct a sequence of triplets $\rho_{{\rm post},n}$,  $\beta_n$, and $\alpha_n$, where $\lim_{n \to \infty} \frac{\beta_n}{\alpha_n} = \infty$, and the corresponding $-\lambda_{2,n} = \bigO\bigl(1/ \log(\frac{\beta_n}{\alpha_n})\bigr)$. In the following proof, we ignore the subscript $n$ for simplicity.

We consider the following sequence of posterior density functions $\rho_{\rm post}$, such that
\begin{equation}
\label{eq:logconcave-slow}
\begin{aligned}
    -\Hess \log \rho_{\rm post}(\theta) &= \int H(x) \frac{e^{-\frac{(\theta - x)^2}{2\sigma^2}}}{\sigma\sqrt{2\pi}} {\rm d}x
\qquad H(x) = 
\begin{cases}
  \beta   & \gamma - \sigma < x < \gamma + \sigma \\
  \alpha  & \textrm{ Otherwise}
\end{cases},
  \\
\nabla_{\theta}\log \rho_{\rm post}(\theta) &= \int_{-\infty}^{\theta}-\Hess \log \rho_{\rm post}(\theta') d\theta' + c,
\end{aligned}
\end{equation}
where $-\Hess \log \rho_{\rm post}(\theta)$ is a smoothed bump function containing four parameters $\gamma, \sigma > 0$ and $0 < \alpha < \beta$. Clearly, we have \[\alpha\I \preceq -\Hess \log \rho_{\rm post}(\theta) \preceq \beta \I.\]
Moreover, we will have another parameter $c$ to determine $\nabla_{\theta}\log \rho_{\rm post}(\theta)$.
It is worth mentioning that for such $\alpha$-strongly logconcave posterior, the Gaussian variational inference \eqref{eq:Gaussian-KL0} has a unique minimizer $(m^\star, C^\star)$, which is determined by the stationary point condition in~\cref{eq:stationary}; see also \cite{lambert2022variational}.

The \textbf{intuition} behind the construction of the bump function is as follows. 
Our objective is to ensure that the dominant eigenvalue, denoted as $\lambda_2$ in equation \eqref{eq:Jacobian}, as large as possible (thus leading to a lower bound).
Since $\lambda_2$ satisfies
\begin{align*}
\lambda_2 = -\frac{1 + A_2 - A_1^2 C_{\star}}{\frac{3}{2}+\frac{1}{2}A_2 + \sqrt{(\frac{1}{2}A_2 - \frac{1}{2})^2 + 2A_1^2 C_{\star}}}  &\geq -\frac{1 + A_2 - A_1^2 C_{\star}}{1 + A_2 } =
-\frac{\frac{1}{A_2} + (1 - \frac{A_1^2 C_{\star}}{A_2})}{1 + \frac{1}{A_2} },
\end{align*}
we require $A_2$ to be as large as possible, while ensuring that the expression $(1 - \frac{A_1^2 C_{\star}}{A_2})$ as small as possible. Recall the definitions of $A_1$ and $A_2$ in \cref{eq:A1A2}; for the latter term, we have 
\begin{align}
A_2 C_{\star}^{-1} &= A_2 \E_{\rho_{\rhoa_\star}}\bigl[-\Hess\log \rho_{\rm post}(\theta) \bigr] \\
& = \E_{\rho_{\rhoa_\star}}[-\Hess\log \rho_{\rm post}(\theta) (\theta - m_{\star})^2 ] \E_{\rho_{\rhoa_\star}} \bigl[-\Hess\log \rho_{\rm post}(\theta) \bigr] \\
&\geq (\E_{\rho_{\rhoa_\star}}[-\Hess\log \rho_{\rm post}(\theta) (\theta - m_{\star})])^2 = A_1^2
\end{align}
due to the Cauchy-Schwarz inequality. Thus we get $(1 - \frac{A_1^2 C_{\star}}{A_2}) \geq 0$. To make this term as close to zero as possible, we consider when the Cauchy-Schwarz inequality can become equality. In fact, we need $-\Hess \log \rho_{\rm post}(\theta)$ to take the form of a delta function. As in the assumption we have $\log \rho_{\rm post}(\theta) \in C^2$ so this is not achievable. To approximate this condition, we can construct $-\Hess \log \rho_{\rm post}(\theta)$ as a bump function $H(x)$ and gradually narrow the width of the bump to approach zero. The Gaussian kernel is employed to smooth the bump function and simplify the subsequent calculations; this is why the form of $H$ in \eqref{eq:logconcave-slow} is constructed.

Now, we will provide a detailed construction. Instead of specifying $\beta$ and $c$, we can specify $m_{\star}$ and $C_{\star}$ since there is a one-to-one correspondence between them. We specify\footnote{These choices of parameters are motivated by the subsequent calculations.} that
\begin{align}
\label{eq:construct-A}
    \sigma = \gamma^{1.5} \quad m_{\star} = 0 \quad C_{\star} = -\frac{\gamma^2}{2\log\gamma}  - \gamma^3 \quad \textrm{and} \quad\alpha = \frac{1}{(-\log \gamma) C_\star}. 
\end{align}
Then, $\beta$ and $c$ are determined 
by the stationary point condition~\eqref{eq:stationary}, namely
\begin{equation}
\label{eq:stationary-cov}
\begin{aligned}
 {C_{\star}^{-1}}
 &= \E_{\rho_{\rhoa_\star}}\bigl[-\Hess\log \rho_{\rm post}(\theta) \bigr] =   \int H(x) \frac{1}{\sqrt{2\pi(\sigma^2 + C_{\star})}}e^{-\frac{(x - m_{\star})^2}{2(\sigma^2 + C_{\star})}} {\rm d}x \\
 &= \alpha + (\beta - \alpha)\int_{\gamma-\sigma}^{\gamma+\sigma}\frac{1}{\sqrt{2\pi(\sigma^2 + C_{\star})}}e^{-\frac{(x - m_{\star})^2}{2(\sigma^2 + C_{\star})}}{\rm d}x \, .
 \\
&  \hspace{-1cm}0 = \E_{\rho_{\rhoa_\star}}\bigl[\nabla_{\theta}\log \rho_{\rm post}(\theta) \bigr]  = \int_{-\infty}^{\infty} \rho_{\rhoa_\star}(\theta) \int_{-\infty}^{\theta}-\Hess \log \rho_{\rm post}(\theta') {\rm d}\theta' {\rm d}\theta + c
 \end{aligned}
\end{equation}
We will let $\gamma \to 0$ later. Note that in the above system, when $\gamma$ is close to zero, for any $C_{\star} = \frac{1}{-\alpha \log \gamma}\leq \frac{1}{\alpha}$, we will have $\beta > \alpha$, so the above lead to valid specification of parameters. We now only have one free parameter $\gamma$. We can now also view the triplet $\rho_{{\rm post}}$,  $\beta$, and $\alpha$ as functions parameterized by $\gamma$.

In the following, we will let $\gamma \rightarrow 0$ and estimate the leading order of $A_1$, $A_2$, $\alpha$ and $\beta$ in terms of $\gamma$. 
Let denote $m_{\star  \sigma} = \frac{xC_{\star} + m_{\star}\sigma^2}{\sigma^2 + C_{\star}}$ and $C_{\star  \sigma} = \frac{\sigma^2C_\star}{\sigma^2 + C_{\star}}$, we have
\begin{align*}
&    -\Hess \log \rho_{\rm post}(\theta) \N(m_{\star}, C_{\star}) =  \int H(x) \frac{e^{-\frac{(\theta - m_{\star  \sigma})^2}{2C_{\star  \sigma}}}}{\sqrt{2\pi C_{\star  \sigma}}}\frac{1}{\sqrt{2\pi(\sigma^2 + C_{\star})}}e^{-\frac{(x - m_{\star})^2}{2(\sigma^2 + C_{\star})}}{\rm d}x.
\end{align*}
Bringing this to \cref{eq:A1A2} leads to
\begin{equation}
\label{eq:A1A2-intg}
    \begin{split}
&A_1 = \E_{\rho_{a_\star}}\bigl[-\Hess\log \rho_{\rm post}(\theta) (\theta - m_{\star})\bigr]
=
\int H(x) \frac{m_{\star  \sigma} - m_{\star}}{\sqrt{2\pi(\sigma^2 + C_{\star})}}e^{-\frac{(x - m_{\star})^2}{2(\sigma^2 + C_{\star})}}{\rm d}x,
\\
&A_2 = \E_{\rho_{a_\star}}[-\Hess\log \rho_{\rm post}(\theta) (\theta - m_{\star})^2 ]
=
\int H(x) \frac{C_{\star\sigma} + (m_{\star\sigma}-m_{\star})^2}{\sqrt{2\pi(\sigma^2 + C_{\star})}}e^{-\frac{(x - m_{\star})^2}{2(\sigma^2 + C_{\star})}}{\rm d}x .
    \end{split}
\end{equation}
With the definitions in~\cref{eq:construct-A}, we have the following estimation about Gaussian integration
\begin{align}
\label{eq:construct-A-G}
\int_{\gamma-\sigma}^{\gamma + \sigma} x^ne^{-\frac{x^2}{2(\sigma^2 + C_{\star})}}{\rm d}x  
= 2\sqrt{\gamma}\gamma^{n+2} (1 +\bigTheta(\log\gamma\sqrt\gamma)).
\end{align}
Here we use $\bigTheta$ to denote the leading order term, as $\gamma \rightarrow 0$.
Bringing the definition of $\sigma$ and $C_{\star}$ and using change-of-variable with $y = \frac{x}{\gamma}$, the left hand side of \cref{eq:construct-A-G} becomes
\begin{align}
\label{eq:x-y}
\int_{\gamma-\sigma}^{\gamma + \sigma} x^ne^{-\frac{x^2}{2(\sigma^2 + C_{\star})}}{\rm d}x  
= \gamma^{n+1}\int_{1 -\sqrt\gamma}^{1+\sqrt\gamma} y^ne^{y^2\log\gamma}{\rm d}y. 
\end{align}
Bringing the following inequalities into \cref{eq:x-y} leads to \cref{eq:construct-A-G}
\begin{align*}
\gamma^{n+1}\int_{1 -\sqrt\gamma}^{1+\sqrt\gamma} y^ne^{y^2\log\gamma}{\rm d}y \geq& 2\sqrt{\gamma}\gamma^{n+1}  (1-\sqrt\gamma)^ne^{(1+\sqrt\gamma)^2\log\gamma}  \\
                                                            =& 2\sqrt{\gamma}\gamma^{n+2} (1 + \bigTheta(2\log\gamma\sqrt\gamma)),
\\
\gamma^{n+1}\int_{1 -\sqrt\gamma}^{1+\sqrt\gamma} y^ne^{y^2\log\gamma}{\rm d}y \leq& 2\sqrt{\gamma}\gamma^{n+1}  (1+\sqrt\gamma)^ne^{(1 -\sqrt\gamma)^2\log\gamma}  \\
                                                           =& 2\sqrt{\gamma}\gamma^{n+2} (1 - \bigTheta(2\log\gamma\sqrt\gamma)).
\end{align*}
Here we used the Taylor expansions of $(1 - \sqrt{\gamma})^ne^{(\gamma + 2\sqrt{\gamma})\log\gamma} = \bigTheta\bigl((1 - n\sqrt{\gamma}) (1 + (\gamma + 2\sqrt{\gamma})\log\gamma)\bigr)$ and $(1 + \sqrt{\gamma})^ne^{(\gamma - 2\sqrt{\gamma})\log\gamma} = \bigTheta\bigl((1 + n\sqrt{\gamma}) (1 + (\gamma - 2\sqrt{\gamma})\log\gamma)\bigr)$.
Then the estimation for $A_1$, $A_2$ from \cref{eq:A1A2-intg} and the covariance condition in ~\cref{eq:stationary} become
\begin{subequations}
\label{eq:Taylor-A}
    \begin{align}
A_1 &=
\frac{(\beta  -\alpha)C_{\star}}{\sigma^2 + C_{\star}}\int_{\gamma-\sigma}^{\gamma + \sigma} x\frac{1}{\sqrt{2\pi(\sigma^2 + C_{\star})}}e^{-\frac{x^2}{2(\sigma^2 + C_{\star})}}{\rm d}x
\label{eq:Taylor-A1}\\
&=
\frac{(\beta  -\alpha)C_{\star}}{\sigma^2 + C_{\star}}
\frac{2\gamma^{2.5}\sqrt{-\log \gamma}}{\sqrt{\pi}}(1 +\bigTheta(\log\gamma\sqrt\gamma))  \quad (\textrm{Using } \eqref{eq:construct-A-G}) 
\nonumber\\
&=
\frac{2(\beta  -\alpha)}{\sqrt{\pi}}\gamma^{2.5}(-\log\gamma)^{0.5  }  + \bigTheta\Bigl(\frac{2(\beta  -\alpha)}{\sqrt{\pi}}\gamma^{3}(-\log\gamma)^{1.5  } \Bigr),
\nonumber
\\
A_2 &=\alpha C_{\star} + 
(\beta - \alpha)\int_{\gamma-\sigma}^{\gamma + \sigma} \frac{C_{\star\sigma} + m_{\star\sigma}^2}{\sqrt{2\pi(\sigma^2 + C_{\star})}}e^{-\frac{x^2}{2(\sigma^2 + C_{\star})}}{\rm d}x
\label{eq:Taylor-A2}\\
&=\alpha C_{\star} + 
(\beta - \alpha)\frac{2\sqrt{-\log\gamma}\gamma^{3.5}(1+2\gamma\log\gamma)(1+\gamma+2\gamma\log\gamma)}{\sqrt{\pi}}(1 +\bigTheta(\log\gamma\sqrt\gamma)) \quad (\textrm{Using }~\eqref{eq:construct-A-G})
\nonumber\\
&=\alpha C_{\star} 
+ 
\frac{2(\beta - \alpha)}{\sqrt{\pi}}\gamma^{3.5}(-\log\gamma)^{0.5}  
+
\bigTheta\Bigl(  
\frac{2(\beta - \alpha)}{\sqrt{\pi}}\gamma^{4}(-\log\gamma)^{1.5}
\Bigr),
\nonumber\\
{C_{\star}}^{-1}  &= \alpha + (\beta - \alpha)\int_{\gamma-\sigma}^{\gamma+\sigma}\frac{1}{\sqrt{2\pi(\sigma^2 + C_{\star})}}e^{-\frac{x^2}{2(\sigma^2 + C_{\star})}}{\rm d}x 
\label{eq:Taylor-Cstar}\\
&= \alpha + (\beta - \alpha)2\gamma^{1.5}\frac{\sqrt{-\log\gamma}}{\sqrt\pi} 
(1 + \bigTheta(\log\gamma\sqrt\gamma))  \quad (\textrm{Using } \eqref{eq:construct-A-G})
\nonumber\\
&= \alpha + \frac{2(\beta - \alpha)}{\sqrt{\pi}}\gamma^{1.5}(-\log\gamma)^{0.5} 
+\bigTheta\Bigl( \frac{2(\beta - \alpha)}{\sqrt{\pi}}\gamma^{2}(-\log\gamma)^{1.5} \Bigr). \nonumber
\end{align}
\end{subequations}
From \cref{eq:construct-A}, we have 
\begin{align}
\label{eq:alpha}
&\alpha =\bigTheta(\frac{1}{\gamma^2}).
\end{align}
Combining \cref{eq:alpha}, the definition of $C_\star$ in \cref{eq:construct-A}, and \cref{eq:Taylor-Cstar} leads to the estimation about $\beta$, as follows:
\begin{align}
\label{eq:beta}
\frac{\beta}{\alpha} = \bigTheta(\frac{(-\log\gamma)^{0.5}}{\gamma^{1.5}}) \quad \textrm{and} \quad \beta - \alpha = \bigTheta(\frac{(-\log\gamma)^{0.5}}{\gamma^{3.5}}).
\end{align}
Combining \cref{eq:Taylor-A,eq:alpha,eq:beta} leads to the estimations about $A_1$ and $A_2$, as follows
\begin{align}
\label{eq:beta_alpha}
    &A_1 = \bigTheta(\frac{-\log \gamma}{\gamma}) \qquad A_2 = \bigTheta(-\log \gamma)
    \\
    &1 - \frac{A_1^2 C_{\star}}{A_2} = \frac{A_2C_{\star}^{-1} - A_1^2 }{A_2C_{\star}^{-1}} = \frac{\bigTheta\Bigl((-\log\gamma)/\gamma^2\Bigr)}{\bigTheta\Bigl((-\log\gamma)^2/\gamma^2\Bigr)}
    = \bigTheta(\frac{1}{-\log\gamma}).
\end{align}

Finally, we can bound the large eigenvalue of \cref{eq:Jacobian} by
\begin{align*}
-\lambda_2 = \frac{1 + A_2 - A_1^2 C_{\star}}{\frac{3}{2}+\frac{1}{2}A_2 + \sqrt{(\frac{1}{2}A_2 - \frac{1}{2})^2 + 2A_1^2 C_{\star}}}  \leq 
\frac{\frac{1}{A_2} + (1 - \frac{A_1^2 C_{\star}}{A_2})}{1 + \frac{1}{A_2} }  
= \bigTheta(\frac{1}{-\log \gamma}). 
\end{align*}
Here we use $A_2 \geq 0$ and $1 - \frac{A_1^2 C_{\star}}{A_2} \geq 0 $.
\Cref{eq:beta} $\log(\frac{\beta}{\alpha}) = \bigTheta(-\log\gamma)$ indicates that for the constructed logconcave density, the local convergence rate is not faster than $-\bigO(1/\log (\frac{\beta}{\alpha})).$

\subsection{Proof of \Cref{proposition:counter-example-slow}}
\label{proof:counter-example-slow}

Consider the following example, where $\theta \in \R$ and $\V(\theta) = \sum_{k=0}^{2K+1} a_{2k} \theta^{2k}$ with $a_{4K+2} > 0$. We will choose $a_{2k}$ later so that the convergence of these dynamics is $\bigTheta(t^{-\frac{1}{2K}})$. Recall the Gaussian approximate Fisher-Rao gradient flow is
\begin{equation}
\begin{aligned}
\frac{\dd  m_t}{\dd  t}  & =  C_t\E_{\rho_{\rhoa_t}}[\nabla_\theta \log \rho_{\rm post} ],\\
\frac{\dd  C_t}{\dd  t}
&= C_t + C_t \E_{\rho_{\rhoa_t}}[\Hess \log \rho_{\rm post}]C_t.
\end{aligned}
\end{equation}


We first calculate the explicit formula of the dynamics. For the mean part, we have
\begin{equation}
\label{eqn: D45}
\begin{aligned}
\E_{\rho_{\rhoa}}\bigl[\nabla_\theta  \log \rho_{\rm post}(\theta)   \bigr] =& -\sum_{k=1}^{2K+1} 2k a_{2k} \E_{\rho_{\rhoa}}[\theta^{2k-1}]\\
=& -\sum_{k=1}^{2K+1} 2k a_{2k} \sum_{i=0}^{k-1} \binom{2k-1}{2i+1}m^{2i+1}C^{k-i-1}\frac{(2k-2i-2)!}{2^{k-i-1}(k-i-1)!}\, .
\end{aligned}
\end{equation}
In the above we have used the explicit formula for the moments of Gaussian distributions. By \eqref{eqn: D45}, we know that when $m = 0$, $\E_{\rho_{\rhoa}}\bigl[\nabla_\theta  \log \rho_{\rm post}(\theta)   \bigr] = 0$. Later, we will initialize the dynamics at $m_0 = 0$; as a consequence, $m_t = 0$ so we only need to consider the convariance dynamics given $m=0$.


For the covariance part (assuming $m=0$), using Stein's identity, we get
\begin{equation}
\begin{aligned}
\E_{\rho_{\rhoa}}\bigl[\Hess \log \rho_{\rm post}(\theta) \bigr]_{m = 0}=
\frac{\partial \E_{\rho_{\rhoa}}\bigl[\nabla_\theta  \log \rho_{\rm post}(\theta)   \bigr]}{\partial m}|_{m = 0}=
 - f(C),
\end{aligned}
\end{equation}
where
\begin{equation}
\begin{aligned}
f(C) = \sum_{k=1}^{2K+1} 2k(2k-1) a_{2k}  C^{k-1}\frac{(2k-2)!}{2^{k-1}(k-1)!}.
\end{aligned}
\end{equation}

%
We choose $\{a_{2k}\}_{k=1}^{2K+1}$ such that 
$$ 2k(2k-1)  \frac{(2k-2)!}{2^{k-1}(k-1)!} a_{2k} = \binom{2K+1}{k} (-1)^{2K+1-k},$$ 
which leads to the identity
$$1 - f(C)C = -(C - 1)^{2K+1}.$$

Now, we calculate the explicit form of the dynamics, with the above choice of $\V$. 
We initialize the dynamics with $m_0 = 0$. 
For the Gaussian approximate Fisher-Rao gradient flow~\cref{eq:Gaussian Fisher-Rao}, we have
\begin{equation}
\small
\begin{aligned}    
&\frac{\dd m_t}{\dd t} =  0, \\
&\frac{\dd C_t}{\dd t}  =  C_t(1 - f(C_t)C_t) = -C_t(C_t-1)^{2K+1}.
\end{aligned}
\end{equation}
It is clear that the convergence rate to $C = 1$ is $\bigTheta(t^{-\frac{1}{2K}})$, if we initialize $C_0$ close to $1$.

In fact, we can also obtain convergence rates for other gradient flows under different metrics. For the vanilla Gaussian approximate gradient flow~\cref{eq:g-gf}, we have
\begin{equation}
\small
\begin{aligned}    
&\frac{\dd m_t}{\dd t} =  0, \\
&\frac{\dd C_t}{\dd t}  =  \frac{1}{2C_t}(1 - f(C_t)C_t) = - \frac{(C_t-1)^{2K+1}}{2C_t}.
\end{aligned}
\end{equation}
For the Gaussian approximate Wasserstein gradient flow~\cref{eq:Gaussian-Wasserstein}, we have
\begin{equation}
\small
\begin{aligned}    
&\frac{\dd m_t}{\dd t} =  0, \\
&\frac{\dd C_t}{\dd t}  =  2(1 - f(C_t)C_t)=-2(C_t-1)^{2K+1}.
\end{aligned}
\end{equation}
In all cases the convergence rate to $C = 1$ is $\bigTheta(t^{-\frac{1}{2K}})$.


\section{Details of Numerical Integration}
\label{sec:integration}
In this section, we discuss how to compute the reference values of $\E[\theta]$, $\Cov[\theta]$, and $\E[\cos(\omega^T \theta + b)]$ for the logconcave posterior and Rosenbrock posterior in~\Cref{sec-affine-invariance-experiments}.
First, for any Gaussian distribution, we have
\begin{align*}
 &\int \cos(\omega^T \theta + b) \N(\theta; m, C) \dd \theta = \exp(-\frac{1}{2}\omega^T C\omega ) \cos(\omega^T m+b).
\end{align*}
We can rewrite the logconcave function as
\begin{align*}
 &\V(\theta) =  \frac{1}{2} \frac{({\theta^{(1)}} - {\theta^{(2)}}/\sqrt{\lambda})^2}{10/\lambda} +\frac{{\theta^{(2)}}^4}{20}.
\end{align*}
The following integration formula holds:
\begin{align*}
 &\int e^{-\V(\theta)} \dd {\theta^{(1)}} = \sqrt{20\pi/\lambda} e^{-{\theta^{(2)}}^4/20},\\
 &\int e^{-\V(\theta)} {\theta^{(1)}}  \dd {\theta^{(1)}} = \sqrt{20\pi/\lambda} \frac{{\theta^{(2)}}}{\sqrt{\lambda}}  e^{-{\theta^{(2)}}^4/20},\\
 &\int e^{-\V(\theta)} {\theta^{(2)}}  \dd {\theta^{(1)}} = \sqrt{20\pi/\lambda}   \theta^{(2)}e^{-{\theta^{(2)}}^4/20},\\
 &\int e^{-\V(\theta)} {\theta^{(1)}}^2 \dd {\theta^{(1)}} = \sqrt{20\pi/\lambda}(\frac{{\theta^{(2)^2}}}{\lambda} + \frac{10}{\lambda}) e^{-{\theta^{(2)}}^4/20},\\
 &\int e^{-\V(\theta)} {\theta^{(1)}} {\theta^{(2)}}  \dd {\theta^{(1)}} = \sqrt{20\pi/\lambda} \frac{{\theta^{(2)^2}}}{\sqrt{\lambda}}  e^{-{\theta^{(2)}}^4/20},\\
 &\int e^{-\V(\theta)} {\theta^{(2)^2}}  \dd {\theta^{(1)}} = \sqrt{20\pi/\lambda}   \theta^{(2)^2}e^{-{\theta^{(2)}}^4/20},\\
 &\int e^{-\V(\theta)} \cos({\omega^{(1)}} {\theta^{(1)}} + {\omega^{(2)}} {\theta^{(2)}} + b) \dd {\theta^{(1)}}\\
&\quad\quad\quad =  \sqrt{20\pi/\lambda}e^{-\frac{5}{\lambda}{\omega^{(1)}}^2}\cos({\omega^{(1)}}{\theta^{(2)}}/\sqrt{\lambda} + {\omega^{(2)}} {\theta^{(2)}} + b)e^{-{\theta^{(2)}}^4/20}.
\end{align*}
Other 2D integrations can be addressed by first performing 1D integration
with respect to $\theta^{(1)}$ analytically, and then the second 1D integration with respect to $\theta^{(2)}$
is computed numerically with $10^7$ uniform points.

We can rewrite the Rosenbrock function as
$$
\V(\theta) = \frac{1}{2}\frac{( {\theta^{(2)}} - {\theta^{(1)}}^2)^2}{10/\lambda} + \frac{(1 - {\theta^{(1)}})^2}{20}. 
$$
We have the following integration formula:
\begin{align*}
 &\int e^{-\V(\theta)} \dd {\theta^{(2)}} = \sqrt{20\pi/\lambda} e^{-(1 - {\theta^{(1)}})^2/20},\\
 &\int e^{-\V(\theta)} {\theta^{(1)}}  \dd {\theta^{(2)}} = \sqrt{20\pi/\lambda} {\theta^{(1)}} e^{-(1 - {\theta^{(1)}})^2/20},\\
 &\int e^{-\V(\theta)} {\theta^{(2)}}  \dd {\theta^{(2)}} = \sqrt{20\pi/\lambda} {\theta^{(1)}}^2 e^{-(1 - {\theta^{(1)}})^2/20},\\
 &\int e^{-\V(\theta)} {\theta^{(1)^2}}  \dd {\theta^{(2)}} = \sqrt{20\pi/\lambda} {\theta^{(1)^2}} e^{-(1 - {\theta^{(1)}})^2/20},\\
 &\int e^{-\V(\theta)} {\theta^{(2)}}^2 \dd {\theta^{(2)}} = \sqrt{20\pi/\lambda}({\theta^{(1)}}^4 + \frac{10}{\lambda}) e^{-(1 - {\theta^{(1)}})^2/20},\\
 &\int e^{-\V(\theta)} {\theta^{(1)}\theta^{(2)}}  \dd {\theta^{(2)}} = \sqrt{20\pi/\lambda} {\theta^{(1)}}^3 e^{-(1 - {\theta^{(1)}})^2/20},\\
 &\int e^{-\V(\theta)} \cos({\omega^{(2)}} {\theta^{(2)}} + {\omega^{(1)}} {\theta^{(1)}} + b) \dd {\theta^{(2)}}\\
&\quad\quad\quad =  \sqrt{20\pi/\lambda}e^{-\frac{5}{\lambda}{\omega^{(2)}}^2}\cos({\omega^{(2)}} {\theta^{(1)}}^2 + {\omega^{(1)}} {\theta^{(1)}} + b)e^{-(1 - {\theta^{(1)}})^2/20}.
\end{align*}
Moreover, we have
$$
\int\int e^{-\V(\theta)} \dd {\theta^{(1)}}\dd {\theta^{(2)}} = \frac{20\pi}{\sqrt{\lambda}}\qquad
\E[\theta] = \begin{bmatrix}
  1\\
  11
\end{bmatrix}
\qquad 
\Cov[\theta] = \begin{bmatrix}
  10& 20\\
  20& \frac{10}{\lambda} + 240
\end{bmatrix}.
$$
Other 2D integrations can be addressed by first performing 1D integration
with respect to $\theta^{(2)}$ analytically, and then the second 1D integration with respect to $\theta^{(1)}$
is computed numerically with $10^7$ uniform points.
\end{document}